
\documentclass{tlp}

\submitted{24 May 2007}
\revised{22 March 2008, 19 September 2008}
\accepted{27 November 2008}

\usepackage{amssymb}
\usepackage{amsfonts}
\usepackage{epsfig}
\usepackage{amsmath}
\usepackage{latexsym}
\usepackage{verbatim}
\usepackage{url}

\def\ni{\noindent}
\def\naf{\; not \; }

\long\def\COMMENT#1\ENDCOMMENT{\message{(Commented text...)}\par}

\newtheorem{observation}{Observation}
\newtheorem{theorem}{Theorem} 
\newtheorem{definition}{Definition} 
\newtheorem{proposition}{Proposition}
\newtheorem{example}{Example}
\newtheorem{lemma}{Lemma}[section]
\newtheorem{corollary}{Corollary} 
 
\newtheorem{remark}{Remark}

\newcommand{\uffa}{\mbox{$\: {\tt : \!\! - }\:$}} 
\newcommand{\qed}{\hfill $\Box$}
\newcommand{\ASP}{$\mathbb{ASP-PROLOG}$}

\title
[Justifications for Logic Programs under Answer Set Semantics]
{\emph{Justifications} for Logic Programs under Answer Set Semantics} 
\author
[Enrico Pontelli, Tran Cao Son, and Omar Elkhatib]
{
Enrico Pontelli, Tran Cao Son, and Omar Elkhatib \\
Department of Computer Science\\
New Mexico State University \\
{\tt epontell|tson|okhatib@cs.nmsu.edu}\\
}
\begin{document}

\maketitle

\begin{abstract}
The paper introduces the notion of \emph{off-line justification}
for Answer Set Programming (ASP). Justifications provide
a graph-based explanation of the truth value of an atom w.r.t.
a given answer set. 
The paper extends also
this notion to provide justification of atoms \emph{during} the
computation of an answer set (\emph{on-line justification}), and presents
an integration of on-line justifications within the computation model
of {\sc Smodels}.
Off-line and on-line justifications provide  useful tools to enhance
understanding of ASP, and they offer a basic data
structure to support  methodologies and tools for
\emph{debugging} answer set programs. A preliminary implementation
has been developed in \ASP.
\end{abstract}

\begin{keywords}
answer set programming, justification, 
  offline justification, online justification 
\end{keywords}

\section{Introduction}

\emph{Answer set programming (ASP)} is a programming paradigm 
\cite{smodels-constraint,mar99,lif02a} 
based on logic programming 
under answer set semantics \cite{gel88}. 
ASP is a  {\em highly declarative} paradigm. In order 
to solve a problem $P$, we specify it as a logic program $\pi(P)$\/,
whose answer sets correspond one-to-one to solutions of $P$, and 
can be computed using an answer set solver.  ASP is also attractive because of
its numerous building block results 
(see, e.g., \cite{Baral03}). This can be seen in the following example. 

\begin{example}
Consider the problem of computing the Hamiltonian cycles of a graph. The graph
can be encoded as a collection of facts, e.g.,
\[{\tt \begin{array}{lclclcl}
\texttt{vertex(a).} & \hspace{.4cm} & \texttt{vertex(b).} & \hspace{.4cm} & 
	\texttt{vertex(c).} & \hspace{.4cm} & \texttt{vertex(d).}\\
\texttt{edge(a,b).} & & \texttt{edge(a,c).} & & \texttt{edge(b,d).} & & \texttt{edge(b,c).}\\
\texttt{edge(c,d).} && \texttt{edge(d,a).} 
  \end{array}}
\]
A program contains rules, in the form of Horn clauses; in our case:
\[{\tt \begin{array}{lcl}
\multicolumn{3}{l}{\%\% \:\:\textit{Select an edge}}\\
\texttt{in(U,V)} & \uffa & \texttt{edge(U,V)}, not\:\texttt{nin(U,V)}.\\
\texttt{nin(U,V)} & \uffa & \texttt{edge(U,V)}, not \: \texttt{in(U,V)}.\\
\multicolumn{3}{l}{\%\% \:\:\textit{Traverse each node only once}}\\
\texttt{false} & \uffa & \texttt{vertex(U), vertex(V), vertex(W)}, \\
               &       & \texttt{V} \neq \texttt{W, in(U,V), in(U,W)}.\\
\texttt{false} & \uffa & \texttt{vertex(U), vertex(V), vertex(W)}, \\
               &       & \texttt{U} \neq \texttt{V, in(U,W), in(V,W)}.\\
\multicolumn{3}{l}{\%\% \:\:\textit{Reachability of nodes}}\\
\texttt{reachable(U)} & \uffa & \texttt{vertex(U), in(a,U).}\\
\texttt{reachable(V)} & \uffa & \texttt{vertex(V), vertex(U), reachable(U), in(U,V).}\\
\multicolumn{3}{l}{\%\%\:\:\textit{Each vertex reachable from a}}\\
\texttt{false} & \uffa & \texttt{vertex(U), U} \neq \texttt{a}, not\:\texttt{reachable(U).}
  \end{array}}
\]
It can be shown that every answer set of the program consisting of the rules representing 
the graph and the above rules corresponds to an Hamiltonian cycle of the graph
and vice versa. Furthermore, the program has no answer set if and only if the graph 
does not have an  Hamiltonian cycle. 
\hfill $\Box$
\end{example}
The popularity of ASP has grown significantly over the years, finding innovative and 
highly declarative applications
in a variety of domains, such as intelligent agents~\cite{Baral03,BalducciniGN06}, planning~\cite{lif99d},
software modeling and verification~\cite{HeljankoN03}, complex systems diagnosis~\cite{BalducciniG03},
and phylogenetic inference~\cite{ErdemLR06}. 

The growing popularity of ASP, especially in domains like non-monotonic
and commonsense reasoning, has been supported by the development of excellent 
inference engines~\cite{AngerGLNS05,eiter98a,GebserKNS07,GiunchigliaLM04,lin02a,sim02}. 
On the other hand,
a source of difficulties in learning to use ASP
lies in the lack of \emph{methodologies} and \emph{tools}
which can assist users in understanding
a program's 
behavior and debugging it. The highly  declarative nature of the ASP framework and
the ``hand-off'' execution style of ASP  leave a programmer 
with little information that helps in explaining the behavior of the programs, except
for the program itself. For example, the additional information that can 
be gained by exploring the intermediate state of an execution (e.g., value
of variables) of an imperative program using a debugger does not have any
equivalent in the context of ASP. 
This situation is especially difficult when
the program execution produces unexpected outcomes, 
e.g., incorrect  or missing answer sets. 
In this sense, this paper shares the spirit of 
other attempts in developing tools and 
methodologies for understanding and debugging of ASP programs,\footnote{
  Abusing the notation, we often refer to a logic program under 
  the answer set semantics as an ``ASP program'' whenever it is 
  clear from the context what it refers to.
}
as in  
\cite{BrainGP+a07,BrainGP+b07,ElKhatibPS05,dlvdeb}.

Although the traditional language of logic programming under answer set semantics,
e.g., referred to as AnsProlog in \cite{Baral03} or A-Prolog \cite{GelfondL02},
is syntactically close to Prolog, the execution model
and the semantics are sufficiently different to make 
debugging techniques developed for Prolog impractical.
For example, the traditional \emph{trace-based}
debuggers~\cite{RoychoudhuryRR00} 
(e.g., Prolog four-port debuggers), used to
trace the entire proof search tree (paired with execution control mechanisms, like
spy points and step execution), are cumbersome
in  ASP, since: 
\begin{itemize}
\item Trace-based debuggers provide the entire search sequence, including 
	the failed paths, which might be irrelevant in understanding 
	how specific elements are introduced in an answer set. 
\item The process of computing answer sets is bottom-up, and 
	the determination of the truth value of one atom is intermixed with
	the computation of other atoms; a direct tracing makes it  hard
	to focus on what is relevant to one particular atom. 
	This is illustrated in the following example. 
\begin{example}
\label{exbelow}
Consider the following simple program.
\[{\tt \begin{array}{lclclcl}
\texttt{s} &\uffa & \texttt{r}. & \hspace{1cm} & \texttt{s} & \uffa & \texttt{t}.\\
\texttt{r} &\uffa & \texttt{a}. & & \texttt{t.} 
       \end{array}}
\]
The program $P$ has a unique answer set, 
$M=\{\texttt{s}, \texttt{t}\}$. In particular, $\texttt{t} \in M$, since $\texttt{t}$ 
appears as a fact in the program, and $\texttt{s} \in M$ because of the 
rule \mbox{\texttt{s \uffa t}} and $\texttt{t} \in M$. 
In this process, there is no need to expose the processing of
the rule 
\mbox{\texttt{s \uffa r}} to the user, since \texttt{r $\not \in M$}.\hfill$\Box$
\end{example}

\item Tracing repeats previously performed executions, 
	degrading debugging  performance and confusing the programmer.
\end{itemize}
We address these issues by elaborating the concept of
{\em off-line justification} for ASP. This notion
is an evolution of the concept of \emph{justification}, 
proposed to justify truth values in tabled Prolog~\cite{RoychoudhuryRR00,PemmasaniGDRR04}.
Intuitively, an off-line justification of an atom 
w.r.t. an answer set is a graph encoding the reasons for the atom's
truth value. This notion can be used to explain the presence 
or absence of an atom in an answer set, and provides the basis
for building  a {\em justifier} for answer set solvers.
In this paper, we develop this concept and investigate
its properties.

The notion of off-line justification is helpful when investigating the
content of  one (or more) answer sets.
When the program  does not have answer sets, a different 
type of justification is needed. This leads us to the notion 
of {\em on-line justification}, which provides justifications with 
respect to a \emph{partial} and/or (sometimes) \emph{inconsistent} 
interpretation. An on-line justification is \emph{dynamic}, 
in that it can be obtained at any step of the answer set 
computation, provided that the computation process follows
certain strategies. The intuition is to allow the programmer to interrupt
the computation (e.g., at the occurrence of certain events, such as
assignment of a truth value to a given atom)
and to use the on-line justification to explore the motivations behind 
the content of the
partial interpretation (e.g., why a given atom is receiving
conflicting truth values).
We describe a \emph{generic} model of on-line justification and
a version specialized to the execution model of 
 {\sc Smodels}~\cite{sim02}. The latter has been implemented
in \ASP\ \cite{asp-prolog}.

Justifications are offered as first-class citizens of a Prolog
system, enabling the programmer to use Prolog programs to reason
about ASP computations. Debugging is one of the possible uses of 
the notion of justification  developed in this paper.

\section{Background}

In this paper, we  focus 
on a logic programming
language with negation as failure---e.g., the language
of {\sc Smodels} without weight constraints and choice rules
\cite{sim02}. 

\subsection{Logic Programming Language}

Each program $P$ is associated with a signature 
$\Sigma_P=\langle {\cal F}, \Pi, {\cal V} \rangle$, where
\begin{itemize}
\item $\cal F$ is a finite set of \emph{constants},
\item $\cal V$ is a set of \emph{variables}, and
\item  $\Pi$ is a finite set of \emph{predicate} symbols.
\end{itemize}
In particular, we assume that
$\top$ (stands for $true$) and 
$\bot$ (stands for $false$) are zero-ary predicates in  $\Pi$. 
A \emph{term} is 
a constant of $\cal F$ or a variable of $\cal V$. 
An \emph{atom}  is of the 
form $p(t_1,\dots,t_n)$, where $p\in \Pi$, and $t_1,\dots,t_n$ are terms. In 
particular, a term (atom) is said to be \emph{ground} if there are no
occurrences of elements of $\cal V$ in it. 

A \emph{literal} is either  an atom (\emph{Positive Literal}) or 
a formula of the form $not\:a$, where $a$ is an atom (\emph{NAF Literal}). In what follows, 
we will identify with $\cal L$ the set of all ground literals. 

A \emph{rule}, $r$, is of the form 
\begin{equation} \label{rule} 
  h \:\uffa\: b_1, \dots, b_n.
\end{equation}
($n\geq 0$) where $h$ is an atom and $\{b_1, \dots, b_n\}\subseteq {\cal L}$. The atom
$h$ is referred to as the \emph{head} of the rule, while the set
of literals $\{b_1,\dots, b_n\}$ represents the \emph{body} of the rule. 
Given a rule $r$, we denote
$h$ with $head(r)$ and we use $body(r)$ to denote the set
 $\{b_1,\dots,b_n\}$. We also denote with
$pos(r)  = body(r) \cap {\cal A}$---i.e., all the elements of the body
that are not negated---and with 
$neg(r)  =  \{ a \:|\: (not\: a) \in body(r)\}$---i.e., the atoms that appear
negated in the body of the rule. 

Given a rule $r$, we denote with
$ground(r)$ the set of all rules obtained by consistently replacing the
variables in $r$ with constants from $\cal F$ (i.e., the \emph{ground
instances} of $r$).

We identify special types of rules:
\begin{itemize}
\item A rule $r$ is \emph{definite} if $neg(r) = \emptyset$\/;
\item A rule $r$ is a \emph{fact} if $neg(r) \cup pos(r) = \emptyset$\/; for the 
	sake of readability, the fact
\[ h \uffa . \]
	will be simply written as
\[ h .\]
\end{itemize}

A program $P$ is a set of rules. A program with variables is understood
as a shorthand for the set of all ground instances of the rules in $P$; we will use the 
notation:
\[ ground(P) = \bigcup_{r\in P} ground(r) \]
A program is \emph{definite} if it contains only definite rules.

The answer set semantics of a program (Subsection \ref{semantics}) 
is highly dependent on the truth value of atoms occurring in the 
negative literals of 
the program. For later use, we denote with 
$NANT(P)$ the atoms which appear in NAF literals in $P$---i.e.,
\[
NANT(P) = \{a \mid a \:\textnormal{is a ground atom },\:  \exists r \in ground(P):\: a\in neg(r)\}.
\]
We will also use ${\cal A}_P$ to denote the Herbrand base of 
a program $P$. For brevity, we will often write $\cal A$ instead 
of ${\cal A}_P$. 

\begin{example}\label{exa}
Let us consider the program $P_1$ containing the rules:
\[{\tt
\begin{array}{clclcclcl}
(r_1) & \texttt{q} & \uffa & \texttt{a}, not\:\texttt{p}. & \hspace{1cm} & 
(r_2) & \texttt{p} & \uffa & \texttt{a}, not\:\texttt{q}.\\
(r_3) & \texttt{a} & \uffa & \texttt{b}. & & (r_4) & \texttt{b}.
\end{array}}
\]
The rule $r_3$ is definite, while the rule $r_4$ is a fact. For the rule
$r_1$ we have:
\begin{itemize}
\item $head(r_1)=\texttt{q}$ 
\item $body(r_1)=\{\texttt{a}, not\:\texttt{p}\}$
\item $pos(r_1) = \{\texttt{a}\}$
\item $neg(r_1) = \{\texttt{p}\}$
\end{itemize}
For $P_1$, we have  $NANT(P_1) = \{\texttt{p}, \texttt{q}\}$.
\hfill $\Box$
\end{example}

\subsection{Answer Set Semantics and Well-Founded Semantics}
\label{semantics}
 
We will now review two important semantics of logic programs,
the answer set semantics and the well-founded semantics. The former 
is foundational to ASP and the latter is important for the development of 
our notion of a justification. We will also briefly discuss the basic 
components of ASP systems.  

\subsubsection{Interpretations and Models}
A \emph{(three-valued) interpretation} $I$ is a pair $\langle I^+,I^- \rangle$,
where $I^+ \cup I^- \subseteq {\cal A}$ and $I^+ \cap I^- = \emptyset$. Intuitively,
$I^+$ collects the knowledge of the atoms that are known to be true, while
$I^-$ collects the knowledge of the atoms that are known to be false. 
$I$ is a \emph{complete interpretation} if $I^+ \cup I^- = {\cal A}$. If 
$I$ is not complete, then it means that there are atoms whose truth value
is \emph{undefined} with respect to $I$. For convenience, 
we will often say that an atom $a$ is undefined in $I$ 
and mean that the truth value of $a$ is undefined in $I$. 

Let $P$ be a program and $I$ be an interpretation. 
A positive literal $a$ is satisfied by $I$, denoted by $I \models a$, if 
$a \in I^+$. 
A NAF literal $not \; a$ is satisfied by $I$---denoted by
$I \models not\:a$---if $a \in I^-$. A set of literals $S$ 
is satisfied by $I$ ($I\models S$) if $I$ satisfies each literal in $S$. The notion of
satisfaction is easily extended to rules and programs as follows.
A rule $r$ is satisfied by $I$ if $I\not\models body(r)$ or
 $I\models head(r)$. $I$ is a \emph{model} of a program if it satisfies all its rules.
An atom $a$ is \emph{supported} by $I$ in $P$ if there exists 
$r \in P$ such that $head(r) = a$ and $I \models body(r)$.

We introduce two partial orders on the set of interpretations:
\begin{itemize}
\item For two interpretations $I$ and $J$, we say that
 $I \sqsubseteq J$ iff $I^+ \subseteq J^+$ and $I^- \subseteq J^-$
\item For two interpretations $I$ and $J$, we say that
 $I \preceq J$ iff $I^+ \subseteq J^+$
\end{itemize}
We will denote with $\cal I$ the set of all possible interpretations and
with $\cal C$ the set of complete interpretations.
An important property~\cite{llo87} of definite programs is that for each program $P$
there exists a unique model $M_P$ which is $\preceq$-minimal
over $\cal C$. $M_P$ is called the \emph{least Herbrand model} of $P$.

\subsubsection{Answer Set Semantics}

For an interpretation 
$I$ and a program $P$, the \emph{reduct} 
of $P$ w.r.t. $I$ (denoted by $P^I$) is the program
obtained from $P$ by deleting
{\em (i)} each rule $r$ such that $neg(r)\cap I^+ \neq \emptyset$, and
{\em (ii)} all NAF literals in the bodies of the remaining rules. Formally,
\[P^I = \left\{
	head(r) \uffa pos(r) \:\:|\:\:
		 r \in P, \:\: neg(r) \cap I^+ = \emptyset
	\right\}
\]
Given a complete interpretation $I$, 
observe that the program $P^I$ is a definite program. 
A complete interpretation $I$ is an \emph{answer set} \cite{gel88}
of $P$ if $I^+$ is the least Herbrand model of $P^I$~\cite{apt94a}.

\begin{example}briefly
Let us reconsider the program $P_1$ in Example~\ref{exa}. If we consider
the interpretation $I = \langle \{\texttt{b,a,q}\},\{\texttt{p}\}\rangle$, then the 
reduct $P_1^I$ will contain the rules:
\[{\tt\begin{array}{lclclcl}
\texttt{q} & \uffa & \texttt{a}. & \hspace{1cm} & \texttt{a} & \uffa & \texttt{b}.\\
\texttt{b}.
      \end{array}
}
\]
It is easy to see that $\{\texttt{a},\texttt{b},\texttt{q}\}$ is the least Herbrand model of this program; thus,
$I$ is an answer set of $P_1$. 
\qed
\end{example}

For a definite program $P$ and an interpretation $I$, 
the immediate consequence operator $T_P$ is defined by 
\[
T_P(I) = \{a \mid \exists r \in P, head(r) = a, I \models body(r)\}. 
\]
$T_P$ is monotone and has a least fixpoint \cite{VanEmdenK76}. 
The fixpoint of $T_P$ will be denoted by $lfp(T_P)$. 

\subsubsection{Well-Founded Semantics}
Let us describe the \emph{well-founded semantics}, following  the 
 definition proposed in \cite{apt94a}. We note that this definition 
is slightly different from the original definition of the well-founded 
semantics in \cite{VanGelderRS91}. 
Let us start by recalling some auxiliary definitions.

\begin{definition} \label{tpj}
Let $P$ be a program, $S$ and $V$ be sets of atoms from $\cal A$.
The set $T_{P,V}(S)$ ({\em immediate consequence of S w.r.t P
and V}) is defined as follows:
\[
T_{P,V}(S) =  \{ a \mid
\exists r \in P, head(r) = a, pos(r) \subseteq S, neg(r) \cap V = \emptyset
\}
\]
\end{definition}

It is easy to see that, if $V$ is fixed,  the operator 
is monotone with respect to $S$. Against, we use $lfp(.)$ to 
denote the least fixpoint of this operator when $V$ is fixed. 

\begin{definition} \label{kui}
Let $P$ be a program and $P^+$ be the set of definite rules in $P$.
The sequence $(K_i,U_i)_{i\ge 0}$ is defined as follows:
\[
\begin{array}{lclclcl}
K_0 & = & lfp(T_{P^+}) & \hspace{1cm} & 
U_0 & = & lfp(T_{P,K_0}) \\
K_i & = & lfp(T_{P,U_{i-1}}) &&
U_i & = & lfp(T_{P,K_{i}}) \\
\end{array}
\]
\end{definition}
Let $j$ be the first index of the 
computation such that $\langle K_j,U_j \rangle = \langle K_{j+1}, U_{j+1} \rangle$.
We will denote with $WF_P = \langle W^+,W^- \rangle$ the 
(unique) \emph{well-founded} model of $P$,
where $W^+ = K_j$ and $W^- = {\cal A} \setminus U_j$. 

briefly
\begin{example}
Let us reconsider the program $P_1$ of Example~\ref{exa}. The computation of the well-founded
model proceeds as follows:
\[\begin{array}{lclcl}
K_0 & = & \{\texttt{b},\texttt{a}\}\\
U_0 & = & \{\texttt{a},\texttt{b},\texttt{p},\texttt{q}\}\\
K_1 & = & \{\texttt{a},\texttt{b}\}&  =& K_0\\
U_1 & = & \{\texttt{a},\texttt{b},\texttt{p},\texttt{q}\}&  =& U_0
  \end{array}
\]
Thus, the well-founded model will be $\langle \{\texttt{a},\texttt{b}\}, \emptyset\rangle$. Observe that
both $\texttt{p}$ and $\texttt{q}$ are undefined in the well-founded model.\qed
\end{example}

\subsection{Answer Set Programming Systems}

Several efficient ASP solvers have been developed, such as 
{\sc Smodels}~\cite{NiemelaS97}, {\sc DLV}~\cite{eiter98a},
{\sc Cmodels}~\cite{GiunchigliaLM04}, {\sc ASSAT}~\cite{lin02a},
and {\sc CLASP}~\cite{GebserKNS07}.
One of the most popular ASP solvers is {\sc Smodels}
\cite{NiemelaS97,sim02} which comes with {\sc Lparse},
a grounder. {\sc Lparse} takes as input a logic
program $P$ and produces as output a simplified version of 
$ground(P)$. The output of {\sc Lparse} is in turn accepted by 
{\sc Smodels}, and used to produce the answer sets of 
$P$ (see  Figure~\ref{lparse_smodel}).

\begin{figure}[htb]
\centerline{\psfig{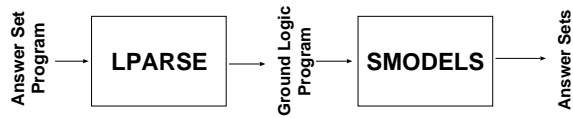}}
\caption{The {\sc Lparse/{\sc Smodels}} System}
\label{lparse_smodel}
\end{figure}
The {\sc Lparse}/{\sc Smodels} system supports
several extended  types 
of literals, such as the \emph{cardinality literals}, which 
are of the form: $L\ \{l_1, \ldots, l_n\}\ U$,
where $L$ and $U$ are integers, $L \le U$, and 
$l_1, \ldots, l_n$ are literals.
The cardinality literal is satisfied by an answer set $M$ if the number 
$x$ of literals in $\{l_1,\dots,l_n\}$ that are true in $M$ is such
that $L \leq x \leq U$.

The back-end engine, {\sc Smodels} in Figure~\ref{lparse_smodel}, produces the collection
of answer sets of the input program. Various control options can be provided
to guide the computation---e.g., establish a limit on the number of
answer sets provided or request the answer set to contains specific atoms.

We note that all of the available ASP solvers~\cite{AngerGLNS05,eiter98a,GebserKNS07,GiunchigliaLM04,lin02a}
operate in a similar fashion as {\sc Smodels}. {\sc DLV} uses its own grounder while 
others use {\sc Lparse}. New grounder programs have also been
recently proposed, e.g., Gringo in \cite{GebserTT07}. 
SAT-based answer set solvers rely on SAT-solver in computing answer sets
\cite{GiunchigliaLM04,lin02a}. 

\section{Explanations}

The traditional methodology employed in ASP relies on
encoding each problem $Q$ 
as a logic program $\pi(Q)$, whose answer sets are in one-to-one
correspondence with the  solutions 
of $Q$. From the software development perspective, it would 
be important to address the question 
\emph{``why is $M$ an answer set of the program 
$P$?''} This question gives 
rise to the question  ``why does an atom $a$ belong to $M^+$ 
(or $M^-$)?'' Answering this question 
can be very important, in that it provides us with explanations 
regarding the presence (or absence) of different  atoms in $M$. 
Intuitively, we view answering these questions as the 
``declarative'' parallel of answering questions of the type
``why is $3.1415$ the value of the  variable $x$?'' in the context
of imperative languages---a question that can be
typically answered by
producing and analyzing an \emph{execution trace} 
(or \emph{event trace}~\cite{Auguston00}).

The objective of this section is to develop the notion of
\emph{explanation}, as  a graph structure used to describe the
``reason'' for the truth value of an atom w.r.t. a given answer
set. In particular, each explanation graph will describe the derivation
of the truth value (i.e., true or false) of an atom using the
rules in the program. The explanation will also need to be flexible
enough to explain those contradictory situations, arising during the
construction of answer sets, where an atom is made true \emph{and}
false at the same time---for reference, these are the situations that
trigger a backtracking in systems like {\sc Smodels}~\cite{sim02}.

In the rest of this section, we will introduce this graph-based representation
of the support for the truth values of atoms in an interpretation.
In particular, we will incrementally develop this representation. We will start
with a generic graph structure (\emph{Explanation Graph}), which describes 
truth values without accounting for program rules. We will then identify specific
graph patterns that can be derived from program rules (\emph{Local Consistent
Explanations}), and impose them on the explanation graph, to obtain
the \emph{$(J,A)$-based Explanation Graphs}. These graphs are used to 
explain the truth values of an atom w.r.t. an interpretation $J$ 
and a set of assumptions $A$---where
an assumption is an atom for which we will not seek any explanations. The assumptions
derive from the inherent ``guessing'' process involved in the definition
of answer sets (and in their algorithmic construction), and they will be used
to justify atoms that have been ``guessed'' in the construction of the 
answer set and for which a meaningful explanation cannot be constructed. 

\smallskip

Before we proceed, let us introduce notation that will be used in 
the following discussion. 


For an atom $a$, we write $a^+$ to denote the fact that the atom $a$ is true, and 
$a^-$ to denote the fact that $a$ is false. We will call $a^+$ and $a^-$ the
\emph{annotated} versions of $a$. Furthermore, we will define
$atom(a^+) = a$ and $atom(a^-)=a$.
For a set of atoms $S$, we define the following sets of annotated atoms:
\begin{itemize}
\item $S^p = \{a^+ \mid a \in S\}$, 
\item $S^n = \{a^- \mid a \in S\}$.
\end{itemize}
Furthermore, we denote with $\naf S$ the set $\naf S  = \{\naf a \mid a \in S\}$. 

\subsection{Explanation Graphs}

In building the notion of justification, we 
will start from a very
general (labeled, directed) graph structure, called \emph{explanation graph}.
We will
incrementally construct the notion of justification, by progressively
adding the necessary restrictions to it. 

\begin{definition}
[Explanation Graph]
\label{egraph}
For a program $P$, an \emph{explanation
graph} (or \emph{e-graph}) is a
 labeled, directed graph $(N,E)$, where
$N \subseteq {\cal A}^p \cup {\cal A}^n \cup \{\textit{assume},\top,\bot\}$ and
$E \subseteq N \times N \times \{+,-\}$, which satisfies the following
properties:
\begin{enumerate}
\item\label{one1} the only sinks in the graph are: $assume$, $\top$, and $\bot$; 

\item\label{two1} for every $b \in N \cap {\cal A}^p$, we have that 
$(b,assume,-) \not\in E$ and $(b,\bot,-) \not\in E$;

\item\label{three1} for every $b \in N \cap {\cal A}^n$, we have that
$(b,assume,+) \not\in E$ and $(b,\top,+) \not\in E$;

\item\label{four1} for every $b \in N$, if $(b,l,s) \in E$ for some 
$l \in \{assume, \top, \bot\}$ and $s \in \{+,-\}$
then $(b,l,s)$ is the only outgoing  edge originating from $b$.
\end{enumerate}
\end{definition}
Property (\ref{one1}) indicates that each atom appearing in an e-graph
should have outgoing edges (which will explain the truth value of the atom). 
Properties (\ref{two1}) and (\ref{three1}) ensure that true (false) atoms are not
explained using explanations that are proper for false (true) atoms.
Finally, property (\ref{four1}) ensures that atoms explained using the special
explanations $assume$, $\top$, $\bot$ have only one explanation in the graph. Intuitively,
\begin{itemize}
\item $\top$ will be employed to explain program facts---i.e., their truth does not depend
	on other atoms;
\item $\bot$ will be used to explain atoms that do not have defining rules---i.e., the falsity
	is not dependent on other atoms; and
\item $assume$ is used for atoms we are not seeking any explanations for.
\end{itemize}
Each edge of the graph connects two annotated atoms 
or an annotated atom with one of the nodes 
in $\{\top, \:\bot, \: assume\}$, and it is marked by
a label from $\{+,-\}$.
Edges labeled $'+'$ are called 
\emph{positive} edges, while those labeled $'-'$ are called 
\emph{negative} edges. 
A path in an e-graph is \emph{positive} if it contains only 
positive edges, while a path is negative if it contains at least
 one negative edge. We will denote with $(n_1,n_2) \in E^{*,+}$ 
the fact that there is a positive path in the e-graph from $n_1$ to $n_2$.

\begin{example}\label{ex1}
Figure~\ref{img1} illustrates several simple e-graphs. 
\begin{figure}[htbp]
\centerline{\psfig{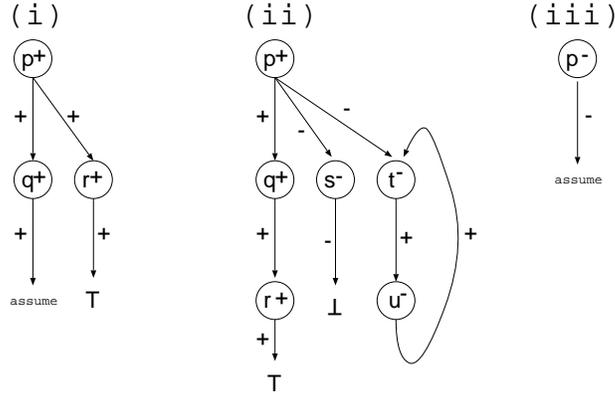}}
\caption{Simple e-graphs}\label{img1}
\end{figure}
Intuitively,
\begin{itemize}
\item The graph {\tt (i)} describes the true state of $\texttt{p}$
	by making it positively dependent on the true state of $\texttt{q}$ and
	$\texttt{r}$; in turn, $\texttt{q}$ is simply assumed to be true while
	$\texttt{r}$ is a fact in the program.
\item The graph {\tt (ii)} describes more complex dependencies; in particular,
	observe that $\texttt{t}$ and $\texttt{u}$ are both false and they are
	mutually dependent---as in the case of a program containing the rules
	\[\begin{array}{lclclcl}
	 \texttt{t} & \uffa & \texttt{u}. & \hspace{1cm} & 
         \texttt{u} &\uffa & \texttt{t}.
	  \end{array}
	\]
	Observe also that $\texttt{s}$ is explained being false because there
	are no rules defining it.
\item The graph {\tt (iii)} states that $\texttt{p}$ has been simply 
	assumed to be false.
\end{itemize}
\hfill $\Box$
\end{example}

Given an explanation graph and an atom, we can extract from the graph the 
elements that directly contribute to the truth value of the atom. We will 
call this set of elements the support of the atom. This is formally defined
as follows.

\begin{definition}
Let $G = (N,E)$ be  an e-graph and 
$b \in N \cap ({\cal A}^p\cup{\cal A}^n)$ a node in $G$. The direct support 
of $b$ in $G$, denoted by $support(b,G)$, is defined as follows. 
\begin{itemize}
\item $support(b,G) = \{atom(c) \mid (b,c,+) \in E\} \cup  
\{\naf atom(c) \mid (b,c,-) \in E\}$, if for every $\ell \in \{assume, \top, \bot\}$
and $s \in \{+,-\}$, $(b,\ell,s) \not\in E$;

\item $support(b,G) = \{\ell\}$ if $(b,\ell,s) \in E$,
$\ell \in \{assume, \top, \bot\}$ and $s \in \{+,-\}$.
\end{itemize}
\end{definition}

\begin{example}
If we consider the e-graph {\tt (ii)} in Figure~\ref{img1}, then we have
that $support(\texttt{p}^+, G_2) = \{\texttt{q}, \naf\ \texttt{s}, \naf\ \texttt{t}\}$ while
$support(\texttt{t}^-,G_2) = \{\texttt{u}\}$.

We also have $support(\texttt{p}^+, G_1) = \{\texttt{q, r}\}$.
\hfill$\Box$
\end{example}
It is worth mentioning that an explanation graph is a general concept 
aimed at providing arguments for answering the question `{\em why 
is an atom true or false?}' In this sense, it is similar 
to the concept of a support graph used in program analysis 
\cite{SahaR05}. The main difference between these two concepts lies in 
that support graphs are defined only for definite programs while 
explanation graphs are defined for general logic programs. 
Furthermore, a support graph contains information about the support 
for {\em all} answer while an explanation graph stores only 
the support for {\em one} atom. An explanation graph 
can be used to answer the question of why 
an atom is false which is not the case for support graphs. 

\subsection{Local Explanations and $(J,A)$-based Explanations}
The next step towards the definition of the concept of justification
requires enriching the general concept of e-graph with explanations
of truth values of atoms that are derived from the rules of the program.

A \emph{Local Consistent Explanation (LCE)}
 describes one step of justification for a literal.
Note that our notion of
local consistent explanation is similar in spirit, but different
in practice, from the analogous definition used in~\cite{PemmasaniGDRR04,RoychoudhuryRR00}. 
It describes the possible local reasons for the truth/falsity of a literal. 
If $a$ is true, the explanation contains those bodies of the rules for $a$ that
are satisfied by $I$. If $a$ is false, 
the explanation contains  sets of literals that are false in $I$ and they
 falsify all rules for $a$.

The construction of a LCE is performed w.r.t. a possible interpretation and
a set of atoms $U$---the latter contains atoms
that are automatically assumed to be false,
without the need of justifying them. 
The need for this last component (to be further
elaborated later in the paper) derives from the practice of 
computing answer sets, where the truth value of certain atoms 
is first guessed and then later verified. 

\begin{definition}[Local Consistent Explanation]\label{lcedef}
Let $P$ be a program, $b$ be an atom, $J$ a possible 
interpretation, $U$ a set of atoms (\emph{assumptions}),
and 
$S \subseteq {\cal A} \cup \naf{\cal A} \cup \{assume,\top,\bot\}$
a set of literals. 
We say that 
\begin{enumerate}
\item  $S$ is \emph{a} local consistent explanation of $b^+$ w.r.t. $(J,U)$, 
	if $b \in J^+$ and 
\begin{itemize}
\item[{$\circ$}] $S = \{assume\}$,  or
\item[{$\circ$}] 
	 $S \cap {\cal A} \subseteq J^+$,
	$\{c \mid \naf c \in S\} \subseteq J^- \cup U$, and  
	there is a rule $r$ in $P$ such that $head(r) = b$ and
	$S = body(r)$; for convenience, we write $S = \{\top\}$ to denote 
	the case where $body(r) = \emptyset$. 
\end{itemize}

\item $S$ is a local consistent explanation of $b^-$ w.r.t. $(J,U)$ 
	if  $b \in J^- \cup U$ and
\begin{itemize}
\item[{$\circ$}] $S = \{assume\}$; or 
\item[{$\circ$}] 
	 $S \cap {\cal A} \subseteq J^- \cup U$,
	$\{c \mid \naf c \in S\} \subseteq J^+ $, and 
	$S$ is a minimal set of literals such that 
	for every rule $r \in P$, if $head(r) = b$, then
	$pos(r) \cap S  \ne \emptyset$ or 
	$neg(r) \cap \{c \mid \naf c \in S\} \ne \emptyset$;
	for convenience, we write $S = \{\bot\}$ to denote 
the case $S = \emptyset$. 
\end{itemize}
\end{enumerate}
We will denote with $LCE^p_P(b,J,U)$ the set of all the LCEs of $b^+$ w.r.t.
$(J,U)$, and with $LCE^n_P(b,J,U)$ the set of all the LCEs of $b^-$ w.r.t.
$(J,U)$. 
\end{definition}
Observe that $U$ is the set of atoms that are assumed to be false. 
For this reason, negative LCEs are defined for 
elements $J^- \cup U$ but positive LCEs are defined only 
for elements in $J^+$. 
We illustrate this definition in a series of examples. 
\begin{example}\label{ex3}
Let $P$ be the program:
\[\begin{array}{lclclcl}
\texttt{p} & \uffa & \texttt{q},\:\: \texttt{r}. & \hspace{2cm} & \texttt{q}.  \\
\texttt{q} & \uffa & \texttt{r}. & & \texttt{r}.
  \end{array}
\]
The program admits only one answer set $M=\langle \{\texttt{p},\texttt{q},\texttt{r}\},\emptyset\rangle$. 
The LCEs for the atoms of this program w.r.t. $(M,\emptyset)$ are:
\[
\begin{array}{lcl}
LCE_P^p(\texttt{p},M,\emptyset) = \{ \{\texttt{q},\texttt{r}\},\{assume\} \} \\ 
	LCE_P^p(\texttt{q},M,\emptyset)= \{\{\top\}, \{\texttt{r}\},\{assume\}\}\\
LCE_P^p(\texttt{r},M,\emptyset) = \{\{\top\},\{assume\}\}
\end{array}
\]\hfill$\Box$
\end{example}
The above example shows a program with a unique answer set. 
The next example discusses the definition in a program with more than 
one answer set and an empty well-founded model. It also highlights the 
difference between the positive and negative LCEs for atoms 
given a partial interpretation and a set of assumptions. 
\begin{example}\label{ex6}
Let $P$ be the program:
\[\begin{array}{lclclcl}
\texttt{p} & \uffa & \naf \texttt{q}. & \hspace{2cm} & \texttt{q} & \uffa & \naf \texttt{p}.
  \end{array}
\]
Let us consider the partial interpretation $M = \langle\{\texttt{p}\},\emptyset\rangle$.
The following are LCEs w.r.t. $(M,\emptyset)$:
\[\begin{array}{l}
LCE_P^p(\texttt{p},M,\emptyset) = \{\{assume\}\} \\
	LCE_P^n(\texttt{q},M,\emptyset)= LCE_P^p(\texttt{q},M,\emptyset) =  \{\{\bot\}\}
  \end{array}
\]
The above LCEs are explanations for the truth value of $\texttt{p}$ 
and $\texttt{q}$ being true and false with respect to $M$ and the empty set of assumptions. 
Thus, the only explanation for $\texttt{p}$ being true is that it is assumed to 
be true, since the only way to derive $\texttt{p}$ to be true is to use the first rule
and nothing is assumed to be false, i.e., $not \:\texttt{q}$ is not true. 
On the other hand, $\texttt{q} \not\in M^- \cup \emptyset$ leads to the fact
that there is no explanation for $q$ being false. Likewise, 
because $\texttt{q} \not\in M^+$, there is no positive LCE for $\texttt{q}$ 
w.r.t. $(M,\emptyset)$.

The LCEs w.r.t. $(M,\{\texttt{q}\})$ are:
\[\begin{array}{l}
LCE_P^p(\texttt{p},M,\{\texttt{q}\}) = \{ \{assume\}, \{\naf \texttt{q}\}\}\\
	LCE_P^n(\texttt{q},M,\{\texttt{q}\}) = \{\{assume\}, \{\naf \texttt{p}\}\}
  \end{array}
\]
Assuming that $\texttt{q}$ is false leads to one additional explanation for 
$\texttt{p}$ being true.
Furthermore, there are now two explanations for $\texttt{q}$ being false. 
The first one is that it is assumed to be false 
and the second one satisfies the second condition in Definition~\ref{lcedef}. 

Consider the complete interpretation $M'=\langle \{\texttt{p}\},\{\texttt{q}\} \rangle$.
The LCEs w.r.t. $(M',\emptyset)$ are:
\[\begin{array}{l}
LCE_P^p(\texttt{p},M',\emptyset) = \{ \{assume\}, \{\naf \texttt{q}\}\}\\
	LCE_P^n(\texttt{q},M',\emptyset) = \{\{assume\}, \{\naf \texttt{p}\}\}
  \end{array}
\]
\hfill$\Box$
\end{example}
The next example uses a program with a non-empty well-founded model. 
\begin{example}\label{ex5}
Let $P$ be the program:
\[\begin{array}{lclclclclcl}
\texttt{a} & \uffa & \texttt{f},\: not\:\texttt{b}.& \hspace{1cm} &
\texttt{b} & \uffa & \texttt{e},\: not\:\texttt{a}. & \hspace{1cm} &
\texttt{e} . \\
\texttt{f} & \uffa & \texttt{e}. &&
\texttt{d} & \uffa & \texttt{c}, \:\texttt{e}. &&
\texttt{c} & \uffa & \texttt{d},\:\texttt{f}.
  \end{array}
\]
This program has the answer sets:
\[\begin{array}{lcr}
M_1 = \langle\{\texttt{f},\texttt{e},\texttt{b}\},\{\texttt{a},\texttt{c},\texttt{d}\}\rangle & \hspace{1cm}&
M_2 = \langle \{\texttt{f},\texttt{e},\texttt{a}\},\{\texttt{c},\texttt{b},\texttt{d}\}\rangle\end{array}\]
Observe that the well-founded model of this program is
$\langle W^+, W^- \rangle = \langle \{\texttt{e},\texttt{f}\}, \{\texttt{c},\texttt{d}\}\rangle$.
The following are LCEs w.r.t. the answer set $M_1$ and 
the empty set of assumptions (those for $(M_2,\emptyset)$ 
have a similar structure):
\[\begin{array}{l}
LCE_P^n(\texttt{a},M_1,\emptyset)=\{\{\naf \texttt{b}\},\{assume\}\} \\
LCE_P^p(\texttt{b},M_1,\emptyset)=\{\{\texttt{e},\naf \texttt{a}\},\{assume\}\}\\
LCE_P^p(\texttt{e},M_1,\emptyset) = \{\{\top\},\{assume\}\} \\
LCE_P^p(\texttt{f},M_1,\emptyset)=\{\{\texttt{e}\},\{assume\}\} \\
LCE_P^n(\texttt{d},M_1,\emptyset) = \{\{\texttt{c}\},\{assume\}\} \\
LCE_P^n(\texttt{c},M_1,\emptyset)=\{\{\texttt{d}\},\{assume\}\}
  \end{array}
\]
\hfill$\Box$
\end{example}

Let us open a brief parenthesis to discuss some complexity
issues related to the  existence 
of LCEs. 
First, checking whether or not there is 
a LCE of $b^+$ w.r.t. $(J,U)$ 
is equivalent to checking whether or not the program 
contains a rule $r$ whose head is $b$ and whose 
body is satisfied by the interpretation $\langle J^+,J^- \cup U\rangle$.
This leads to the following observation. 
\begin{observation}
Given a program $P$,  a possible interpretation $J$,
a set of assumptions $U$, and an atom $b$, determining 
whether or not there is a LCE $S$ of $b^+$ w.r.t. $(J,U)$ 
such that $S \ne \{assume\}$ can be done in time 
polynomial in the size of $P$. 
\end{observation}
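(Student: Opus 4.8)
The plan is to unfold the definition of a positive LCE and show that the existence question reduces to a simple rule-scanning procedure. By Definition~\ref{lcedef}, a set $S \neq \{assume\}$ is a local consistent explanation of $b^+$ w.r.t. $(J,U)$ precisely when $b \in J^+$ and there is a rule $r \in P$ with $head(r) = b$ such that $S = body(r)$, $pos(r) \subseteq J^+$, and $neg(r) \subseteq J^- \cup U$. (The abbreviation $S = \{\top\}$ merely handles the empty-body case and introduces no extra work.) The key observation is that this is exactly the condition that the body of $r$ is satisfied by the interpretation $\langle J^+, J^- \cup U\rangle$, as the paragraph preceding the statement already notes. Thus the existence of a non-$assume$ positive LCE for $b^+$ is equivalent to the existence of at least one rule whose head is $b$ and whose body is satisfied by that interpretation.

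First I would describe the decision procedure explicitly: iterate over every rule $r \in P$; if $head(r) = b$, check whether each atom in $pos(r)$ lies in $J^+$ and each atom in $neg(r)$ lies in $J^- \cup U$; return ``yes'' as soon as some such rule is found, and ``no'' if the scan exhausts $P$ without success. I would then bound the running time. Each membership test (``is atom $c$ in $J^+$?'' or ``is atom $c$ in $J^- \cup U$?'') can be performed in time bounded by the sizes of the relevant sets, which are all bounded by the size of $P$ together with the input $(J,U)$; for a single rule the number of such tests is at most $|body(r)|$. Summing over all rules, the total work is bounded by a polynomial in the size of $P$ (and of the inputs $J$ and $U$, which are part of the problem instance), establishing the claimed polynomial bound.

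The correctness argument splits into the two directions of the equivalence. For soundness, if the scan returns a rule $r$, then $S = body(r)$ satisfies every clause of the second bullet of Definition~\ref{lcedef}(1) by construction, and $S \neq \{assume\}$, so a witnessing LCE exists. For completeness, if some non-$assume$ positive LCE $S$ of $b^+$ w.r.t. $(J,U)$ exists, then by the definition $S$ must equal $body(r)$ for some rule $r$ with $head(r) = b$ satisfying the membership constraints, and this is exactly a rule the scan would detect, so it returns ``yes.'' The precondition $b \in J^+$ is already guaranteed as part of the hypothesis on the LCE and need not be separately enforced by the procedure (or can be checked first in polynomial time). These two directions together show the procedure decides the question correctly.

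I do not expect any genuine obstacle here, since the statement is essentially a direct reading of the definition: the only mild care needed is to confirm that the abbreviations $S = \{\top\}$ and $S = \{assume\}$ do not hide additional combinatorial search, and to note that ``size of $P$'' is understood to include the grounding, so that the membership tests against $J^+$ and $J^- \cup U$ are genuinely polynomial in the stated parameter. The contrast worth flagging — and the reason the observation is stated only for the positive case — is that negative LCEs require a \emph{minimality} condition quantified over \emph{all} rules defining $b$, which is why the analogous positive-case tractability does not transfer verbatim to $b^-$; but that subtlety lies outside the present statement.
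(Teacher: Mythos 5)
Your proposal is correct and matches the paper's own argument: the paper likewise observes (in the paragraph preceding the statement) that the existence of a non-$\{assume\}$ positive LCE is equivalent to finding a rule $r$ with $head(r)=b$ whose body is satisfied by $\langle J^+, J^-\cup U\rangle$, which a polynomial-time scan over $P$ decides. Your additional spelling-out of the procedure, the soundness/completeness directions, and the remark on why the negative case needs separate treatment are all consistent elaborations of the same route.
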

In order to determine whether or not there exists a LCE of $b^-$ w.r.t. $(J,U)$,
we need to find a minimal set of literals $S$ that satisfies 
the second condition of Definition \ref{lcedef}. This can also be 
accomplished in  time polynomial in the size of $P$. In fact,
let $P_b$ be the set of rules in $P$ whose 
head is $b$. Furthermore, for a rule $r$, let 
$$S_r(J,U) = \{a \mid a \in pos(r) \cap (J^- \cup U)\} 
	\cup \{not \: a \mid a \in J^+ \cap neg(r)\}.$$
Intuitively, $S_r(J,U)$ is the maximal set of literals that 
falsifies the rule $r$ w.r.t. $(J,U)$. 
To find a LCE for $b^-$, it is necessary to have 
$S_r(J,U) \ne \emptyset$ for every $r \in P_b$. 
Clearly, computing $S_r(J,U)$ for $r \in P_b$
can be done in polynomial time in the size of $P$.
Finding a minimal set $S$ such that $S \cap S_r \ne \emptyset$
for every $r \in P_b$ can be done by scanning through the set
$P_b$ and adding to $S$ (initially set to $\emptyset$) 
an arbitrary element of $S_r(J,U)$ if $S \cap S_r(J,U) = \emptyset$. 
This leads to the following observation.
\begin{observation}
Given a program $P$,  a possible interpretation $J$,
a set of assumptions $U$, and an atom $b$, determining 
whether there exists a LCE $S$ of $b^-$ w.r.t. $(J,U)$ 
such that $S \ne \{assume\}$ can be done in time 
polynomial in the size of $P$. 
\end{observation}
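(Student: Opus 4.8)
The claim is that, given $P$, a possible interpretation $J$, a set of assumptions $U$, and an atom $b$, we can decide in time polynomial in the size of $P$ whether there exists a LCE $S$ of $b^-$ w.r.t. $(J,U)$ with $S \neq \{assume\}$. The plan is to give a constructive algorithm and argue both its correctness (it produces a valid LCE exactly when one exists) and its polynomial running time. The paragraph preceding the statement already supplies the key ingredients: the set $P_b$ of rules with head $b$, and for each $r \in P_b$ the ``maximal falsifying set'' $S_r(J,U)$. My proof would make these ingredients precise and then verify the greedy procedure.

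First I would record the preliminary membership condition. By Definition~\ref{lcedef}, a non-$\{assume\}$ LCE of $b^-$ can exist only if $b \in J^- \cup U$; checking this is immediate. Next I would establish the central characterization: a set $S$ (drawn from $J^- \cup U$ together with NAF literals $\naf c$ with $c \in J^+$) satisfies the falsification requirement of condition~(2) of Definition~\ref{lcedef} if and only if, for every $r \in P_b$, we have $S \cap S_r(J,U) \neq \emptyset$. The forward direction is essentially the definition of $S_r(J,U)$: the literals in $S$ that can legitimately falsify $r$ while respecting the sign and interpretation constraints are exactly those in $S_r(J,U)$, so $pos(r) \cap S \neq \emptyset$ or $neg(r) \cap \{c \mid \naf c \in S\} \neq \emptyset$ is equivalent to $S$ meeting $S_r(J,U)$. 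This reduces the existence question to a covering/hitting-set condition over the family $\{S_r(J,U)\}_{r \in P_b}$.

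With this characterization in hand, I would observe the decisive feasibility criterion: a hitting set exists if and only if $S_r(J,U) \neq \emptyset$ for every $r \in P_b$. If some $S_r(J,U) = \emptyset$, no choice of literals can falsify that rule, so no LCE of $b^-$ (other than possibly $\{assume\}$) exists. Conversely, if every $S_r(J,U)$ is nonempty, the greedy scan described in the text—initialize $S = \emptyset$, process the rules in $P_b$ one at a time, and whenever $S \cap S_r(J,U) = \emptyset$ add an arbitrary element of $S_r(J,U)$ to $S$—terminates with a set that hits every $S_r(J,U)$. I would then confirm this $S$ is a genuine LCE: it satisfies the sign constraints by construction (each added literal comes from some $S_r(J,U)$, hence lies in $J^- \cup U$ or is $\naf c$ with $c \in J^+$), and it is minimal because every element was added only to cover a rule not yet covered at the time of insertion. (Minimality here is the ``no redundant literal'' sense that matches the definition; I would note explicitly that greedy insertion guarantees each element is justified by at least one rule.)

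Finally I would bound the running time. Computing each $S_r(J,U)$ requires scanning $pos(r)$ and $neg(r)$ and testing membership in $J^-$, $U$, and $J^+$, all polynomial in the size of $P$; summing over $r \in P_b$ stays polynomial. The greedy scan makes one pass over $P_b$, performing a polynomial-size intersection test per rule, so it too is polynomial. The emptiness check and the preliminary test $b \in J^- \cup U$ are trivial. Hence the whole decision runs in time polynomial in the size of $P$. The step I expect to require the most care is the minimality clause of Definition~\ref{lcedef}: I must argue that the greedy output is not merely a hitting set but a \emph{minimal} one, and that minimality of the hitting set coincides with the minimality demanded of an LCE. This is where a careful reading of the definition is essential, since the greedy construction yields an inclusion-minimal cover only because each literal is added in response to an as-yet-uncovered rule—a point I would state explicitly rather than leave implicit.
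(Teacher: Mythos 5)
Your proposal follows essentially the same route as the paper's own argument: the same sets $S_r(J,U)$, the same feasibility criterion ($S_r(J,U)\neq\emptyset$ for every $r\in P_b$, after the trivial check that $b\in J^-\cup U$), the same greedy scan over $P_b$, and the same polynomial-time accounting.

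One caveat: the step you single out as requiring the most care is stated incorrectly --- both by you and, with the same looseness, by the paper itself. Greedy insertion of an arbitrary element of $S_r(J,U)$ whenever the current $S$ misses it does \emph{not} guarantee an inclusion-minimal hitting set: take three rules with head $b$ whose falsifying sets are $S_{r_1}=\{c\}$, $S_{r_2}=\{d\}$, $S_{r_3}=\{e,c,d\}$, and scan in the order $r_3,r_1,r_2$; the algorithm may pick $e$ first and output $\{e,c,d\}$, yet $\{c,d\}$ already meets all three sets, so $e$ is redundant and the output is not minimal. So your justification that ``each element was added only to cover a rule not yet covered at the time of insertion'' conflates being useful at insertion time with being non-redundant in the final set. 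The flaw is harmless for the observation, for two reasons. First, it is a decision problem, and existence of an LCE of $b^-$ other than $\{assume\}$ is equivalent to every $S_r(J,U)$ being nonempty: any hitting set (e.g., $\bigcup_{r\in P_b} S_r(J,U)$) contains an inclusion-minimal one, since all sets are finite, so no minimal set need ever be constructed to decide existence. Second, if one wants to exhibit a minimal LCE, a final polynomial-time pass deleting any literal whose removal leaves all $S_r(J,U)$ hit restores inclusion-minimality. Your write-up would be fully correct with either repair in place of the greedy-minimality claim.
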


\smallskip

We are now ready to instantiate the notion of e-graph by forcing the edges of
the e-graph to represent encodings of local consistent explanations of the corresponding
atoms. 
To select
an e-graph as an acceptable explanation, we need  two additional
components: the current interpretation ($J$) and the collection ($U$) of
elements that have been introduced in the interpretation without any
``supporting evidence''. An e-graph based on $(J,U)$ is defined next.

\begin{definition}[$(J,U)$-Based  Explanation Graph]
\label{ja-based}
Let $P$ be a program, $J$ a possible interpretation, 
$U$ a set of atoms, and 
$b$ an element in ${\cal A}^p \cup {\cal A}^n$. 
A $(J,U)$-\emph{based explanation graph} $G=(N,E)$ of $b$
is an e-graph such that 

\begin{itemize}

\item {\bf (Relevance)} every node $c \in N$ is reachable from $b$ 

\item  {\bf (Correctness)} for every $c \in N \setminus \{assume,\top,\bot\}$, $support(c,G)$ is 
an LCE of $c$ w.r.t. $(J,U)$
\end{itemize}

\end{definition}
The two additional conditions we impose on the e-graph force the graph to 
be connected w.r.t. the element $b$ we are justifying, and force the selected
nodes and edges to reflect local consistent explanations for the various elements.

The next condition we impose on the explanation graph is aimed at ensuring
that no positive cycles are present. The intuition is that atoms that are true
in an answer set should have a non-cyclic support for their truth values. Observe
that the same does not happen for elements that are false---as in the case of
elements belonging to unfounded sets~\cite{apt94a}.

\begin{definition}[Safety]
A $(J,U)$-based e-graph $(N,E)$ is {\em safe} if  $\forall b^+ \in N$, 
$(b^+,b^+)\not\in E^{*,+}$.
\end{definition}

\begin{example}
Consider the e-graphs in Figure \ref{fig1}, for the program of 
Example~\ref{ex5}. 

\smallskip

\noindent
Neither the e-graph of $\texttt{a}^+$ ({\tt (i)} nor the
e-graph {\tt (ii)}) is a 
$(M_1,\{\texttt{c},\texttt{d}\})$-based e-graph of 
$\texttt{a}^+$, since
$support(\texttt{b},G)=\{assume\}$ in both cases, and this does not
represent a valid LCE for $\texttt{b}^-$ 
(since $\texttt{b} \notin M_1^-\cup \{\texttt{c},\texttt{d}\}$). Observe,
on the other hand, that they are both acceptable
$(M_2,\{\texttt{b},\texttt{c},\texttt{d}\})$-based e-graphs of $\texttt{a}^+$.

\smallskip
\noindent
The e-graph of $\texttt{c}^+$ (the graph {\tt (iii)})
is neither a $(M_1,\{\texttt{c},\texttt{d}\})$-based nor 
a $(M_2,\{\texttt{b},\texttt{c},\texttt{d}\})$-based e-graph of $\texttt{c}^+$,
while the e-graph of $\texttt{c}^-$ (graph {\tt (iv)}) is  a 
$(M_1,\{\texttt{c},\texttt{d}\})$-based and
a $(M_2,\{\texttt{b},\texttt{c},\texttt{d}\})$-based e-graph of $\texttt{c}^-$.

\smallskip
\noindent
Observe also that all the graphs are safe.\hfill$\Box$
\end{example}

\begin{figure}[htbp]
\centerline{\psfig{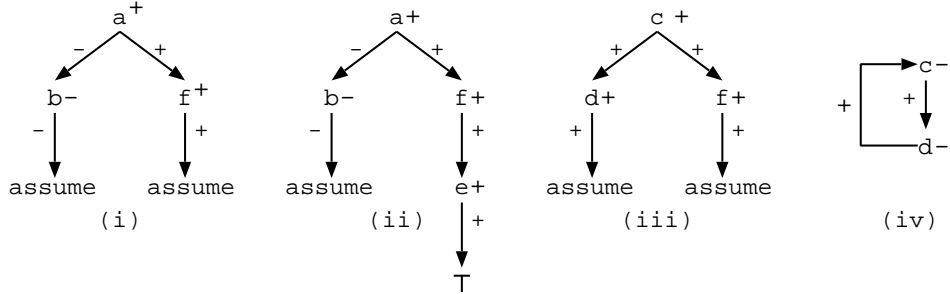}}
\caption{Sample $(J,U)$-based Explanation Graphs}
\label{fig1}
\end{figure}


\section{Off-Line Justifications}\label{off}
\emph{Off-line} justifications are employed to characterize the
``reason'' for the 
truth value of an atom w.r.t. a given answer set $M$. The definition will represent
a refinement of the $(M,A)$-based explanation graph, where $A$ will be selected
according to the properties of the answer set $M$. Off-line
justifications will rely on the assumption that $M$ is a \emph{complete} interpretation. 

Let us start with a simple observation.
If $M$ is an answer set of a program $P$, and $WF_P$ is the well-founded model of $P$, then
it is known that, $WF_P^+ \subseteq M^+$ and $WF_P^- \subseteq M^-$ \cite{apt94a}.
Furthermore, we observe that the content of $M$ 
is uniquely determined by the truth values assigned to 
the atoms in $V=NANT(P) \setminus (WF_P^+ \cup WF_P^-)$\/, i.e.,
the atoms that
\begin{itemize}
\item  appear in negative literals in the program, and 
\item their truth value is not determined by the well-founded model.
\end{itemize}
We are interested in the subsets of $V$
with the following property: if all the elements in 
the subset are assumed to be false, then the truth value
of all other atoms
in $\cal A$ is uniquely determined and leads to the desired
answer set.
We call these subsets the \emph{assumptions} of the answer set.
Let us characterize this concept more formally.

\begin{definition}[Tentative Assumptions]
Let $P$ be a program and $M$ be an answer set of $P$. 
The \emph{tentative assumptions} of $P$ w.r.t. $M$ (denoted by ${\cal TA}_P(M)$) 
are defined as: 
\[ {\cal TA}_P(M) = \{ a \:|\: a \in NANT(P)\:\wedge\: a \in M^- \:\wedge\: 
				a \not\in (WF_P^+\cup WF_P^-)\} \]
\end{definition}

The negative reduct of a program $P$ w.r.t. a set of atoms $U$ is a
program obtained from $P$ by forcing all the atoms in $U$ to be
false.

\begin{definition}[Negative Reduct]\label{nred}
Let $P$ be a program, $M$ an answer set of $P$, and $U \subseteq {\cal TA}_P(M)$ a set of 
tentative assumption atoms. 
The \emph{negative reduct} of $P$ w.r.t. $U$, denoted by $NR(P,U)$, 
is the set of rules: 
\[ NR(P,U) = P \setminus \{\: r \:|\: head(r) \in U\}. \] 
\end{definition}

\begin{example}
Let us consider the program
\begin{verbatim}
  p :- not q.                q :- not p.
  r :- p, s.                 t :- q, u.
  s.
\end{verbatim}
The well-founded model for this program is $\langle \{\texttt{s}\},\{\texttt{u}\}\rangle$. The program
has two answer sets, $M_1=\langle \{\texttt{p,s,r}\},\{\texttt{t,u,q}\}\rangle$ and
$M_2 = \langle \{\texttt{q,s}\},\{\texttt{p,r,t,u}\}\rangle$.
The tentative assumptions for this program w.r.t. $M_1$
is the set $\{\texttt{q}\}$. If we consider the set $\{\texttt{q}\}$, the negative reduct
of the program is the set of rules
\begin{verbatim}
  p :- not q.
  r :- p, s.        t :- q, u.
  s.
\end{verbatim}
\hfill$\Box$
\end{example}

We are now ready to introduce the proper concept of assumptions---these are 
those tentative assumptions that are sufficient to allow the reconstruction of
the answer set.

\begin{definition}[Assumptions]\label{assumption}
Let $P$ be a program and $M$ be an answer set of $P$. 
An \emph{assumption} w.r.t. $M$ is a set of atoms $U$
satisfying the following properties: 
\begin{itemize}
\item[(1)] $U \subseteq {\cal TA}_P(M)$, and 
\item[(2)] the well-founded model of $NR(P,U)$ is equal to $M$---i.e.,
\[ WF_{NR(P,U)} = M.\]
\end{itemize}
We will denote with $Assumptions(P,M)$ the 
set of all assumptions of $P$ w.r.t. $M$.
A \emph{minimal assumption} is an assumption that is minimal w.r.t. 
the set inclusion operator. 
We will denote with $\mu Assumptions(P,M)$ the 
set of all the minimal assumptions of $P$ w.r.t. $M$.
\end{definition}

An important observation we can make is that $Assumptions(P,M)$ is not an empty set, since
the complete set ${\cal TA}_P(M)$ is an assumption.

\begin{proposition}
\label{prop1} 
Given a program $P$ and an answer set $M$ of $P$, the well-founded model of
the program 
$NR(P,{\cal TA}_P(M))$ is equal to $M$. 
\end{proposition}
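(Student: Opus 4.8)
The plan is to show the two interpretations agree by first proving that $M$ is an answer set of the reduced program $Q := NR(P,{\cal TA}_P(M))$, and then proving that the well-founded model of $Q$ is \emph{total} and coincides with $M$. Throughout I would write $\Gamma_Q(S) = lfp(T_{Q,S})$; this operator is antimonotone in $S$ by Definition~\ref{tpj}, and reading Definition~\ref{kui} through it gives $K_0=\Gamma_Q({\cal A})$ (since $T_{Q,{\cal A}}=T_{Q^+}$), $U_i=\Gamma_Q(K_i)$, and $K_{i+1}=\Gamma_Q(U_i)$, so that at the stabilizing index $j$ we have $W^+ = K_j = \Gamma_Q(U_j)$ and $W^- = {\cal A}\setminus U_j$ with $U_j=\Gamma_Q(K_j)$, where $\langle W^+,W^-\rangle = WF_Q$. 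I would also note that, for a complete interpretation, ``$M$ is an answer set of $Q$'' is exactly the fixpoint condition $M^+=\Gamma_Q(M^+)$, because $Q^M$ and $T_{Q,M^+}$ define the same consequence operator.

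First I would establish that $M$ is an answer set of $Q$. The programs $P$ and $Q$ differ only in the rules whose head lies in ${\cal TA}_P(M)$, and by definition ${\cal TA}_P(M)\subseteq M^-$, so every deleted rule has a head that is \emph{not} in $M^+=lfp(T_{P^M})$. Comparing reducts, $Q^M$ is obtained from $P^M$ by deleting exactly the surviving instances of those rules; since in the bottom-up construction of $lfp(T_{P^M})$ a rule whose head is absent from the fixpoint never has its body satisfied, deleting these rules leaves the least fixpoint unchanged. Hence $lfp(T_{Q^M})=lfp(T_{P^M})=M^+$, i.e. $M$ is an answer set of $Q$ and $M^+=\Gamma_Q(M^+)$. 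Invoking the standard containment of the well-founded model in every answer set~\cite{apt94a} for $Q$, this yields $W^+ \subseteq M^+$ and $W^-\subseteq M^-$.

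The whole proposition then reduces to the single reverse inclusion $M^+\subseteq W^+$. Indeed, once this is shown we get $W^+=K_j=M^+$, and therefore $U_j=\Gamma_Q(K_j)=\Gamma_Q(M^+)=M^+$, whence $W^- = {\cal A}\setminus U_j = {\cal A}\setminus M^+ = M^-$; thus $WF_Q$ is automatically total and equal to $M$. To prove $M^+\subseteq W^+$ I would argue that every atom true in $M$ is eventually captured in the increasing sequence $K_0\subseteq K_1\subseteq\cdots$, by a simultaneous induction tracking in tandem the truths added to the $K_i$ and the falsities added to ${\cal A}\setminus U_i$. The seed is the set ${\cal TA}_P(M)$: these atoms head no rule of $Q$, so they never appear in any $\Gamma_Q(\cdot)$ and are false from the first stage. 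Their falsity activates, in the derivation $M^+=lfp(T_{Q,M^+})$, the rules whose negative bodies point into $M^-$, letting the corresponding heads enter some $K_i$; the newly true atoms in turn witness further atoms of $M^-$ as unfounded, removing them from a later $U_{i'}$, and so on.

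The main obstacle is precisely this interleaving: deriving a true atom requires certain atoms of $M^-$ to be already established false, while establishing an atom of $M^-$ as false may require certain atoms of $M^+$ to be already established true. The argument must therefore be carried on a combined rank that orders simultaneously the iteration depth at which $M^+$-atoms are derived in $lfp(T_{Q,M^+})$ and the stages at which $M^-$-atoms become unfounded, and one must show this rank is well-founded---i.e. that no mutual deadlock occurs. That non-circularity is exactly what the construction of $Q$ buys us: the eliminated heads are the negatively-occurring, $WF_P$-undefined atoms that $M$ declares false, and removing their rules destroys the even negative cycles that would otherwise keep the well-founded model partial; combined with the groundedness (acyclicity) of the answer-set derivation $lfp(T_{Q,M^+})$, this supplies the well-founded rank and closes the induction.
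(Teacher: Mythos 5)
Your first two paragraphs are sound and track the paper's own route: the argument that $M$ remains an answer set of $Q = NR(P,{\cal TA}_P(M))$ (every deleted rule has its head in ${\cal TA}_P(M)\subseteq M^-$, hence never fires in the bottom-up construction of $lfp(T_{P^{M}})$) is essentially the paper's lemma, and your reduction of the whole statement to the single inclusion $M^+\subseteq W^+$---via $U_j=\Gamma_Q(K_j)=\Gamma_Q(M^+)=M^+$, so that the negative side collapses in one step---is a correct (dual) repackaging of the paper's closing computation. The genuine gap is your final paragraph, which is where all the content of the proposition lives. You name the interleaving obstacle, posit a ``combined rank'' ordering the derivation of $M^+$-atoms and the unfounding of $M^-$-atoms, and then assert that its well-foundedness ``is exactly what the construction of $Q$ buys us'' because deleting the assumption-headed rules ``destroys the even negative cycles.'' No rank is defined, no induction is carried out, and the non-circularity claim is precisely the statement to be proved; the groundedness of $lfp(T_{Q,M^+})$ that you invoke orders only positive dependencies and says nothing about \emph{when}, in the alternation of Definition~\ref{kui}, the negative gates $neg(r)$ become available. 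As written, $M^+\subseteq W^+$ is asserted, not proved.

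What actually closes the gap is a structural observation your sketch never isolates: a derivation in the $K$-iteration can be blocked only through atoms occurring in $neg(r)$, i.e., through $NANT(P)$, and $NANT(P)\cap M^- \subseteq WF_P^- \cup {\cal TA}_P(M)$ (such an atom cannot be $WF_P$-true, since $WF_P \sqsubseteq M$). Hence no transfinite interleaved rank is needed; a two-phase argument suffices, and it is what the paper does. Phase one: ${\cal TA}_P(M)\subseteq WF_Q^-$ trivially (these atoms head no rule of $Q$---your ``seed''), and $WF_P \sqsubseteq WF_Q$ by an interlocked stepwise comparison $K^P_i \subseteq K^Q_i$ and $U^Q_i \subseteq U^P_i$ of the two alternating sequences (Proposition~\ref{sequence} in the appendix); the delicate point there, which your plan also cannot avoid, is that atoms entering the $K^P$-sequence lie in $WF_P^+$, hence outside ${\cal TA}_P(M)$, so the rules deriving them survive in $Q$. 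Phase two: every rule $r$ contributing to $lfp(T_{P^{M^+}})$ has $neg(r)\subseteq NANT(P)\cap M^-\subseteq WF_P^-\cup{\cal TA}_P(M)$, and at the fixpoint the candidate set $U^Q_k$ is disjoint from that union; a plain induction on the iteration stages then yields $M^+ = lfp(T_{P^{M^+}}) \subseteq lfp(T_{Q,U^Q_k}) = K^Q_{k+1} \subseteq W^+$ in a single further sweep. Your proposal can be repaired along exactly these lines, but without the $NANT$ observation and something like Proposition~\ref{sequence}, the ``simultaneous induction on a combined rank'' remains a restatement of the theorem rather than a proof of it.
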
 
\begin{proof}
Appendix A. 
\end{proof}

\begin{example}
Let us consider the program of Example~\ref{ex6}. The interpretation
$M= \langle \{\texttt{p}\}, \{\texttt{q}\}\rangle$ is an answer set. For this program we have:
\[\begin{array}{lcl}
	WF_P  & = & \langle \emptyset, \emptyset \rangle\\
	{\cal TA}_P(\langle \{\texttt{p}\}, \{\texttt{q}\}\rangle ) & = & \{\texttt{q}\}
  \end{array}
\]
Observe that $NR(P,\{\texttt{q}\}) = \{ \texttt{p} \uffa not\: \texttt{q}\}$. The well-founded model 
of this program is $\langle \{\texttt{p}\}, \{\texttt{q}\}\rangle$, which is equal to $M$. Thus,
$\{\texttt{q}\}$ is an assumption of $P$ w.r.t. $M$. In particular, one can see that this
is the only assumption we can have.\hfill$\Box$
\end{example}

\begin{example} \label{ex-wf}
Let us consider the following program $P$:
\[\begin{array}{lclclclclcl}
\texttt{a} & \uffa & \texttt{f}, \:\:not\:\texttt{b}. & \hspace{.5cm} & \texttt{b} & \uffa & \texttt{e}, \:\:not\:\texttt{a}. & \hspace{.5cm} & \texttt{e}  .\\
\texttt{f} & \uffa & \texttt{e}. & & \texttt{d} & \uffa & \texttt{c},\:\: \texttt{e}. & & \texttt{c} & \uffa & \texttt{d}, \:\texttt{f}, \:not\: \texttt{k}.\\
\texttt{k} & \uffa & \texttt{a}.
  \end{array}
\]
 The interpretation
$M_1=\langle \{\texttt{f},\texttt{e},\texttt{b}\}, \{\texttt{a},\texttt{c},\texttt{d},\texttt{k}\}\rangle$ is an answer set of the program. In particular:
\[\begin{array}{lcl}
	WF_P & = & \langle \{\texttt{e},\texttt{f}\}, \{\texttt{d},\texttt{c}\}\rangle\\
	{\cal TA}_P(\langle \{\texttt{f},\texttt{e},\texttt{b}\},\{\texttt{a},\texttt{c},\texttt{d}\}\rangle) & = & \{\texttt{a},\texttt{k}\}
  \end{array}
\]
The program $NR(P,\{\texttt{a}\})$ is:
\[\begin{array}{lclclcl}
	\texttt{b} & \uffa & \texttt{e},\: not\: \texttt{a}. & \hspace{1cm} & \texttt{e} .\\
	\texttt{f} & \uffa & \texttt{e}. & & \texttt{d} & \uffa & \texttt{c}, \:\texttt{e}.\\
	\texttt{c} & \uffa & \texttt{d}, \:\texttt{f},\: not\:\texttt{k}. & & \texttt{k} & \uffa & \texttt{a}.
  \end{array}
\]
The well-founded model of this program is $\langle \{\texttt{e},\texttt{f},\texttt{b}\},\{\texttt{a},\texttt{c},\texttt{d},\texttt{k}\}\rangle$.  Thus,
$\{\texttt{a}\}$ is an assumption w.r.t. $M_1$.

Observe also that if we consider $NR(P,\{\texttt{a},\texttt{k}\})$ 
\[\begin{array}{lclclcl}
	\texttt{b} & \uffa & \texttt{e},\: not\: \texttt{a}. & \hspace{1cm} & \texttt{e} .\\
	\texttt{f} & \uffa & \texttt{e}. & & \texttt{d} & \uffa & \texttt{c},\:\texttt{e}.\\
	\texttt{c} & \uffa & \texttt{d},\texttt{f},\: not\:\texttt{k}. 
  \end{array}
\]
The well-founded model of this program is also $\langle \{\texttt{e},\texttt{f},\texttt{b}\},\{\texttt{a},\texttt{c},\texttt{d},\texttt{k}\}\rangle$, thus
making $\{\texttt{a},\texttt{k}\}$ another assumption. 
Note that this second assumption is not minimal.\hfill$\Box$
\end{example}

We will now specialize e-graphs to the case of answer sets, where
only false elements can be used as assumptions.

\begin{definition}[Off-line Explanation Graph]
Let $P$ be a program, $J$ a partial interpretation,  $U$ a set of atoms, and 
$b$ an element in ${\cal A}^p \cup {\cal A}^n$. An \emph{off-line explanation graph} $G=(N,E)$ of $b$
w.r.t. $J$ and $U$ is a $(J,U)$-based e-graph of $b$ satisfying the following
additional conditions:
\begin{itemize}
\item[{$\circ$}] there exists no $p^+ \in N$ such that $(p^+,\textit{assume},+) \in E$; and
\item[{$\circ$}] $(p^-,\textit{assume},-) \in E$ iff $p \in U$. 
\end{itemize}
We will denote with ${\cal E}(b,J,U)$  the set of all off-line explanation graphs
of $b$ w.r.t. $J$ and $U$.
\end{definition}
The first condition ensures that true elements cannot be treated as assumptions, while
the second condition ensures that only assumptions are justified as such in the graph.

\begin{definition}[Off-line Justification]
Let $P$ be a program, $M$ an answer set,  $U \in Assumptions(P,M)$, and 
$a \in {\cal A}^p\cup {\cal A}^n$. An \emph{off-line justification} of $a$
w.r.t. $M$ and $U$  is an element $(N,E)$
of ${\cal E}(a,M,U)$ which is safe.

If $M$ is an answer set and $x\in M^+$ (resp. $x \in M^-$), then $G$ is an off-line
justification of $x$ w.r.t. $M$ and the assumption $U$ iff $G$ is an
off-line justification of $x^+$ (resp. $x^-$) w.r.t. $M$ and $U$.
\end{definition}

\begin{example}\label{ex2}
Let us consider the program in Example \ref{ex5}.
We have that $NANT(P) = \{b, a\}$. 
The assumptions for this program are:
$$Assumptions(P,M_1)   =   \{ \{a\} \}\:\:\:\textit{ and }\:\:\:
Assumptions(P,M_2)  =  \{ \{b\} \}.$$ 
The off-line justifications for atoms in $M_1$ 
w.r.t. $M_1$ and $\{a\}$ are shown in Figure~\ref{fig2}.

\begin{figure}[htb] 
\centerline{\psfig{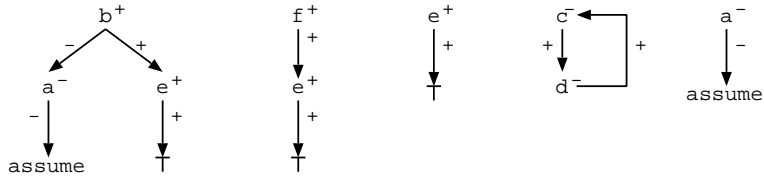}}
\caption{Off-line justifications w.r.t. $M_1$ and $\{a\}$ for $b^+$, 
$f^+$, $e^+$, $c^-$ and $a^-$ (left to right)}
\label{fig2}
\end{figure}

\end{example}
Justifications are built by assembling items
from the LCEs of the various atoms and  avoiding the creation
of positive cycles in the justification of true atoms. Also,
the justification is built w.r.t. a chosen set of assumptions ($A$), 
whose elements are all
assumed false.

In general, an atom may admit multiple
justifications, even w.r.t. the same assumptions.
The following lemma shows that elements in $WF_P$ can be justified
without negative cycles and  assumptions. 

\begin{lemma}\label{good}
Let $P$ be a program, $M$ an answer set, and $WF_P$ the well-founded model
of $P$. 
Each atom $a\in WF_P$ has a justification 
w.r.t. $M$ and $\emptyset$ which does not contain any negative cycle.
\end{lemma}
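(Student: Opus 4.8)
The plan is to construct, for each $a \in WF_P$, an explicit off-line explanation graph in $\mathcal{E}(a,M,\emptyset)$ (using the interpretation $M$ and the empty set of assumptions, so that no $assume$ edge is allowed) whose supports are read off the well-founded computation of Definition~\ref{kui}, and then to control its cyclic structure by a single \emph{level mapping}. Since $WF_P^+ \subseteq M^+$ and $WF_P^- \subseteq M^-$, any rule body whose positive part lies in $W^+$ and whose negative part lies in $W^-$ is a legal positive LCE w.r.t. $(M,\emptyset)$, and any minimal ``killing'' set drawn positively from $W^-$ and negatively from $W^+$ is a legal negative LCE w.r.t. $(M,\emptyset)$; in particular no $assume$ is ever needed, so the graph automatically satisfies the two extra conditions of an off-line explanation graph for $U=\emptyset$. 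The whole difficulty is therefore to pick the supports so that the graph is simultaneously \emph{safe} (no positive cycle through a $b^+$ node, as required for it to be a justification) and \emph{free of negative cycles} (the new claim). I obtain both from one rank argument: I assign to every settled atom a rank $\rho$ so that \emph{every} edge is $\rho$-non-increasing, while \emph{every} negative edge and \emph{every} positive edge leaving a true node is $\rho$-strictly-decreasing. Then a negative cycle would contain a strictly decreasing edge and only non-increasing edges, forcing a strict net decrease around the cycle, which is impossible; the same argument applied to a positive edge out of a $b^+$ node rules out positive cycles through true nodes, i.e. gives safety. Positive cycles among false nodes (e.g. the mutually dependent unfounded pair $t,u$) are only required to be non-increasing, so they remain permitted, as they must be.

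I define $\rho$ from the alternating computation $(K_i,U_i)$. For a true atom I use its birth stage $s(a)=\min\{i : a \in K_i\}$ together with its depth $\delta(a)$ inside the fixpoint iteration producing $K_{s(a)}$ (the iteration of $T_{P^+}$ if $s(a)=0$, of $T_{P,U_{s(a)-1}}$ otherwise); for a false atom I use its death stage $s(a)=\min\{i : a \notin U_i\}$. I set $\rho(a)=(s(a),\, t(a),\, \delta(a))$ ordered lexicographically, where $t(a)=0$ for true and $t(a)=1$ for false atoms, so that at equal stage a false atom outranks a true one, larger tuples being higher. Supports are chosen thus. For $a \in W^+$ with $s(a)=i$, pick a witnessing rule $r$ from the stage-$i$ iteration, so that $pos(r) \subseteq K_i$ with each positive body atom reached at depth $<\delta(a)$, and $neg(r) \cap U_{i-1}=\emptyset$, i.e. $neg(r) \subseteq \mathcal{A}\setminus U_{i-1} \subseteq W^-$; the support is $body(r)$ (or $\{\top\}$ if $r$ is a fact). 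For $a \in W^-$ with $s(a)=i$, every rule with head $a$ satisfies $pos(r)\cap(\mathcal{A}\setminus U_i)\neq\emptyset$ or $neg(r)\cap K_i\neq\emptyset$; collecting one such witness per rule and shrinking to an inclusion-minimal hitting set yields a minimal negative LCE $S$ whose positive elements lie in $\mathcal{A}\setminus U_i \subseteq W^-$ and whose negated atoms lie in $K_i \subseteq W^+$ (or $S=\{\bot\}$ if $a$ has no rule). The graph is built by expanding these supports recursively; positive/negative edges from true nodes land in $W^+$/$W^-$ and from false nodes in $W^-$/$W^+$, so recursion stays within the settled atoms and, $\mathcal{A}$ being finite, terminates.

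It then remains to verify the rank inequalities edge by edge. A negative edge out of a true node $a^+$ (stage $i$, necessarily $i\ge 1$) reaches some $c^-$ with $c \in neg(r) \subseteq \mathcal{A}\setminus U_{i-1}$, hence $s(c)\le i-1<i$: strictly decreasing. A negative edge out of a false node $a^-$ (stage $i$) reaches some $c^+$ with $c \in K_i$, hence $s(c)\le i$; if $s(c)<i$ the first coordinate drops, and if $s(c)=i$ the type coordinate drops since $(i,1,\cdot)>(i,0,\cdot)$: strictly decreasing either way. A positive edge out of a true node $a^+$ (stage $i$) reaches $b^+$ with $b \in pos(r) \subseteq K_i$, so $s(b)\le i$; if $s(b)<i$ the stage drops, and if $s(b)=i$ then $b$ is first reached at stage $i$, at depth $\delta(b)<\delta(a)$, so the $\delta$ coordinate drops: strictly decreasing. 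Finally a positive edge out of a false node $a^-$ (stage $i$) reaches $b^-$ with $b \in \mathcal{A}\setminus U_i$, so $s(b)\le i$ and $t(b)=1$, giving $\rho(b)\le\rho(a)$: non-increasing. Hence every edge is non-increasing while negative edges and positive-out-of-true edges strictly decrease $\rho$, and the cycle argument of the first paragraph yields both safety and the absence of negative cycles. Taking the subgraph reachable from the node of $a$ and discharging the routine e-graph, Relevance, and Correctness checks (supports are LCEs w.r.t. $(M,\emptyset)$, the only sinks are $\top,\bot$, no $assume$ occurs) shows this graph is the desired justification.

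The main obstacle is the construction of $\rho$ itself: I must choose, for each true atom, one witnessing rule that \emph{at the same time} forces its negative edges strictly down the \emph{stage} order (which needs the rule to come from the stage-$i$ alternation, where $neg(r)\cap U_{i-1}=\emptyset$) and its positive edges strictly down the \emph{depth} order (which needs the positive body to be reached earlier in that same iteration). Reconciling the coarse alternation stages with the fine inner-iteration depths into one lexicographic rank, and checking that the stored depth $\delta(b)$ of a positive body atom really is its depth in the relevant iteration exactly when $s(b)=s(a)$, is the delicate point; the analogous bookkeeping for false atoms---extracting an inclusion-minimal killing set all of whose witnesses respect the stage-$i$ sets $K_i$ and $\mathcal{A}\setminus U_i$---is routine given the Observations on LCE computation already established.
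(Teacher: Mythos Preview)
Your argument is correct and takes a genuinely different route from the paper's.

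The paper proves the lemma by importing the Brass--Dix--Freitag--Zukowski program transformation system ($\mapsto_P,\mapsto_N,\mapsto_S,\mapsto_F,\mapsto_L$), running $P$ down to its irreducible normal form $P^*$, and arguing by induction on the transformation sequence: at each step it tracks which atoms have just become facts of $P_i$ or have just lost all their defining rules, and it builds their justifications explicitly by splicing together justifications already constructed for atoms settled at earlier steps. The $\mapsto_L$ (loop-detection) case is handled separately, by an explicit construction of the positively cyclic part of the graph among the atoms of the detected loop, followed by grafting on the previously built justifications for the rules removed earlier by $\mapsto_N/\mapsto_F$.

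You instead stay entirely within Definition~\ref{kui}, defining a single lexicographic rank $\rho=(s,t,\delta)$ on the settled atoms and choosing supports so that every edge is $\rho$-non-increasing while negative edges and positive edges leaving true nodes are strictly $\rho$-decreasing. Since positive edges preserve the annotation (true-to-true or false-to-false) and negative edges flip it, any positive cycle through a $b^+$ node is entirely among true nodes and hence strictly $\rho$-decreasing, giving safety; any negative cycle contains a strictly decreasing edge and only non-increasing ones, which is impossible. This is more self-contained than the paper's proof (no external rewriting system is needed) and handles the unfounded-loop case uniformly through the $s$-coordinate rather than by a separate graph construction. The paper's approach, in exchange, makes the shape of the resulting justification more explicit and ties it directly to a well-known operational characterisation of $WF_P$. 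One small point worth stating explicitly in your write-up: you should fix $\delta$ to a constant for false atoms (or drop it from their rank), so that the ``non-increasing'' claim for positive edges between false nodes at the same stage is literally an inequality of triples; your text leaves this implicit.
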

From the definition of assumption and from the previous lemma we can infer
that a justification free of negative cycles can be built for every atom.

\begin{proposition}\label{propimp}
Let $P$ be a program and $M$ an answer set. For each atom $a$,
there is  an off-line 
justification w.r.t. $M$ and $M^-\setminus WF_P^-$
which does not contain negative cycles.
\end{proposition}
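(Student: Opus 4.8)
The plan is to fix $U = M^-\setminus WF_P^-$ and to construct, for each atom $a$ separately, a justification by a case analysis driven by the position of $a$ relative to the well-founded model. First I would pin down the facts that make $U$ a legitimate choice. Since $M$ is an answer set, $WF_P^+\subseteq M^+$ and $WF_P^-\subseteq M^-$, so $M^-$ is the (disjoint) union $WF_P^-\cup U$, and $M^+ = WF_P^+\cup(M^+\setminus WF_P^+)$. Moreover, in an answer set every false atom is \emph{unsupported}: if $x\in M^-$ had a rule $r$ with $M\models body(r)$, then $x$ would be in the least model of the reduct. Consequently deleting the rules whose head lies in $U$ leaves the well-founded computation of Definition~\ref{kui} undisturbed, so $WF_{NR(P,U)}=M$; this refines Proposition~\ref{prop1} (note $\mathcal{TA}_P(M)\subseteq U$). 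The payoff is that, with the atoms of $U$ held false, the value in $M$ of every remaining atom is exactly its value in the well-founded model of $NR(P,U)$, obtained by a monotone iteration. I would use the \emph{stage} of that iteration — the outer $(K_i,U_i)$ index combined with the inner $T$-operator substage of each $lfp$ — as a rank on atoms.

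Then I would split on $a$. If $a\in WF_P^+\cup WF_P^-$, Lemma~\ref{good} already supplies a justification w.r.t. $M$ and $\emptyset$ with no negative cycle; since its construction only mentions atoms decided by $WF_P$ (none of which lie in $U$), it uses no $assume$ edge and contains no node of $U$, so the off-line conditions on $assume$-edges hold vacuously and it is also an off-line justification w.r.t. $M$ and $U$. If $a\in U$, I take the one-edge graph with edge $(a^-,assume,-)$: the support $\{assume\}$ is a valid negative LCE because $a\in M^-\cup U$, the off-line condition holds since $a\in U$, and it is trivially safe and acyclic. The remaining, main case is $a\in M^+\setminus WF_P^+$. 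Here I unfold a derivation of $a$ in $WF_{NR(P,U)}$: at the stage where $a$ first becomes true there is a rule $r$ with $head(r)=a$, with $pos(r)$ already derived at a strictly earlier rank and $neg(r)\subseteq M^- = WF_P^-\cup U$. I take $body(r)$ as the LCE, draw a positive edge to each $c^+$ ($c\in pos(r)$) and a negative edge to each $y^-$ ($\naf y\in body(r)$), and recurse. Each negated $y$ is handled by the $WF_P$ case (if $y\in WF_P^-$) or by an $assume$ edge (if $y\in U$); each positive $c$ is either in $WF_P^+$ (the $WF_P$ case) or again in $M^+\setminus WF_P^+$ (recursion at strictly smaller rank).

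Relevance and correctness hold by construction, as every node is reached from $a$ and every chosen support is an LCE of the corresponding annotated atom w.r.t. $(M,U)$; the real content is safety and the absence of negative cycles, which together amount to full acyclicity. The decisive observation is that recursion only ever descends through positive edges into atoms of $M^+\setminus WF_P^+$, and these strictly decrease the rank, so no positive path can close into a cycle; meanwhile every negative edge leaves the $M^+\setminus WF_P^+$ region entirely, landing either on the sink $assume$ (when $y\in U$) or inside a Lemma~\ref{good} subjustification (when $y\in WF_P^-$), which is self-contained in $WF_P$-decided atoms and already free of negative cycles. Hence along any path the rank strictly decreases until the path enters a WF-subjustification or a sink, and no cycle — in particular no negative cycle — can form. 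I expect the main obstacle to be exactly this acyclicity argument: making the rank precise and verifying that at each true atom an LCE can always be chosen whose positive body atoms have strictly smaller rank, together with the splicing lemma that the well-founded justifications of Lemma~\ref{good} reference only $WF_P$-decided atoms, so that the three kinds of fragments can be glued without creating cross-cycles.
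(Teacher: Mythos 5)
Your proposal is correct, but it takes a noticeably different route from the paper's, which disposes of this proposition in two lines: by Proposition~\ref{prop1} the negative reduct of $P$ has well-founded model equal to $M$, and Lemma~\ref{good}, applied wholesale to that reduct, hands over for \emph{every} atom at once a justification free of negative cycles, with the assumed atoms (having no rules left in the reduct) explained by $\mathit{assume}$-edges. (Note that the paper's appendix actually restates and proves the proposition with ${\cal TA}_P(M)$ in place of $M^-\setminus WF_P^-$, matching Definition~\ref{assumption}, which requires assumptions to lie inside ${\cal TA}_P(M)$.) You instead do two things differently. First, you extend Proposition~\ref{prop1} to the larger set $U=M^-\setminus WF_P^-$; this is genuinely needed for the statement as worded in the main text, and your adaptation is sound: the appendix induction only uses that removed rules have heads in $M^-$ while every derived head lies in $K_i\subseteq W^+\subseteq M^+$, so the same argument goes through verbatim with $U$ in place of ${\cal TA}_P(M)$. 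Second, rather than applying Lemma~\ref{good} to the reduct program, you invoke it on $P$ itself only for the $WF_P$-decided atoms and re-derive the $M^+\setminus WF_P^+$ part by hand, ranking atoms by the stage of the $(K_i,U_i)$ iteration of $NR(P,U)$ at which they become true. What your longer route buys is that it makes explicit two points the paper's one-liner glosses over: that LCEs computed relative to the reduct must be translated back into LCEs of $P$ (rule sets coincide for heads outside $U$; $U$-atoms get exactly the $\mathit{assume}$-edge), and that gluing the three kinds of fragments cannot create bad cycles because no edge ever leaves the $WF_P$-decided region (or an $\mathit{assume}$ sink) back into the ranked region, while ranks strictly decrease inside it. Your checks that the chosen bodies are LCEs w.r.t. $(M,U)$ are also correct, since $neg(r)\cap U_{i-1}=\emptyset$ forces $neg(r)\subseteq M^-$ at every stage.

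One slip, which does not damage the conclusion: safety plus absence of negative cycles do \emph{not} ``amount to full acyclicity,'' and your closing claim that ``no cycle can form'' is too strong. The Lemma~\ref{good} fragments you splice in may legitimately contain positive cycles through negative nodes — this is exactly the unfounded-set situation handled by the $\mapsto_L$ case in that lemma's proof. Fortunately your argument only needs, and does establish, the weaker fact that any cycle in the glued graph is confined to such a fragment, where it is a positive cycle through negative nodes only, violating neither safety nor freedom from negative cycles.
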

Proposition \ref{propimp} underlines an important property---the fact that all true
elements can be justified in a non-cyclic fashion.  This makes the justification more
natural, reflecting the non-cyclic process employed in constructing the minimal 
answer set (e.g., using the iterations of $T_P$) and the well-founded model
(e.g., using the characterization in~\cite{BrassDFZ01}). This also gracefully
extends a similar  property satisfied by the justifications under well-founded
semantics used in~\cite{RoychoudhuryRR00}. Note that the only
cycles possibly present in the justifications are positive cycles associated 
to (mutually dependent) false elements---this is an unavoidable situation due the 
semantic characterization in well-founded and answer set semantics (e.g., unfounded sets).
A similar design choice has been made in~\cite{PemmasaniGDRR04,RoychoudhuryRR00}.

\begin{example}
Let us reconsider the following program $P$ from Example \ref{ex-wf}:
\[\begin{array}{lclclclclcl}
\texttt{a} & \uffa & \texttt{f}, \:\:not\:\texttt{b}. & \hspace{.5cm} & \texttt{b} & \uffa & \texttt{e}, \:\:not\:\texttt{a}. & \hspace{.5cm} & \texttt{e}  .\\
\texttt{f} & \uffa & \texttt{e}. & & \texttt{d} & \uffa & \texttt{c},\:\: \texttt{e}. & & \texttt{c} & \uffa & \texttt{d}, \:\texttt{f}, \:not\: \texttt{k}.\\
\texttt{k} & \uffa & \texttt{a}.
\end{array}
\]
and the answer set 
$M=\langle \{\texttt{f},\texttt{e},\texttt{b}\}, \{\texttt{a},\texttt{c},\texttt{d},\texttt{k}\}\rangle$ is an answer set of the program. 
The well-founded model of this program is 
\[
	WF_P  =  \langle \{\texttt{e},\texttt{f}\}, \{\texttt{d},\texttt{c}\}\rangle
\]
$\texttt{a}$ and $\texttt{k}$ are assumed to be false. Off-line justifications 
for $\texttt{b}^+, \texttt{f}^+, \texttt{e}^+$ and for 
$\texttt{c}^-, \texttt{d}^-, \texttt{a}^-$ with respect to $M$ and 
$M^- \setminus WF_P^- = \{\texttt{a}, \texttt{k}\}$, which do not 
contain negative cycles, are the same as those depicted 
in Figure~\ref{fig2}. $\texttt{k}^-$ has an off-line justification in which 
it is connected to $assume$ by a negative edge, as it is assumed to be false. 
\hfill $\Box$
\end{example}

\section{On-Line Justifications for ASP}
Off-line justifications provide a ``declarative trace'' for the truth values
of the atoms present in an answer set. The majority of the inference engines
for ASP construct answer sets in an incremental fashion, making choices
(and possibly undoing them) and declaratively applying the rules in the
program. Unexpected results (e.g., failure to produce any answer sets) require
a more refined view of computation. One way to address this problem is to 
refine the notion of justification to make possible the 
``declarative tracing'' of atoms w.r.t. a partially constructed 
interpretation. This is similar to debugging of imperative languages,
where breakpoints can be set and the state of the execution explored
at any point during the computation.
In this section, we introduce the concept of \emph{on-line justification},
which is generated \emph{during} the computation of an answer set and 
allows us to justify atoms w.r.t. an incomplete interpretation---that 
represents an intermediate
step in the construction of the answer set.

\subsection{Computation} \label{subsec-comp}
The concept of on-line justification is applicable to computation
models that construct answer sets in an incremental fashion, e.g.,
{\sc Smodels} and {\tt DLV} 
\cite{sim02,eiter98a,GebserKNS07,AngerGLNS05}. We can  view the computation
as a sequence of steps, each associated to a partial interpretation.
We will focus, in particular, on computation models where the
progress towards the answer set is monotonic.

\begin{definition}[General Computation]\label{genc}
Let $P$ be a program. 
A \emph{general computation} is a sequence 
$M_0, M_1, \dots, M_k$,
such that
\begin{description}
\item[\emph{(i)}] $M_0 = \langle \emptyset, \emptyset\rangle$,
\item[\emph{(ii)}] $M_0, \dots, M_{k-1}$ are partial interpretations, and
\item[\emph{(iii)}] $M_{i} \sqsubseteq M_{i+1}$ for $i=0,\dots,k-1$.
\end{description}
A \emph{general complete computation} is a computation 
$M_0, \dots, M_k$ such that $M_k$ is an answer set of $P$.
\end{definition}
In general, we do not require $M_k$---the ending point of the
computation---to be a partial interpretation, since we wish to 
model computations that can also ``fail''---i.e., $M_k^+ \cap M_k^- \neq \emptyset$. 
This is, for example, what might happen during a {\sc Smodels} computation---whenever
the {\tt Conflict} function succeeds~\cite{sim02}.

We will refer to a pair of sets of atoms as a  
{\em possible interpretation} (or {\em p-interpretation} for short). 
Clearly, each partial interpretation is a p-interpretation, but not vice versa. 
Abusing the notation, we use $J^+$ and $J^-$ to indicate the first
and second component of a p-interpretation $J$; moreover, 
$I \sqsubseteq J$ denotes that $I^+ \subseteq J^+$ and $I^- \subseteq J^-$.

Our objective is to associate a form of justification to each intermediate step
$M_i$ of a general computation. Ideally, we would like the 
justifications associated to each $M_i$ to explain truth values in the 
``same way'' as in the final off-line justification.
Since the computation model might rely on 
guessing some truth values, $M_i$ might not contain sufficient
information to develop a valid justification for each element in $M_i$.
We will identify those atoms 
for which a justification can be constructed given $M_i$. 
These atoms describe a p-interpretation $D_i \sqsubseteq M_i$. 
The computation of $D_i$ is defined based on the two operators, $\Gamma$ 
and $\Delta$, which will respectively compute $D_i^+$ and 
$D_i^-$.

\medskip

Let us start with some preliminary definitions.
Let $P$ be a program and $I$ be a p-interpretation. A set of atoms $S$ 
is called a {\em cycle w.r.t. I} if, for every $a \in S$ and for
each $r \in P$ 
such that $head(r) = a$, we have that one of the following holds:
\begin{itemize}
\item $pos(r) \cap I^- \neq \emptyset$ (rule is  falsified by $I$), or
\item $neg(r) \cap I^+ \neq \emptyset$ (rule is falsified by $I$),  or 
\item $pos(r) \cap S \ne \emptyset$ (rule is in a cycle with elements of $S$).
\end{itemize}

We can observe that, if $I$ is an interpretation, 
$S$ is a cycle w.r.t. $I$, and $M$ is an answer set with $I \sqsubseteq M$, 
then $S \subseteq M^-$---since the elements of $S$ are either falsified by 
the interpretation (and, thus, by $M$) or they are part of an unfounded set.

The set of cycles w.r.t. $I$ is denoted by $cycles(I)$. 
Furthermore, for every element $a \in {\cal A}^p \cup {\cal A}^n$,
let $PE(a,I)$ be the set of local consistent explanations of $a$ 
w.r.t. $I$ and $\emptyset$---i.e.,
LCEs that do not require any assumptions and that build on the interpretation
$I$.

We are now ready to define the operators that will compute the $D_i$ subset of the
p-interpretation $M_i$. 

\begin{definition}
Let $P$ be a program and $I \sqsubseteq J$ be two p-interpretations. We define 
\[
\begin{array}{lll}
\Gamma_I(J) &  = & I^+ \cup \{head(r) \in J^+ \mid r\in P, I \models body(r)\}   \\
\Delta_I(J) &  = & I^- \:\cup\: \{a \in J^-\mid PE(a^-,I) \ne \emptyset\} 
		\:\cup\: \bigcup \{S \mid S \in cycles(I), S \subseteq J^- \}  \\
\end{array}
\]
\end{definition}
Intuitively, for $I \sqsubseteq J$, $\Gamma_I(J)$ is a set of 
atoms that are true in $J$ and they will remain true
in every answer set extending $J$, if $J$ is a partial
interpretation.
The set $\Delta_I(J)$ contains atoms that are  false in $J$ and
in each answer set that extends $J$.
In particular, if
$I$ is the set of \emph{``justifiable''} atoms---i.e., atoms for which
we can construct a justification---and $J$ is 
the result of the current computation step, then we have  that 
$\langle \Gamma_I(J), \Delta_I(J) \rangle$ is a new interpretation satisfying 
the following two properties:
\begin{itemize} 
\item $I \sqsubseteq \langle \Gamma_I(J), \Delta_I(J) \rangle \sqsubseteq J$, and
\item it is possible to create a justification for all elements in $\langle \Gamma_I(J), \Delta_I(J) \rangle$.
\end{itemize}
Observe that it is not necessarily true that $\Gamma_I(J)=J^+$ and $\Delta_I(J)=J^-$.
This  means
that there may be elements in the current step of computation for which it is not possible (yet)
to construct a justification.
This reflects the practice  of guessing literals and propagating these guesses 
in the computation of answer sets, implemented by several solvers (based on variations of the
Davis-Putnam-Logemann-Loveland procedure~\cite{DavisLL62}).

\smallskip
 
We are now ready to specify how the set $D_i$ is computed. 
Let $WF_P = \langle W^+,W^- \rangle$ be the well-founded model of $P$ 
and let $J$ be a p-interpretation.\footnote{Remember that
$ {\cal TA}_P(J) = \{ a \:|\: a \in NANT(P)\:\wedge\: a \in J^- \:\wedge\: 
				a \not\in (WF_P^+\cup WF_P^-)\}$.}

\[
\begin{array}{lllllllll}
\Gamma^0(J) &  = & \Gamma_{\langle \emptyset,\emptyset\rangle}(J) & \hspace{.65cm}& & \Delta^0(J) & =  & {\cal TA}_P(J)
						  \cup \Delta_{\langle \emptyset, \emptyset \rangle} (J) \\

\Gamma^{i+1}(J) & = &\Gamma_{I_i}(J) & & & \Delta^{i+1}(J) & = & \Delta_{I_i}(J)\\
\multicolumn{3}{l}{\small (\textnormal{where } 
		I_i = \langle \Gamma^{i}(J), \Delta^{i}(J) \rangle)}\\
\end{array}
\]
Intuitively, 
\begin{enumerate}
\item The iteration process starts by collecting the facts of $P$ ($\Gamma^0$) and
	all those elements that are false either because there are 
	no defining rules for them or 	
	because they have been chosen to be false in the construction of $J$. All these
	elements can be easily provided with justifications.
\item The successive iterations expand the set of known justifiable elements from
	$J$ using $\Gamma$ and $\Delta$.
\end{enumerate}
Finally, we repeat the iteration process until a fixpoint is reached:
\[
\Gamma(J) = \bigcup_{i=0}^\infty \Gamma^i(J) \:\:\:\:\textnormal{ and } \:\:\:\:
\Delta(J) = \bigcup_{i=0}^\infty \Delta^i(J)
\]
Because $\Gamma^i(J) \subseteq \Gamma^{i+1}(J) \subseteq J^+ $ and 
$\Delta^i(J)  \subseteq \Delta^{i+1}(J) \subseteq J^-$ (recall that $I \sqsubseteq J$), 
we know that both $\Gamma(J)$ and $\Delta(J)$ are 
well-defined. We can prove the following: 

\begin{proposition}\label{gammadelta}
For a program $P$, we have that: 
\begin{itemize}
\item $\Gamma$ and $\Delta$ maintains the consistency of $J$, i.e., 
if $J$ is an interpretation, then $\langle \Gamma(J), \Delta(J) \rangle$ 
is also an interpretation;
\item $\Gamma$ and $\Delta$ are monotone w.r.t the argument $J$, i.e.,
if $J \sqsubseteq J'$ then $\Gamma(J) \subseteq \Gamma(J')$ and $\Delta(J) \subseteq \Delta(J')$;
\item $\Gamma(WF_P) = WF_P^+ $ and $\Delta(WF_P) = WF_P^-$; and 
\item If $M$ is an answer set of $P$, then 
	$\Gamma(M) = M^+ $ and $\Delta(M) = M^-$.
\end{itemize}
\end{proposition}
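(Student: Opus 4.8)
The plan is to prove each of the four claims in Proposition~\ref{gammadelta} largely by direct appeal to the definitions of $\Gamma$, $\Delta$, and the well-founded computation in Definition~\ref{kui}, with the last two items carrying the real content. First I would dispense with the \emph{consistency} claim: since $I \sqsubseteq J$ is maintained at every stage of the iteration (each $\Gamma^i(J) \subseteq J^+$ and $\Delta^i(J) \subseteq J^-$, as already noted in the text preceding the statement), we have $\Gamma(J) \subseteq J^+$ and $\Delta(J) \subseteq J^-$. If $J$ is an interpretation then $J^+ \cap J^- = \emptyset$, so $\Gamma(J) \cap \Delta(J) = \emptyset$ as well, giving that $\langle \Gamma(J), \Delta(J)\rangle$ is an interpretation. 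For \emph{monotonicity}, I would argue by induction on $i$ that $J \sqsubseteq J'$ implies $\Gamma^i(J) \subseteq \Gamma^i(J')$ and $\Delta^i(J) \subseteq \Delta^i(J')$; the base case uses that ${\cal TA}_P$ and $\Gamma_{\langle\emptyset,\emptyset\rangle},\Delta_{\langle\emptyset,\emptyset\rangle}$ are monotone in their $J$-argument (each defining condition only tests membership in $J^+$ or $J^-$ and is preserved under $\sqsubseteq$), and the inductive step uses that $\Gamma_I(J)$ and $\Delta_I(J)$ are monotone in both $I$ and $J$ together with the inductive hypothesis $I_i(J) \sqsubseteq I_i(J')$. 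Taking unions over $i$ preserves the inclusions.

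The heart of the proof is the third claim, $\Gamma(WF_P) = WF_P^+$ and $\Delta(WF_P) = WF_P^-$, and I would establish it by matching the $\Gamma^i/\Delta^i$ iteration against the $K_i/U_i$ sequence of Definition~\ref{kui}. The inclusions $\Gamma(WF_P) \subseteq W^+$ and $\Delta(WF_P) \subseteq W^-$ already follow from $\Gamma(J) \subseteq J^+$, $\Delta(J) \subseteq J^-$ applied to $J = WF_P$. For the reverse inclusions I would show that every atom of $W^+$ eventually enters some $\Gamma^i(WF_P)$ and every atom of $W^-$ eventually enters some $\Delta^i(WF_P)$. The positive direction mirrors the $T_P$-style closure: an atom in $W^+ = K_j$ is derived through finitely many applications of the immediate-consequence operator restricted so that negative literals are not in $U_{j-1}$; since $WF_P \models body(r)$ for the relevant rules (true positive body atoms lie in $W^+$, negated atoms lie in $W^-$), the definition of $\Gamma_I$ pulls each such head into the next iterate, and an induction on the derivation length closes the argument. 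The negative direction is more delicate: an atom in $W^- = {\cal A}\setminus U_j$ is false in the well-founded model either because a positive LCE witnessing its falsity exists (captured by the $PE(a^-,I)\ne\emptyset$ clause of $\Delta_I$) or because it belongs to an unfounded set with respect to the stable part of the model (captured by the $cycles(I)$ clause, since an unfounded set is exactly a cycle $S$ w.r.t.\ $I$ with $S \subseteq J^-$). Here the main obstacle is showing that the greatest-unfounded-set content of $W^-$ is exactly recovered by the cycle clause: I would need to verify that the set ${\cal A}\setminus U_i$ produced at each well-founded stage is a union of cycles w.r.t.\ the current $I_i = \langle \Gamma^i(WF_P),\Delta^i(WF_P)\rangle$, using that $U_i = lfp(T_{P,K_i})$ encodes precisely the atoms \emph{not} in the greatest unfounded set, so its complement satisfies the three falsifying conditions defining a cycle.

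For the fourth claim, with $M$ an answer set, I would reuse the machinery just developed together with Proposition~\ref{prop1} and the observation that $WF_P^+ \subseteq M^+$, $WF_P^- \subseteq M^-$. Again the easy inclusions $\Gamma(M)\subseteq M^+$, $\Delta(M)\subseteq M^-$ are immediate. For $M^+ \subseteq \Gamma(M)$: since $M$ is an answer set, $M^+$ is the least Herbrand model of the reduct $P^M$, so every true atom is derived by iterated $T_{P^M}$; I would lift this derivation to $\Gamma$, noting that for each rule used, $M \models body(r)$ (the positive body in $M^+$ and the negated atoms in $M^-$), so $\Gamma_I$ admits the head once its positive body is already in $I^+$, and induction on the $T_{P^M}$-rank completes it. For $M^- \subseteq \Delta(M)$: the ${\cal TA}_P(J)$ seed already places the tentative assumptions into $\Delta^0(M)$, and by Proposition~\ref{prop1} the well-founded model of $NR(P,{\cal TA}_P(M))$ equals $M$; thus every remaining false atom of $M$ is forced false by the well-founded computation of this negative reduct, which means it is again captured by either a positive LCE (the $PE$ clause) or a cycle/unfounded set (the $cycles$ clause) relative to the growing $I_i$. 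The subtle point to check is the interaction between the assumption seed and the cycle clause, namely that once ${\cal TA}_P(M)$ is assumed false, the operators reproduce the entire well-founded closure of the reduct; I would handle this by showing the $\Gamma/\Delta$ fixpoint over $M$ coincides with the $\Gamma/\Delta$ fixpoint over $WF_{NR(P,{\cal TA}_P(M))} = M$, reducing the fourth claim to the third applied to the negative reduct.
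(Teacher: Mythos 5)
Your proof is correct in outline, but it follows a genuinely different path from the paper on the two substantive items. For the well-founded case, the paper never synchronizes with the $(K_i,U_i)$ sequence of Definition~\ref{kui}; instead it defines a one-step program transformation $\rightarrow_{\langle \Gamma,\Delta\rangle}$ (delete rules falsified by $\langle\Gamma,\Delta\rangle$, normalize surviving bodies), shows each such step is realizable by the Brass et al.\ transformations $\{P,N,S,F,L\}$ already imported for Lemma~\ref{good} (Lemma~\ref{l1}), iterates it along $\Gamma^i(WF_P),\Delta^i(WF_P)$ to a sequence $P_0,P_1,\ldots$ with $\Gamma^{i+1}(WF_P)=facts(P_{i+1})$ and $\Delta^{i+1}(WF_P)=nohead(P_{i+1})$ at the fixpoint (Lemmas~\ref{l2new} and~\ref{l3}), and reads the claim off the normal-form characterization $W^+=facts(P^*)$, $W^-={\cal A}\setminus heads(P^*)$. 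Your direct matching against $K_i/U_i$ is viable and more self-contained (it needs only the background section), but the interleaving you flag is essential, not cosmetic: $\Gamma_I$ requires $neg(r)\subseteq I^-$, not merely $neg(r)\subseteq W^-$, and ${\cal A}\setminus U_i$ satisfies the cycle conditions only after $K_i\subseteq\Gamma^j(WF_P)$ has been secured, so the two inductions must strictly alternate ($K_0$ into $\Gamma$, then ${\cal A}\setminus U_0$ into $\Delta$, then $K_1$, and so on, using that the $U_i$ decrease so ${\cal A}\setminus U_i\subseteq W^-$). For answer sets, the paper argues directly (Lemma~\ref{prop-ans}) by contradiction via Fages' well-supported-model characterization: a level-minimal atom of $M^+\setminus\Gamma(M)$ is impossible, and from an atom of $M^-\setminus\Delta(M)$ it manufactures a set $C\in cycles(\langle\Gamma(M),\Delta(M)\rangle)$ with $C\subseteq M^-$, contradicting the fixpoint. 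Your reduction of this item to the third one applied to $NR(P,{\cal TA}_P(M))$ via Proposition~\ref{prop1} is a clean alternative that moreover explains the design of the ${\cal TA}_P(J)$ seed in $\Delta^0$; it buys structural economy at the cost of verifying the transfer of LCEs and cycles between $P$ and the reduct (which goes through: rules for atoms outside ${\cal TA}_P(M)$ coincide in the two programs, and the seed neutralizes the rules that $NR$ deletes). One caution: your stand-alone $T_{P^M}$ induction for $M^+\subseteq\Gamma(M)$ has the same interleaving dependence---the negated body atoms must already sit in $\Delta^i$, not merely in $M^-$---so it is safest to draw both inclusions from the reduction rather than treating the positive direction separately. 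The first two items coincide with the paper's treatment (immediate from the definitions, and Lemma~\ref{l4}'s induction on $i$, respectively).
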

We next introduce the notion of  on-line explanation graph. 
\begin{definition}[On-line Explanation Graph]
Let $P$ be a program,  $A$ a set of atoms, $J$ a p-interpretation, and 
$a \in {\cal A}^p \cup {\cal A}^n$. 
An \emph{on-line explanation graph} $G=(N,E)$ of $a$
w.r.t. $J$ and $A$ is a $(J,A)$-based e-graph of $a$.
\end{definition}
In particular,  if $J$ is an answer set of $P$, then
 any off-line e-graph of $a$ w.r.t. $J$ and $A$ is 
	also an on-line e-graph of $a$ w.r.t. $J$ and $A$.

Observe that $\Gamma^0(J)$ contains the set of facts of $P$ that
 belongs to $J^+$, while  $\Delta^0(J)$ contains the set of atoms 
without defining rules and atoms belonging 
to positive cycles of $P$. As such, it is easy to see that, for each atom $a$ 
in $\langle \Gamma^0(J),  \Delta^0(J)\rangle$, we can construct an
e-graph for $a^+$ or $a^-$ whose nodes belong to $\Gamma^0(J) \cup \Delta^0(J)$.
Moreover:
	\begin{itemize}
	\item  if $a \in \Gamma^{i+1}(J) \setminus \Gamma^{i}(J)$, 
	then an e-graph with nodes (except $a^+$) belonging to $\Gamma^i(J) \cup \Delta^i(J)$
	can be constructed; 
	\item if $a \in \Delta^{i+1}(J) \setminus \Delta^{i}(J)$, 
	an e-graph with nodes (except $a^-$) belonging to $\Gamma^{i+1}(J) \cup \Delta^{i+1}(J)$
	can be constructed. 
	\end{itemize}
This leads to the following lemma.  

\begin{lemma}
\label{just-free}
Let $P$ be a program, $J$ a p-interpretation, and 
$A = {\cal TA}_P(J)$. The following properties hold:
\begin{itemize}
\item For each atom $a \in \Gamma(J)$ (resp. $a \in \Delta(J)$), 
there exists a \emph{safe} off-line e-graph of $a^+$ (resp. $a^-$) 
w.r.t. $J$ and $A$; 
\item for each atom $a \in J^+ \setminus \Gamma(J)$
(resp. $a \in J^- \setminus \Delta(J)$) there exists an on-line 
e-graph of $a^+$ (resp. $a^-$) w.r.t. $J$ and $A$. 
\end{itemize}
\end{lemma}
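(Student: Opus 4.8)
The plan is to prove the first (off-line) claim by induction on the \emph{stage} at which an atom is inserted into $\Gamma(J)$ or $\Delta(J)$, reusing the three constructions recorded immediately before the lemma, and to dispatch the second (on-line) claim by a one-node \emph{assume} graph. Concretely, I would assign to every $a \in \Gamma(J)\cup\Delta(J)$ a \emph{rank}, the least $i$ with $a\in\Gamma^i(J)$ or $a\in\Delta^i(J)$, and build a single global edge set by choosing, for each such atom, one local consistent explanation and wiring the corresponding edges. The off-line e-graph of a fixed $a$ is then the subgraph reachable from $a^+$ (resp. $a^-$), so that \textbf{(Relevance)} holds by construction and \textbf{(Correctness)} holds because each chosen support is, by design, an LCE w.r.t. $(J,A)$.

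For the base case I would treat the three sources of rank-$0$ atoms separately. Every fact in $\Gamma^0(J)$ gets the edge $(a^+,\top,+)$ (support $\{\top\}$); every tentative assumption in ${\cal TA}_P(J)=A$ gets $(a^-,assume,-)$ (support $\{assume\}$), which is legitimate precisely because these atoms constitute $U=A$; and every atom of $\Delta_{\langle\emptyset,\emptyset\rangle}(J)$ enters through the \emph{cycles} term, since $PE(a^-,\langle\emptyset,\emptyset\rangle)$ is empty. Thus an atom with no defining rule gets $(a^-,\bot,-)$ (support $\{\bot\}$), while the members of a positive cycle $S\subseteq J^-$ are wired to one another by positive edges among the false nodes, the only LCE available to them. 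For the inductive step I would invoke the two observations already stated: if $a\in\Gamma^{i+1}(J)\setminus\Gamma^i(J)$ there is a rule $r$ with $head(r)=a$ and $I_i\models body(r)$, so I take $body(r)$ as the LCE, drawing positive edges to $c^+$ for $c\in pos(r)\subseteq\Gamma^i(J)$ and negative edges to $c^-$ for $c\in neg(r)\subseteq\Delta^i(J)$; if $a\in\Delta^{i+1}(J)\setminus\Delta^i(J)$, it arrives either through a member of $PE(a^-,I_i)$ (whose positive atoms lie in $\Delta^i(J)$ and whose negated atoms lie in $\Gamma^i(J)$) or through a cycle $S\in cycles(I_i)$, and I select the matching LCE, whose non-special successors have rank $\le i$ apart from same-rank false partners inside $S$. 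In every case the chosen targets already carry e-graphs by the induction hypothesis, so merging them produces an e-graph for $a$.

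The two off-line side conditions are then immediate from these choices: no $p^+$ is ever wired to $assume$ (true atoms always receive a $\top$- or $body(r)$-support), and $(p^-,assume,-)$ is drawn exactly for $p\in{\cal TA}_P(J)=U$, since every other false atom is instead given a $\bot$-, $PE$-, or cycle-support. The main obstacle is \textbf{safety}, i.e.\ establishing $(b^+,b^+)\notin E^{*,+}$ for every true node of the merged graph. Here I would isolate two structural facts about the chosen edges. First, by the shape of an LCE for a negative literal, every positive edge leaving a \emph{false} node lands on a \emph{false} node, while a true node is reached from a false node only along a \emph{negative} edge; hence no positive edge ever runs from a false node into a true node. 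Second, by the $\Gamma$ construction a positive edge between two \emph{true} nodes always leads from rank $i+1$ to rank $\le i$, strictly decreasing the rank. Together these imply that any positive path meeting a true node stays among true nodes and strictly decreases rank, so it cannot close into a cycle; safety follows for the global graph and therefore for each reachable subgraph.

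For the second bullet, for $a\in J^+\setminus\Gamma(J)$ I would take the single edge $(a^+,assume,+)$, and for $a\in J^-\setminus\Delta(J)$ the single edge $(a^-,assume,-)$. Since $a\in J^+$ (resp. $a\in J^-$), $\{assume\}$ is an LCE of $a^+$ (resp. $a^-$) w.r.t. $(J,A)$, so each one-node graph is a legitimate $(J,A)$-based e-graph, hence an on-line e-graph of $a$; no safety or off-line restriction is imposed on on-line graphs, so nothing further is needed.
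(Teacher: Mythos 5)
Your proposal is correct and follows essentially the same route as the paper's proof: the paper also builds a global dependency graph by induction over the $\Gamma^i/\Delta^i$ stages (its sequence $\Sigma_0 \subseteq \Sigma_1 \subseteq \cdots$ is exactly your ranked global edge set), handles the base case by the same three-way split into $\top$-supported facts, $\bot$/assume-supported atoms and positive-cycle members, extracts reachable subgraphs for relevance and correctness, and obtains safety from the same two facts you isolate (positive edges out of false nodes stay among false nodes, and positive edges between true nodes strictly decrease the stage). The second bullet is likewise dispatched in the paper by the same one-edge \emph{assume} graphs.
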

We will now discuss how the above proposition can be utilized in defining a notion 
called {\em on-line justification}. To this end, we associate to each partial 
interpretation $J$ a snapshot $S(J)$:
\begin{definition}\label{snapdef}
A \emph{snapshot} of a p-interpretation $J$ 
is a tuple $S(J) = \langle \textnormal{\em Off}(J), On(J), D \rangle$, where 

\begin{list}{$\bullet$}{\topsep=3pt \itemsep=4pt \parsep=0pt \leftmargin=12pt}
\item $D = \langle \Gamma(J), \Delta(J) \rangle$,

\item For each $a$ in $\Gamma(J)$,\\
	\centerline{$\textnormal{\em Off}(J)$ contains exactly one safe off-line e-graph of $a^+$
	w.r.t. $J$
	and ${\cal TA}_P(J)$;}

\item For each $a$ in $\Delta(J)$,\\
	\centerline{$\textnormal{\em Off}(J)$ contains exactly one safe off-line e-graph of
	$a^-$ w.r.t. $J$
	and ${\cal TA}_P(J)$;}

\item For each $a \in J^+\setminus\Gamma(J)$,\\
	 \centerline{$On(J)$ contains exactly one on-line  
	e-graph of $a^+$ w.r.t. $J$ and ${\cal TA}_P(J)$;}

\item For each $a \in J^-\setminus \Delta(J)$,\\
	 \centerline{$On(J)$ contains exactly one on-line  
	e-graph of $a^-$ w.r.t. $J$ and ${\cal TA}_P(J)$.}
\end{list}

\end{definition}

\begin{definition}[On-line Justification]
Given a computation $M_0, M_1, \dots, M_k$, an \emph{on-line justification} of the computation is
a sequence of snapshots $S(M_0), S(M_1), \dots, S(M_k)$.
\end{definition}

It is worth to point out that an on-line justification can 
be obtained in answer set solvers employing the computation model 
described in Definition \ref{genc}. This will be demonstrated in the next section 
where we discuss the computation of on-line justifications in the {\sc Smodels} 
system. We next illustrate the concept of an on-line justification. 

\begin{example}
Let us consider the program $P$ containing
\[\begin{array}{lclclclclcl}
\texttt{s} &\uffa& \texttt{a},\: not\: \texttt{t}. & \hspace{0.5cm} &       \texttt{a} &\uffa & \texttt{f},\: not\: \texttt{b}. & \hspace{0.5cm} &
\texttt{b} & \uffa & \texttt{e},\: not\: \texttt{a}. \\
 \texttt{e}. &  & & & \texttt{f} & \uffa & \texttt{e}.
  \end{array}
\]
Two possible general computations of $P$ are
\[
\begin{array}{llllllll} 
M^1_0 = \langle \{\texttt{e},\texttt{s}\}, \emptyset \rangle & \hspace{0.2cm}\mapsto \hspace{0.2cm}&
M^1_1 = \langle \{\texttt{e},\texttt{s},\texttt{a}\}, \{\texttt{t}\} \rangle &  \hspace{0.2cm}\mapsto \hspace{0.2cm} &
M^1_2 = \langle \{\texttt{e},\texttt{s},\texttt{a},\texttt{f}\}, \{\texttt{t},\texttt{b}\} \rangle   \\
M^2_0 = \langle \{\texttt{e},\texttt{f}\}, \emptyset \rangle & \multicolumn{1}{c}{\mapsto} & 
M^2_1 = \langle \{\texttt{e},\texttt{f}\}, \{\texttt{t}\} \rangle & \multicolumn{1}{c}{\mapsto} & 
M^2_2 = \langle \{\texttt{e},\texttt{f},\texttt{b},\texttt{a}\}, \{\texttt{t},\texttt{a},\texttt{b},\texttt{s}\} \rangle 
\end{array}
\]
The first computation is a complete computation leading to an answer set of $P$ while 
the second one is not. 

An on-line justification for the first computation is given next:
\[
\begin{array}{llllllll} 
S(M^1_0) & = & \langle X_0, Y_0, \langle \{\texttt{e}\},\emptyset \rangle \rangle \\
S(M^1_1) & = & \langle X_0 \cup X_1, Y_0 \cup Y_1, \langle \{\texttt{e}\},\{\texttt{t}\} \rangle \rangle \\
S(M^2_1) & = & \langle X_0 \cup X_1 \cup X_2, \emptyset, M_2^1 \rangle \\
\end{array}
\]
where (for the sake of simplicity we report only the edges of the graphs):
\[\begin{array}{lcl}
X_0 &= & \{ (\texttt{e}^+,\top,+)\} \\
Y_0 &=&  \{ (\texttt{s}^+,assume,+)\}\\
X_1 & = &  \{(\texttt{t}^-,\bot,-)\} \\
Y_1 &=& \{(\texttt{a}^+,assume,+)\}\\
X_2 &=& \{ (\texttt{f}^+,\texttt{e}^+,+), (\texttt{s}^+,\texttt{a}^+,+), (\texttt{s}^+,\texttt{t}^-,-), (\texttt{a}^+,\texttt{f}^+,+), (\texttt{a}^+,\texttt{b}^-,-), (\texttt{b}^-,assume,-)\}\\
  \end{array}
\]
An on-line justification for the second computation is:
\[\begin{array}{lcl}
S(M^2_0) & = & \langle X_0, Y_0, \langle \{\texttt{e},\texttt{f}\},\emptyset\rangle \rangle \\
S(M^2_1) & = & \langle X_0 \cup X_1, Y_0, \langle \{\texttt{e},\texttt{f}\},\{\texttt{t}\}\rangle \rangle \\
S(M^2_2) & = & \langle X_0 \cup X_1 \cup X_2, Y_0\cup Y_2, M_2^2 \rangle
  \end{array}
\]
where:
\[\begin{array}{lcl}
X_0 & = & \{ (\texttt{e}^+, \top,+), (\texttt{f}^+,\texttt{e}^+,+) \}\\
Y_0 & = & \emptyset\\
X_1 & = & \{ (\texttt{t}^-,\bot,-)\}\\
Y_1 & = & \emptyset\\
X_2 & = & \{ (\texttt{a}^+,\texttt{f}^+,+), (\texttt{a}^+,\texttt{b}^-,-), (\texttt{b}^+,\texttt{e}^+,+), (\texttt{b}^+,\texttt{a}^-,-)\}\\
Y_2 & = & \{ (\texttt{a}^-, assume,-), (\texttt{b}^-, assume,-)\}\\
  \end{array}
\]
\hfill$\Box$
\end{example}
We can relate the on-line justifications and off-line justifications as follows.

\begin{lemma}\label{conserve} 
Let $P$ be a program, $J$ an interpretation, and $M$ an answer set such 
that $J \sqsubseteq M$. For every atom $a$, if $(N,E)$ is a safe off-line e-graph 
of $a$ w.r.t. $J$ and $A$ where $A = J^- \cap {\cal TA}_P(M)$ then 
it is an off-line justification of $a$ w.r.t. $M$ and ${\cal TA}_P(M)$.
\end{lemma}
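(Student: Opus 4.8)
The plan is to verify, one defining condition at a time, that $(N,E)$ meets the requirements of an off-line justification of $a$ w.r.t. $M$ and ${\cal TA}_P(M)$, reusing the hypothesis that it already satisfies the corresponding conditions w.r.t. $(J,A)$ with $A = J^-\cap{\cal TA}_P(M)$. The first observation is that several of these conditions are \emph{purely graph-structural} and refer neither to the interpretation nor to the assumption set: the e-graph axioms of Definition~\ref{egraph}, the \textbf{Relevance} clause (every node reachable from $a$), the safety clause ($(b^+,b^+)\notin E^{*,+}$ for all $b^+\in N$), and the requirement that no $p^+\in N$ has $(p^+,\textit{assume},+)\in E$. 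Since the graph is literally unchanged, all of these transfer verbatim. I also record that ${\cal TA}_P(M)\in Assumptions(P,M)$ by Proposition~\ref{prop1}, so the target notion is well defined, and I note the inclusions $J^+\subseteq M^+$ and $J^-\subseteq M^-$ (from $J\sqsubseteq M$) together with $A\subseteq{\cal TA}_P(M)\subseteq M^-$, which drive everything else.

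The substantive part is \textbf{Correctness}: for every $c\in N\setminus\{\textit{assume},\top,\bot\}$ I must show that $support(c,G)$, known to be an LCE of $c$ w.r.t. $(J,A)$, is also an LCE of $c$ w.r.t. $(M,{\cal TA}_P(M))$. I would split on the form of $c$. If $c=b^+$, the no-positive-assume condition rules out $support=\{\textit{assume}\}$, so $support=body(r)$ for a rule $r$ with $head(r)=b$, with $b\in J^+$, $support\cap{\cal A}\subseteq J^+$ and $\{c'\mid \naf c'\in support\}\subseteq J^-\cup A$; the inclusions above promote these to $b\in M^+$, $support\cap{\cal A}\subseteq M^+$ and $\{c'\mid \naf c'\in support\}\subseteq M^-$, which is exactly a positive LCE w.r.t. $(M,{\cal TA}_P(M))$. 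If $c=b^-$ with $support=\{\textit{assume}\}$, the edge $(b^-,\textit{assume},-)$ forces $b\in A\subseteq{\cal TA}_P(M)\subseteq M^-$, so $\{\textit{assume}\}$ is again a valid negative LCE. If $c=b^-$ with a rule-based $support$, the membership parts transfer by the same monotonicity, and the crucial point is that the minimality/falsification clause of Definition~\ref{lcedef}(2) mentions only $P$, $b$ and $support$ — not $J$ or $M$ — so it is literally the same requirement in both settings and needs no re-proof.

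The last condition, and the only place I expect real friction, is the equivalence $(p^-,\textit{assume},-)\in E \iff p\in{\cal TA}_P(M)$, read over the nodes actually present in the graph. The forward direction is immediate: such an edge gives $p\in A\subseteq{\cal TA}_P(M)$. The backward direction is delicate because $A$ may be strictly smaller than ${\cal TA}_P(M)$, so I would argue by contradiction. Suppose $p\in{\cal TA}_P(M)$ and $p^-\in N$. By Correctness $support(p^-,G)$ is an LCE of $p^-$ w.r.t. $(J,A)$, which can exist only if $p\in J^-\cup A$; but if $p\notin A$ then, since $A=J^-\cap{\cal TA}_P(M)$ and $p\in{\cal TA}_P(M)$, we also get $p\notin J^-$, contradicting $p\in J^-\cup A$. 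Hence $p\in A$, and the hypothesis on $(N,E)$ w.r.t. $(J,A)$ supplies $(p^-,\textit{assume},-)\in E$. Assembling the verified items shows $(N,E)\in{\cal E}(a,M,{\cal TA}_P(M))$ and is safe, i.e., it is an off-line justification of $a$ w.r.t. $M$ and ${\cal TA}_P(M)$, as required.
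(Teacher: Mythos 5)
Your proof is correct and takes essentially the same route as the paper, whose entire argument is the one-liner that the result is ``obvious from the definition of off-line e-graph and from the fact that $J^- \cap {\cal TA}_P(M) \subseteq {\cal TA}_P(M)$''; you have simply carried out the condition-by-condition verification that this remark compresses. Your handling of the backward direction of $(p^-,\textit{assume},-)\in E \iff p\in{\cal TA}_P(M)$ --- noting that a negative LCE w.r.t.\ $(J,A)$ can exist only for $p\in J^-\cup A$, so a node $p^-$ with $p\in{\cal TA}_P(M)\setminus A$ cannot occur in the graph at all --- is precisely the one delicate point that the paper's proof leaves implicit, and it is where the specific choice $A=J^-\cap{\cal TA}_P(M)$ is actually used.
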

This leads to the following proposition.

\begin{proposition} \label{on-off}
Let $M_0, \ldots, M_k$ be a general complete computation and 
$S(M_0), \ldots, S(M_k)$ be an on-line justification of the computation.
Then, for each atom $a$ in $M_k$, the  e-graph of $a$ in $S(M_k)$ is 
an off-line justification of $a$ w.r.t. $M_k$ and ${\cal TA}_P(M_k)$.
\end{proposition}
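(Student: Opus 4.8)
The plan is to reduce the statement to the already-established Lemma~\ref{conserve} by exploiting the fact that $M_k$, being the endpoint of a \emph{complete} computation, is an answer set. First I would invoke Proposition~\ref{gammadelta}: since $M_k$ is an answer set of $P$, we have $\Gamma(M_k) = M_k^+$ and $\Delta(M_k) = M_k^-$. Consequently $M_k^+ \setminus \Gamma(M_k) = \emptyset$ and $M_k^- \setminus \Delta(M_k) = \emptyset$, so in the snapshot $S(M_k) = \langle \textnormal{Off}(M_k), On(M_k), D\rangle$ the on-line component $On(M_k)$ is empty, and for \emph{every} atom $a$ in $M_k$ the e-graph of $a$ promised by Definition~\ref{snapdef} lies in $\textnormal{Off}(M_k)$. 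By that definition this graph---call it $(N,E)$---is a \emph{safe} off-line e-graph of $a^+$ (if $a \in M_k^+$) or of $a^-$ (if $a \in M_k^-$) w.r.t. $M_k$ and ${\cal TA}_P(M_k)$.

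It then remains to upgrade ``safe off-line e-graph w.r.t. $M_k$ and ${\cal TA}_P(M_k)$'' to ``off-line justification w.r.t. $M_k$ and ${\cal TA}_P(M_k)$.'' I would do this by applying Lemma~\ref{conserve} with the instantiation $J = M = M_k$. The hypothesis $J \sqsubseteq M$ holds trivially, and the assumption set used by the lemma is $A = J^- \cap {\cal TA}_P(M) = M_k^- \cap {\cal TA}_P(M_k)$. Since every element of ${\cal TA}_P(M_k)$ is by definition a member of $M_k^-$, we have ${\cal TA}_P(M_k) \subseteq M_k^-$, hence $A = {\cal TA}_P(M_k)$. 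Thus $(N,E)$, being a safe off-line e-graph of $a$ w.r.t. $M_k$ and exactly this $A$, satisfies the hypotheses of Lemma~\ref{conserve}, whose conclusion is precisely that $(N,E)$ is an off-line justification of $a$ w.r.t. $M_k$ and ${\cal TA}_P(M_k)$. Implicit here is that ${\cal TA}_P(M_k) \in Assumptions(P,M_k)$, which is guaranteed by Proposition~\ref{prop1}, so that the notion of off-line justification w.r.t. this assumption set is well defined.

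The argument is essentially a matter of matching hypotheses, so I do not anticipate a deep obstacle. The one point that requires care---and the only place where ``completeness'' of the computation is used---is the collapse $\Gamma(M_k)=M_k^+$, $\Delta(M_k)=M_k^-$ supplied by Proposition~\ref{gammadelta}: this is what guarantees that the relevant e-graph for each atom is genuinely stored in the off-line component $\textnormal{Off}(M_k)$ rather than only in the on-line component, so that Lemma~\ref{conserve} is applicable to it at all. The secondary check is the reduction $A = M_k^- \cap {\cal TA}_P(M_k) = {\cal TA}_P(M_k)$, which follows directly from the definition of tentative assumptions.
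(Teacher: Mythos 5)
Your proof is correct and takes essentially the same route as the paper, whose own proof derives the result immediately from Lemma~\ref{conserve}, the construction of the snapshots, and the fact that $M_k$ is an answer set. Your explicit appeal to Proposition~\ref{gammadelta} to get $\Gamma(M_k)=M_k^+$ and $\Delta(M_k)=M_k^-$ (so that $On(M_k)=\emptyset$ and every atom's e-graph lies in $\textnormal{Off}(M_k)$), together with the check $M_k^-\cap{\cal TA}_P(M_k)={\cal TA}_P(M_k)$, merely spells out what the paper's terse proof leaves implicit.
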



\section{{\sc Smodels} On-line Justifications}\label{smo}
The notion of on-line justification presented in the previous section
is very general, to fit the needs of different answer set solver
implementations that follow the computation model presented in 
Subsection \ref{subsec-comp}. 
In this section, we illustrate how the  notion of on-line justification
has been specialized to (and implemented in)  a specific computation 
model---the one used in {\sc Smodels}
\cite{sim02}. This allows us to define an incremental version 
of on-line justification---where the specific steps 
performed by {\sc Smodels} are used to guide the incremental construction of 
the
justification. The
choice of {\sc Smodels} was dictated by availability
of its source code and its elegant design.

We begin with an overview of the algorithms employed by {\sc Smodels}. 
The
following description has been adapted from \cite{GiunchigliaM05,sim02}.
Although more abstract than the concrete implementation, and without various 
implemented features (e.g., heuristics, lookahead), it is sufficiently faithful to 
capture the spirit of our approach, and to guide the implementation
(see Section~\ref{imple}).

\subsection{An Overview of {\sc Smodels}' Computation}
We propose a description of the {\sc Smodels} algorithms based on
a composition of state-transformation operators.
In the following, we say that an interpretation $I$ does
not satisfy the body of a rule $r$ (i.e., $body(r)$ is false in $I$) if
$(pos(r) \cap I^- ) \cup (neg(r) \cap I^+) \ne \emptyset$.

\subsubsection*{\underline{\sc AtLeast} Operator:}
The $AtLeast$ operator
is used to expand a partial interpretation $I$ in such a way 
that each answer
set $M$ of $P$ that ``agrees'' with $I$---i.e., the elements in
$I$ have the same truth value in $M$ (or $I \sqsubseteq M$)---also 
agrees with the expanded interpretation. 

\noindent
Given a program $P$ and a partial interpretation $I$, we 
define the intermediate operators $AL_P^1, \dots, AL_P^4$ as follows:
\begin{itemize}
	\item {\bf Case 1.} if  $r\in P$, $head(r)\notin I^+$, $pos(P) \subseteq I^+$ and
			$neg(P) \subseteq I^-$ then
			\[\begin{array}{lcr} AL_P^1(I)^+=I^+\cup\{head(r)\} & \:\:\:\textit{and}\:\:\:&
			AL_P^1(I)^-= I^-\end{array}\]
	\item {\bf Case 2.}  if $a \notin I^+\cup I^-$ and 
			$\forall r\in P. (head(r)= a \Rightarrow \textit{$body(r)$ is false in $I$})$, 
			then
			$$AL_P^2(I)^+ = I^+\:\:\:\textit{ and }\:\:\: 
			AL_P^2(I)^- = I^- \cup \{a\}$$
	\item {\bf Case 3.}  if  $a\in I^+$ and $r$ is the only rule in $P$ 
			with $head(r)=a$ and whose body is not false in $I$ then
			\[AL_P^3(I)^+=I^+\cup pos(r) \:\:\:\textit{and}\:\:\: 
			AL_P^3(I)^- = I^- \cup neg(r)\]
	\item{\bf Case 4.}  if $a\in I^-$, $head(r) = a$, and 
\begin{itemize}
	\item if $pos(r) \setminus I^+ = \{b\}$ 
		then
			\[
			AL_P^4(I)^+ = I^+  
			\:\:\:\textit{and}\:\:\: 
			AL_P^4(I)^- = I^- \cup \{b\} 
			\]

	\item if $neg(r) \setminus I^+ = \{b\}$ 
		then
			\[
			AL_P^4(I)^+ = I^+ \cup \{b\}  
			\:\:\:\textit{and}\:\:\: 
			AL_P^4(I)^- = I^- 
			\]
\end{itemize}

\end{itemize}
Given a program $P$ and an interpretation $I$, 
$AL_P(I) = AL_P^i(I)$ if $AL_P^i(I) \neq I$ and $\forall j < i. \: AL_P^j(I)=I$ ($1 \leq i \leq 4$); otherwise,
$AL_P(I) = I$.

\subsubsection*{\underline{\sc AtMost} Operator:}
The $AtMost_P$ operator recognizes atoms that are defined exclusively
as mutual positive dependences (i.e., ``positive loops'')---and falsifies them.
Given a set of atoms  $S$, the operator $AM_P$ is defined as 
$AM_P(S) = S \cup \{head(r) \:|\: r\in P\wedge pos(r)\subseteq S\}$.

Given an interpretation $I$, the $AtMost_P(I)$ operator is defined as
$$
AtMost_P(I)^+  =   I^+ 
\:\:\:\textit{and}\:\:\: 
AtMost_P(I)^- =  I^- \cup \{p \in {\cal A} \:|\: 
	p \not\in \bigcup_{i\geq 0}S_i\}
$$ 
where $S_0   = I^+$ and $S_{i+1}  =  AM_P(S_i)$.

\subsubsection*{\underline{\sc Choose} Operator:}
This operator is used to randomly select an atom that is
unknown in a given  interpretation.
Given a partial interpretation $I$, $choose_P$
returns an atom of $\cal A$ such that 
$$ choose_P(I) \not\in I^+ \cup I^- \:\:\:\textnormal{ and }\:\:\: 
choose_P(I) \in NANT(P) \setminus (WF_P^+\cup WF_P^-).$$

\subsubsection*{\underline{\sc Smodels} Computation:}
Given  an interpretation $I$, we define the
transitions: 
\[\begin{array}{lclcl}
I & \mapsto_{AL^c}& I' & \hspace{.5cm} &\left[\begin{array}{l}
				\textit{If $I' = AL^c_P(I)$, $c \in \{1, 2, 3, 4\}$}
								      \end{array}\right.\\
&&\\
I & \mapsto_{atmost}& I' & \hspace{.5cm} &\left[\begin{array}{l}
				\textit{If $I'=AtMost_P(I)$}
					\end{array}\right.\\
&&\\
I & \mapsto_{choice}& I' & \hspace{.5cm} &\left[\begin{array}{l}
				\textit{\small If $I'=\langle I^+\cup\{choose_P(I)\}, I^-\rangle$ or }\\
				\textit{\small $I'=\langle I^+, I^-\cup\{choose_P(I)\}\rangle$} 
						\end{array}\right.
  \end{array}
\]
If there is an $\alpha$ in  $\{AL^1, AL^2, AL^3, AL^4, atmost, choice\}$
such that $I \mapsto_{\alpha} I'$, then we will simply denote this fact
with   $I \mapsto I'$. 


\begin{figure}[htb]
\begin{center}
\begin{minipage}[c]{.48\textwidth}
\begin{center}
\fbox{\begin{minipage}[t]{\textwidth}
{\footnotesize \begin{tabbing} 
ii\=iiii\=iiiii\=iiiii\=iiiii\=iiii\=iiiii\=iiiii\=iiiii\kill
\>{\bf function} \emph{smodels}($P$):\\
\> $S$ = $\langle \emptyset,\emptyset\rangle$;\\
\>{\bf loop}\\
\>\> $S$ = \emph{expand}($P$, $S$);\\
\>\> {\bf if} ($S^+ \cap S^- \neq \emptyset$) {\bf then} \\
\>\>\> {\bf fail};\\
\>\> {\bf if} ($S^+\cup S^- = {\cal A}$) {\bf then}\\
\>\>\> {\bf success}($S$);\\
\>\> {\bf pick either }\>\>\>\> \textit{\% non-deterministic choice}\\
\>\>\> $S^+$ = $S^+ \cup \{ choose(S) \}$ \textbf{  or} \\
\>\>\> $S^-$ = $S^- \cup \{ choose(S) \}$\\
\>{\bf endloop};
\end{tabbing}}
\end{minipage}}
\end{center}
\caption{Sketch of \emph{smodels}}
\label{main}
\end{minipage}
\hspace{.05\textwidth}
\begin{minipage}[c]{.4\textwidth}
\begin{center}
\fbox{\begin{minipage}[t]{\textwidth}
{\footnotesize
\begin{tabbing} 
ii\=iiii\=iiiii\=iiiii\=iiiii\=iiii\=iiiii\=iiiii\=iiiii\kill
\\
\>{\bf function} \emph{expand}($P$, $S$):\\
\>{\bf loop}\\
\>\> $S'$ = $S$;\\
\>\> {\bf repeat}\\
\>\>\> $S$ = $AL_P(S)$;\\
\>\> {\bf until} ($S$ = $AL_P(S)$);\\
\>\> $S$ = $AtMost$($P$, $S$);\\
\>\> {\bf if} ($S'$ = $S$) {\bf then return} ($S$);\\
\>{\bf endloop};\\
\end{tabbing}}
\end{minipage}}
\end{center}
\caption{Sketch of \emph{expand}}
\label{exp}
\end{minipage}
\end{center}
\end{figure}

The {\sc Smodels} system imposes constraints on the order of application 
of the transitions. Intuitively, 
the {\sc Smodels} computation is 
depicted in the algorithms of Figs. \ref{main} and \ref{exp}. 

We will need the following notations. 
A computation 
$I_0 \mapsto I_1\mapsto I_2 \mapsto \dots \mapsto I_n$
is said to be {\em $AL$-pure} if every transition 
in the computation is an $AL^c$ transitions and for 
every $c \in \{1,2,3,4\}$, $AL^c_P(I_n) = I_n$. 
A choice point of a computation 
$I_0 \mapsto I_1\mapsto I_2 \mapsto \dots \mapsto I_n$
is an index $1 \le j < n$ such that $I_j \mapsto_{choice} I_{j+1}$. 

\begin{definition}[{\sc Smodels} Computation]
Let $P$ be a program. Let 
$$C = I_0 \mapsto I_1\mapsto I_2 \mapsto \dots \mapsto I_n$$ 
be a computation and 
$$0 \le \nu_1 < \nu_2 < \dots < \nu_r< n$$ 
($r \geq 0$) be the sequence of all choice points 
in $C$.  We say that $C$ is a \emph{{\sc Smodels} computation}
if for every $0 \le j \le r$,
        there exists a sequence of indices  
		$\nu_j+1 = a_1 < a_2 < \ldots < a_t \le \nu_{j+1}- 1$ 
         ($\nu_{r+1}=n$ and $\nu_0=-1$) 
	such that 

		\begin{itemize}
		\item the transition $I_{a_{i+1}-1} \mapsto 
					I_{a_{i+1}}$  is 	
			an $\mapsto_{atmost}$ transition ($1\leq i \leq t-1$)
		\item the computation 
$I_{a_i} \mapsto \ldots \mapsto I_{a_{i+1}-1}$ 
is a $AL$-pure computation. 
		\end{itemize}

\end{definition}
We illustrate this definition in the next example. 

\begin{example}\label{exnew}
Consider the program of  Example~\ref{ex5}. A possible
computation of $M_1$ is:\footnote{We omit the steps
that do not change the interpretation.}
\[\begin{array}{lclclc}
\langle \emptyset,\emptyset \rangle  &\mapsto_{AL^1}& \langle \{ \texttt{e} \}, \emptyset \rangle & \mapsto_{AL^1} & \langle \{\texttt{e}, \texttt{f}\},\emptyset \rangle
&\mapsto_{atmost}\\
  \langle \{\texttt{e},\texttt{f}\}, \{\texttt{c},\texttt{d}\}\rangle& \mapsto_{choice} &\langle \{\texttt{e},\texttt{f},\texttt{b}\}, \{\texttt{c},\texttt{}d\}\rangle& 
			\mapsto_{AL^2}& \langle \{\texttt{e},\texttt{f},\texttt{b}\}, \{\texttt{c},\texttt{d},\texttt{a}\}\rangle 
  \end{array}
 \]
\hfill$\Box$
\end{example}

\subsection{{\sc Smodels} On-line Justifications}
We can use knowledge of the specific steps performed by {\sc Smodels} 
to guide the construction of an on-line justification. 

Assuming that 
$$C = M_0 \mapsto M_1\mapsto M_2 \mapsto \dots \mapsto M_n$$ 
is a computation of  {\sc Smodels}. 
Let $S(M_i) = \langle E_1, E_2, D\rangle$ 
and $S(M_{i+1}) = \langle E'_1, E'_2, D'\rangle$ be the snapshots correspond
to $M_i$ and $M_{i+1}$ respectively. 
Obviously, $S(M_{i+1})$ can be computed by the following steps:  

\begin{itemize} 
\item computing $D'= \langle \Gamma(M_{i+1}), \Delta(M_{i+1})\rangle$; 

\item updating $E_1$ and $E_2$ to obtain $E_1'$ and $E_2'$. 

\end{itemize}
We observe that $\langle \Gamma(M_{i+1}), \Delta(M_{i+1} \rangle$ 
can be obtained by computing the fixpoint of the $\Gamma$- and $\Delta$-function
with the starting value  $\Gamma_{\langle \Gamma(M_i),\Delta(M_i)\rangle}$
and $\Delta_{\langle \Gamma(M_i),\Delta(M_i)\rangle}$. This is possible 
due to the monotonicity of the computation. Regarding $E_1'$ and $E_2'$,
observe that the e-graphs for 
elements in $\langle \Gamma^k(M_{i+1}), \Delta^k(M_{i+1}) \rangle$ 
can be constructed using the e-graphs constructed for elements in 
 $\langle \Gamma^{k-1}(M_{i+1}), \Delta^{k-1}(M_{i+1}) \rangle$ 
and the rules involved in the computation of 
$\langle \Gamma^k(M_{i+1}), \Delta^k(M_{i+1}) \rangle$. Thus, we only need to 
update $E_1'$ with e-graphs of elements of 
$\langle \Gamma^k(M_{i+1}), \Delta^k(M_{i+1}) \rangle$
which do not belong to  $\langle \Gamma^{k-1}(M_{i+1}), \Delta^{k-1}(M_{i+1}) \rangle$.
Also, $E_2'$ is obtained from $E_2$ by removing the e-graphs of 
atoms that ``move'' into $D'$ and adding the e-graph 
$(a,assume,+)$ (resp. $(a,assume,-)$) for $a \in M_{i+1}^+$ 
(resp. $a \in M_{i+1}^-$) not belonging to $D'$. 
Clearly, this computation depends on the transition from $M_i$ to $M_{i+1}$. 
Assume that $M_i \mapsto_\alpha M_{i+1}$, the update of $S(M_i)$ to 
create $S(M_{i+1})$ is done as follows. 

\begin{itemize}
\item \fbox{\bf $\alpha \equiv choice$:} let $p$ be the atom chosen in this 
	step. 

	If $p$ is chosen to be true, then we can use the graph
		$$G_p = (\{a,\textit{assume}\}, \{(a,\textit{assume},+)\})$$
	and the resulting snapshot is
	{$ S(M_{i+1}) = \langle E_1, E_2 \cup \{G_p\}, D   \rangle$}. Observe
	that $D$ is
	unchanged, since the structure of the computation (in particular the
	fact that an \emph{expand} has been done before the choice) ensures
	that $p$ will not appear in the computation of $D$.

	If $p$ is chosen to be false, then we will need to add $p$ to $D^-$,
	compute $\Gamma(M_{i+1})$ and $\Delta(M_{i+1})$, and
	update $E_1$ and $E_2$ correspondingly; in particular, $p$ belongs 
	to $\Delta(M_{i+1})$ and 
	$G_p = (\{a,\textit{assume}\}, \{(a,\textit{assume},-)\})$ is
	added to $E_1$. 

\item \fbox{\bf $\alpha \equiv atmost$:} in this case, 
	$M_{i+1} = \langle M_i^+, M_i^-\cup AtMost(P,M_i)\rangle$.
	The computation of $S(M_{i+1})$ is performed as from 
	definition of on-line justification.
	In particular, observe that if
	$\forall c\in AtMost(P,M_i)$ we have that  $LCE_P^n(c,D)\neq \emptyset$
	then the computation can be started from 
	$\Gamma(M_i)$ and $\Delta(M_i)\cup AtMost(P,M_i)$.


\item \fbox{\bf $\alpha \equiv AL^1$:} let $p$ be the atom dealt with in this
	step and let $r$ be the rule employed. We have that 
	$M_{i+1} = \langle M_i^+\cup \{p\}, M_i^-\rangle$. If
	$D\models body(r)$ then $S(M_{i+1})$ will be computed
	starting from $\Gamma(M_i)\cup\{p\}$ and
	$\Delta(M_i)$. In particular, an off-line graph for $p$, let's say $G_p$,
 	 will be added to $E_1$, and such graph will be constructed using the
	LCE based on the rule $r$ and the e-graphs in $E_1$.

	 Otherwise, $S(M_{i+1}) = \langle E_1, E_2 \cup \{G^+(p,r,\Sigma)\}, D\rangle$,
	where $G^+(p,r,\Sigma)$ is an e-graph of $p^+$ constructed using the LCE
	of rule $r$ and the e-graphs in $\Sigma = E_1 \cup E_2$ 
	(note that all elements in $body(r)$ have an e-graph in $E_1\cup E_2$).

\item \fbox{\bf $\alpha \equiv AL^2$:} let $p$ be the atom dealt with 
	in this step. In this case $M_{i+1} = \langle M_i^+, M_i^-\cup \{p\}\rangle$.
	If there exists $\gamma \in LCE_P^n(p,D,\emptyset)$, then
	$S(M_{i+1})$ can be computed according to the definition of on-line
	justification, starting
	from $\Gamma(M_i)$ and $\Delta(M_i)\cup \{p\}$. Observe that
	the graph of $p$ can be constructed starting with 
	$\{(p,a,+)\mid a\in \gamma\}\cup \{(p,b,-)\mid not\:b\in \gamma\}$). 

	Otherwise, given
	an arbitrary $\psi \in LCE_P^n(p,M_i, \emptyset)$, we can construct
	an e-graph $G_p$ for $p^-$, such that $\psi = support(p^-,G_p)$,
	the graphs $E_1\cup E_2$ are used to describe the elements of $\psi$,
	and
	$S(M_{i+1}) = \langle E_1, E_2 \cup \{G_p\}, D\rangle$.


\item \fbox{\bf $\alpha \equiv AL^3$:} let $r$ be the rule used in this step and let
	$p = head(r)$. Then $M_{i+1} = \langle M_i^+ \cup pos(r), M_i^-\cup neg(r)\rangle$
	and $S(M_{i+1})$ is computed according to the definition of on-line justification.
	Observe that the e-graph $G_p$ for $p^+$ (added to $E_1$ or $E_2$) for
	$S(M_{i+1})$ will be constructed using $body(r)$ as $support(p^+,G_p)$, and 
	using the e-graphs in $E_1\cup E_2 \cup \Sigma$ for some 
	$$\Sigma \subseteq \{ (a^+,assume,+)\:|\:a\in pos(r)\} \cup \{(a,\textit{assume},-)\mid a\in neg(r)\}.$$


\item \fbox{\bf $\alpha \equiv AL^4$:} let $r$ be the rule processed and let 
	$b$ the atom detected in the body. If $b \in pos(r)$, then
	$M_{i+1} = \langle M_i^+, M_i^- \cup \{b\}\rangle$, while if
	$b \in neg(r)$
	then $M_{i+1} = \langle M_i^+\cup\{b\}, M_i^-\rangle$. In either cases,
	the snapshot 
	$S(M_{i+1})$ will be computed using the definition of on-line justification.
\end{itemize}

\begin{example}\label{ex44}
Let us consider the 
computation of Example~\ref{exnew}. A 
sequence of snapshots is (we provide only the edges of the graphs and we
combine together e-graphs of different atoms):

\[
\begin{array}{|r||c|c|c|}
\hline
    &        E_1      &      E_2      &      D\\
\hline
S(M_0) & \emptyset       &    \emptyset & \emptyset\\
S(M_1) &  \{(e^+,\top,+)\} &    \emptyset & \langle \{e\},\emptyset\rangle\\
S(M_2) & \{(e^+,\top,+), (f^+,e^+,+)\} & \emptyset  & \langle \{e,f\},\emptyset \rangle\\
S(M_3) & \left\{\begin{array}{c}
	(e^+,\top,+),\{f^+,e^+,+)\\
	(d^-,c^-,+), (c^-,d^-,+)\end{array}\right\} & \emptyset & \langle \{e,f\}, \{c,d\}\rangle\\
S(M_4) & \left\{\begin{array}{c}
	(e^+,\top,+),\{f^+,e^+,+)\\
	(d^-,c^-,+), (c^-,d^-,+) \end{array}\right\} & \{(b^+,\textit{assume},+)\} & \langle \{e,f\}, \{c,d\}\rangle\\
S(M_5) &  \left\{\begin{array}{c}
	(e^+,\top,+),\{f^+,e^+,+),\\
	(d^-,c^-,+), (c^-,d^-,+), \\
	(a^-,\textit{assume},-), \\
	(b^+,e^+,+), (b^+,a^-,-) \end{array}\right\} & \emptyset  & \langle \{e,f,b\}, \{c,d,a\}\rangle\\
\hline
  \end{array}
\]
\hfill$\Box$
\end{example}

\begin{example}
Let $P$ be the program:
\[\begin{array}{lclclcl}
\texttt{p} & \uffa & \naf \texttt{q} &\hspace{.5cm} &
\texttt{q} & \uffa & \naf \texttt{p} \\
\texttt{r} & \uffa & \naf \texttt{p} &\hspace{.5cm} &
\texttt{p} & \uffa & \texttt{r} \\
  \end{array}
\]
This program does not admit any answer sets where $p$ is false. 
One possible computation (we 
highlight only steps that change the trace):
\[ \begin{array}{lllcclllcclllc}
1. &\langle \emptyset, \emptyset \rangle & \mapsto_{choice}  & \hspace{.5cm} &
	2. & \langle \emptyset,\{p\} \rangle & \mapsto_{AL^1}\\
3. & \langle \{q\}, \{p\} \rangle & \mapsto_{AL^1} &&
4. & \langle \{q,r\},\{p\}\rangle & \mapsto_{AL^1} & \hspace{.5cm} 
5. & \langle \{q,r,p\}, \{p\} \rangle 
   \end{array}
\]
{From} this computation we can obtain a sequence of snapshots:

\[\begin{array}{|r||c|c|c|}
\hline
      &      E_1       &        E_2           &      D\\
\hline
 S(M_0)  &    \emptyset   &    \emptyset         &   \emptyset \\
 S(M_1)  & \{(p^-,\textit{assume},-)\} &  \emptyset  & \langle \emptyset, \{p\}\rangle\:  \rangle\\
 S(M_2)  & \{(p^-,\textit{assume},-), (q^+,p^-,-)\}     &  \emptyset     & \langle\{q\}, \{p\} \rangle\\
 S(M_3)  & \{(p^-,\textit{assume},-), (q^+,p^-,-), (r^+,p^-,-)\} & \emptyset & \langle \{q,r\},\{p\} \rangle\\ 
 S(M_4)  & \left \{
\begin{array}{c}
(p^-,\textit{assume},-), (q^+,p^-,-), \\
(r^+,p^-,-), (p^+,r^+,+)
\end{array}
\right\} & \emptyset & \langle \{p,q,r\},\{p\} \rangle\\
\hline
  \end{array}
\]
Observe that a conflict is detected by the computation and the
sources of conflict are highlighted in the presence of two justifications
for $p$, one for $p^+$ and another one for $p^-$.
\hfill$\Box$
\end{example}

\subsection{Discussion}\label{imple}

In this subsection, we discuss possible ways to extend the notion of justifications 
on various language extensions of ASP. We also describe a system capable of computing 
off-line and on-line justifications for ASP programs.  
 
\subsubsection{Language Extensions}
In the discussion presented above, we relied on a standard logic programming
language. Various systems, such as {\sc Smodels}, have introduced language
extensions, such as choice atoms, to facilitate program development. The extension
of the notion of justification to address these extensions is relatively 
straightforward.

Let us consider, for example, the choice atom construct of {\sc Smodels}. 
A choice atom has the form $L \leq \{a_1,\dots,a_n,not\:b_1,\dots,not\:b_m\}\leq U$
where $L, U$ are integers (with $L\leq U$) and the various $a_i, b_j$ 
are atoms. Choice atoms are allowed to appear both in the head as well
as the body of rules. Given an interpretation $I$ and a choice atom,
we say that $I$ satisfies the atom if
\[ L \leq |\{a_i\:|\: a_i \in I^+\}| + |\{b_j \:|\: b_j \in I^-\}| \leq U\]

The local consistent explanation of a choice atom can be developed in 
a natural way:
\begin{itemize}
\item If the choice atom $L\leq T \leq U$ is true, then
a set of literals $S$ is an LCE if
	\begin{itemize}
	\item ${\cal A}\cap S \subseteq T$ and $not\:{\cal A}\cap S \subseteq T$
	\item  for each $S'$ such that $S\subseteq S'$ and 
		$\{atom(\ell)\:|\: \ell \in S'\}=\{atom(\ell)\:|\: \ell \in T\}$ we
		have that 
\[L\leq |\{a\:|\: a\in T\cap {\cal A}\cap S'\}|+|\{b\:|\:not\:b \in T\cap S'\}|\leq U\]
	\end{itemize}

\item if the choice atom $L\leq T \leq U$ is false, then
	a set of literals $S$ is an LCE if 
	\begin{itemize}
	\item ${\cal A}\cap S \subseteq T$ and $not\:{\cal A}\cap S \subseteq T$

	\item  for each $S'$ such that $S\subseteq S'$ and 
		$\{atom(\ell)\:|\: \ell \in S'\}=\{atom(\ell)\:|\: \ell \in T\}$ we
		have that 
\[\begin{array}{c}
	L > |\{a\:|\: a\in T\cap {\cal A}\cap S'\}|+|\{b\:|\:not\:b \in T\cap S'\}|\\
	\textit{or}\\
	|\{a\:|\: a\in T\cap {\cal A}\cap S'\}|+|\{b\:|\:not\:b \in T\cap S'\}| > U
  \end{array}
\]
	\end{itemize}
\end{itemize}
The notions of e-graphs can be extended to include choice atoms. Choice atoms
in the body are treated as such and justified according to the new notion
of LCE. On the other hand, if we have a rule of the type
\[ L\leq T \leq U \:\uffa\: Body \]
and $M$ is an answer set, then we will 
\begin{itemize}
\item treat the head as a new (non-choice) atom ($new_{L\leq T \leq U}$), 
	and allow its justification
	in the usual manner, using the body of the rule
\item for each atom $p\in T\cap M^+$, the element $p^+$ has a new LCE
	$\{new_{L\leq T \leq U}\}$
\end{itemize}

\begin{example}
Consider the program containing the rules:
\[\begin{array}{lclclcl}
\multicolumn{2}{l}{\texttt{p}\:\:\:\: \uffa} & &\hspace{.5cm} & \texttt{q} & \uffa & \\
2\leq \{\texttt{r,t,s}\}\leq 2 & \uffa & \texttt{p, q} & & 
  \end{array}
\]
The interpretation $\langle \{t,s,p,q\},\{r\}\rangle$ is an answer
set of this program. The off-line justifications for $s^+$ and
$t^+$ are illustrated in 
Figure~\ref{choiceexp}.
\hfill$\Box$
\end{example}
The concept can be easily extended to deal with weight atoms.

\begin{figure}[htb]
\centerline{\psfig{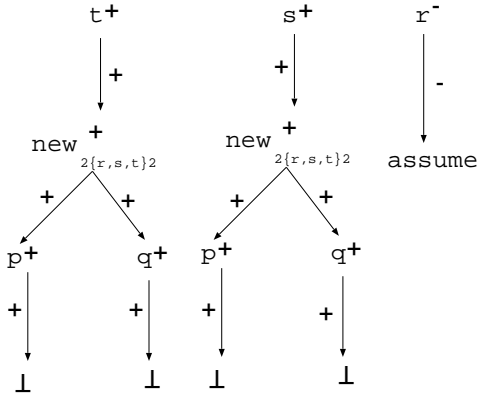}}
\caption{Justifications in presence of choice atoms}
\label{choiceexp}
\end{figure}

\subsubsection{Concrete Implementation}

The notions of off-line and on-line justifications proposed in the previous sections have been
implemented and integrated in a debugging system for Answer Set Programming,
developed within the \ASP\ framework~\cite{asp-prolog}.
The notions of justification proposed here is meant to represent the
basic data structure on which debugging strategies for ASP can be developed. 
 \ASP\ allows the construction of Prolog
programs---using CIAO Prolog~\cite{GrasH00}---which include 
modules written in ASP (the {\sc Smodels} flavor of ASP). 
In this sense, the embedding of ASP within a Prolog framework (as possible
in \ASP) allows the programmer to use Prolog itself to query the justifications
and develop debugging strategies. We will begin this section with a
short description of the system \ASP.

The \ASP\ system has been developed using the module and class capabilities
of CIAO Prolog. \ASP\ allows programmers to develop programs as
collections of \emph{modules}. Along with the traditional
types of modules supported by CIAO Prolog (e.g., Prolog modules, Constraint
Logic Programming modules), it allows the presence
of  \emph{ASP modules}, each being a complete ASP
program. Each CIAO Prolog module can access the content of
any ASP module (using the traditional module qualification of Prolog),
read its content, access its models, and modify it (using the 
traditional {\tt assert} and {\tt retract} predicates of Prolog).
 
\begin{example}
\ASP\ allows us to create Prolog modules that access (and possibly modify)
other modules containing ASP code. For example, the following Prolog
module
\begin{verbatim}
   :- use_asp(aspmod, 'asp_module.lp').
  
   count_p(X) :- 
        findall(Q, (aspmod:model(Q), Q:p), List),
        length(List,X).
\end{verbatim}
accesses an ASP module (called {\tt aspmod}) and 
defines a predicate ({\tt count\_p}) which counts how many
answer sets of {\tt aspmod} contain the atom {\tt p}.
\qed
\end{example}

\paragraph{Off-Line Justifications:}
The {\sc Smodels} engine has been modified to extract, during the computation,
a compact footprint of the execution, i.e., a trace of the key events (corresponding
to the transitions described in Sect.~\ref{smo}) with links to the atoms and rules
involved. The modifications of the trace are trailed to support backtracking. Parts of
the justification (as described in the previous section) are built on the fly, while
others (e.g., certain cases of $AL^3$ and $AL^4$) are delayed until
the justification is requested.

To avoid imposing the overhead of justification construction on every computation, the
programmer has to specify what ASP modules require justifications, using an
additional argument ({\tt justify}) in the module import declaration:
\[
\texttt{:- use\_asp($\langle$ module\_name $\rangle$, $\langle$ file\_name 
		$\rangle$, $\langle$ parameters $\rangle$ [,justify]).}
\]
Figure~\ref{ovju} shows a general overview of the implementation of ASP justifications
in \ASP. Each program is composed of CIAO Prolog modules and ASP modules (each containing
rules of the form (\ref{rule}), possibly depending on the content of other ASP/Prolog modules). The implementation
of \ASP, as described in~\cite{asp-prolog}, automatically generates, for each ASP module,
an \emph{interface module}---which supplies the predicates to access/modify the ASP module and its
answer sets---and a \emph{model class}---which allows the encoding of each answer set
as a CIAO Prolog object~\cite{Pineda99}. The novelty is the extension of the model class, to 
provide access to the justification of the elements in the corresponding answer set.

\ASP\ provides the predicate {\tt model/1} to retrieve answer sets
of an ASP module---it retrieves them in the order they are computed 
by {\sc Smodels}, and it returns the current
one if the computation is still in progress. The main predicate
to access the justification is {\tt justify/1} which retrieves a CIAO
Prolog object 
containing the justification; i.e.,
\[
\texttt{?- my\_asp:model(Q), Q:justify(J).}
\]
will assign to {\tt J} the object containing the justification relative to the
answer set {\tt Q} of the ASP module {\tt my\_asp}.
Each justification object provides the following predicates: 
\begin{itemize}
\item {\tt just\_node/1} which succeeds if
the argument is one of the nodes in the justification graph, 
\item {\tt just\_edge/3} which succeeds if
the arguments correspond to the components of one of the edges in the graph, and
\item {\tt justify\_draw/1} which will generate a graphical drawing of the 
	justification for the given
	atom (using the \emph{uDrawGraph} application). An example display
	produced by \ASP\ is shown in Figure~\ref{gf1}; observe that rule names
	are also displayed to clarify the connection between edges of a 
	justification and the generating program rules. 
\end{itemize}
For example,
\[
\texttt{ ?- my\_asp:model(Q),Q:justify(J),findall(e(X,Y),J:just\_edge(p,X,Y),L).}
\]

\noindent
will collect in {\tt L} all the edges supporting {\tt p} in the
justification graph (for answer set {\tt Q}).

\begin{figure}[htbp]
\centerline{\psfig{figure=Graphic1.eps,width=.7\textwidth}}
\caption{\ASP\ with justifications}
\label{ovju}
\end{figure}

\begin{figure}[htbp]
\centerline{\psfig{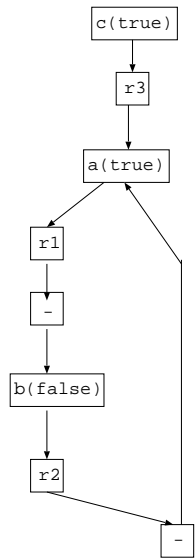}}
\caption{An off-line justification produced by \ASP}
\label{gf1}
\end{figure}

\paragraph{On-Line Justifications:}
The description of {\sc Smodels} on-line justifications we proposed earlier is
clearly more abstract than the concrete implementation---e.g., we did not
address the use of lookahead, the use of heuristics, and other optimizations
introduced in {\sc Smodels}.  All these elements
have been  handled in the current implementation, in the same spirit of what described here.

On-line justifications have been integrated in the \ASP\ system as
part of its  ASP debugging facilities.
The system  provides predicates to set breakpoints on the execution of an
ASP module, triggered by events such as the assignment of a truth value to a certain atom
or the creation of  a conflicting assignment. Once a breakpoint is encountered,
it is possible to visualize the current on-line justification or step through
the rest of the execution.
Off-line justifications 
are always available. 

The {\sc Smodels} solver  is in charge of handling the activities
of interrupting and resuming execution, during the computation of an answer set
of an ASP program. A synchronous communication is maintained between a Prolog
module and an ASP module---where the Prolog module requests and controls
the ASP execution. When the ASP solver breaks, e.g., because a breakpoint is 
encountered, it sends a compact encoding of its internal data structures to the
Prolog module, which stores it in a ASP-solver-state table. If the Prolog module
requests resumption of the ASP execution, it will send back to the solver
the desired internal state, that will allow continuation of the execution. This
allows the execution to be restarted from any of a number of desired points
(e.g., allowing a ``replay''-style of debugging) and to control different ASP
modules at the same time.

\ASP\ provides the ability to establish a number of different types of breakpoints
on the execution of an ASP module. In particular, 
\begin{itemize}
\item {\tt break(atom,value)} interrupts the execution when the {\tt atom} is assigned
	the given value; {\tt value} could {\tt true}, {\tt false} or {\tt any}.
\item {\tt break(conflict)} interrupts the execution whenever a conflict is 
	encountered during answer set computation.\footnote{Here, we refer to 
	conflict in the same terms as {\sc Smodels}.}
\item {\tt break(conflict(atom))} interrupts the execution if a conflict involving
	the  {\tt atom} is encountered.
\item {\tt break(answer(N))} interrupts the execution at the end of the computation 
	of the answer set referred to by the object {\tt N}.
\end{itemize}
Execution can be restarted using the built-in predicate {\tt run}; the partial results
of an interrupted computation (e.g., the partial answer set, the on-line justification)
can be accessed using the predicates {\tt model} and {\tt justify}.

\begin{example}
Consider the following fragment of a Prolog program:
\begin{verbatim}
    :- module ( p, [m/0] ).
    :- use_asp ( asp, 'myasp.lp', justify ).

    m :- asp:break(atom(a,true)),
         asp:run,
         asp:model(Q),
         Q:justify(J),
         J:justify_draw(a).
\end{verbatim}
This will stop the execution of the answer set program {\tt myasp.lp} whenever
the atom {\tt a} is made true; at that point, the Prolog program shows a 
graphical representation of the corresponding on-line justification of {\tt a}.
\hfill$\Box$
\end{example}

\subsection{Justifications and Possible Applications} 

The previous subsection discusses a possible application of the notion 
of justification developed in this paper, namely the construction of 
an interactive  debugging system for logic programs under the 
answer set semantics.
It is worth mentioning that the notion of justification is 
general and can be employed in other applications as well.  
We will now briefly discuss other potential uses of this concept. 

Thanks to their ability to explain the presence and absence of
atoms in an answer set, off-line justifications provide a 
natural solution to problems in the domain of  ASP-based diagnosis.
As in systems like
\cite{BalducciniG03}, off-line justifications 
can help in discriminating 
diagnoses. 
Let us consider, for example, a system composed of two components, 
$c_1$ and $c_2$. Let us assume that there is a  dependence between these
components,  stating that 
if $c_1$ is defective then $c_2$ will be defective as well. 
This information can be expressed by the following rule:  
\[
h(ab(c_2),T) \:\: \uffa \:\: h(ab(c_1),T)  
\]
where $h(ab(c_1), t)$ (resp. $h(ab(c_2), t)$) being true
indicates that the component $c_1$ (resp. $c_2$) is defective
at an arbitrary time $T$.  

Given this rule, $h(ab(c_2), t)$ ($ab(c_2)$ is defective) 
belongs to any answer set which contain $h(ab(c_1), t)$ 
($ab(c_1)$ is defective). Thus, any off-line justification for 
$h(ab(c_1), t)^+$ can be extended to an off-line justification for 
$h(ab(c_2), t)^+$ by adding a positive edge from 
$h(ab(c_2), t)^+$ to $h(ab(c_1), t)^+$. This is another argument,
besides the minimality criteria, for preferring 
the diagnosis $\{c_1\}$ over $\{c_1,c_2\}$.  

The implemented system for on-line justification in this paper
can be adapted to create a direct implementation of the 
CR-Prolog \cite{Balduccini07}. Currently, a generate-and-test 
front-end to {\sc Smodels} is provided for computing 
answer sets of CR-Prolog programs. More precisely,
the algorithm for computing the answer sets of a CR-Prolog
program $P$, whose set of normal rules is $Q$,
iterates through two steps until an answer set 
is found. In each iteration,  a minimal set of CR-rules 
is selected randomly (or according to some preferences), 
activated (i.e., converted to normal rules) 
and added to $Q$ to create a new program $Q'$. 
The answer sets of $Q'$ are computed using {\sc Smodels}.
If any answer set is found, then the computation stops.

This implementation does not make use of any information 
about possible conflicts or inconsistencies that can be 
recognized during the computation. 
A more effective implementation can be achieved by 
collecting on-line justifications during each cycle of
execution of {\sc Smodels}. The on-line justifications can
be traversed to identify inconsistencies and identify
rules outside of $Q$ that unavoidably conflict with 
rules in $Q$. Such knowledge can then be employed to
suggest more effective selections of CR-rules to
be activated.

\begin{example}
Consider the following simple CR-Prolog program
\[\begin{array}{cclcl}
r_1 & \:\:& a & \uffa & not\:b.\\
r_2&&\neg a \\
r_3&&b & \stackrel{+}{\leftarrow}\\
r_4&&c & \stackrel{+}{\leftarrow}
  \end{array}
\]
In this case, the set of normal rules $Q$ contains the 
two rules $r_1$ and $r_2$, and $Q$ does not admit a (consistent)
answer set. The point of conflict is characterized by the on-line
justification shown in Figure~\ref{cr1}. The conflict is clearly 
represented by the presence of justifications for $a^+$ and
$(\neg a)^+$; the justification also highlights that the only
way of defeating the conflict is to remove the positive edge
between $not\:b$ and $a^+$. This suggests the need of introducing
a CR-rule that has $b$ as head, i.e., rule $r_3$.

Simple \ASP\ meta-interpreters can be introduced to detect this
type of situations and suggest some consistency restoring rules
to be used; e.g., given the partial answer set $M$ 
present at the time of conflict, we can use the following clause to
resolve conflicts due to atoms of the type $p$ and $\neg p$ both
being true:
\[\begin{array}{l}
 \texttt{candidate\_rule}(Y\stackrel{+}{\leftarrow}Body,M)  \uffa\\
\hspace{1cm}     M:\texttt{justify}(J),\\
\hspace{1cm}     M:Atom, M:(-Atom),\\
\hspace{1cm}     (  \texttt{reachable}(Atom,Y,J) ; \texttt{reachable}(-Atom,Y,J) ),\\
\hspace{1cm}     M:not\:\: Y,\\ 
\hspace{1cm}     \stackrel{+}{\leftarrow}(Y,Body).
  \end{array}
\]
where {\tt reachable} performs a simple transitive closure over
the edges of the justification $J$.

\begin{figure}[htbp]
\centerline{\psfig{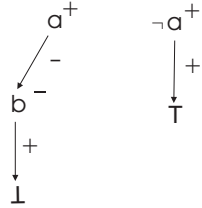}}
\caption{On-line justification for the rules $r_1$ and $r_2$}
\label{cr1}
\end{figure}
\hfill$\Box$

\end{example}

\section{Related Work} 

Various approaches to logic program understanding and debugging 
have been investigated (and a thorough comparison is beyond the
limited space of this paper). Early work in this direction geared 
towards the understanding of Prolog programs rather than logic 
programs under the answer set semantics. Only recently, we can 
find some work on debugging inconsistent programs or providing explanation
for the presence (or absence) of an atom in an answer set. 
While our notion of justification is related to the research 
aimed at debugging Prolog and XSB programs, its initial implementation 
is related to the recent attempts in debugging logic programs 
under the answer set semantics. We will discuss each of these issues
in each subsection. 

\subsection{Justifications and Debugging of Prolog Programs} 

As discussed in \cite{PemmasaniGDRR04}, $3$ main phases can be considered
in understanding/debugging a logic program:

\begin{itemize}
\item[1.] {\em Program instrumentation and execution:}
	assertion-based
	debugging (e.g., \cite{PueblaBH98}) and  
	algorithmic debugging \cite{Shapiro82} are examples of approaches focused on this first phase.

\noindent
\item[2.] {\em Data Collection:}
 focuses on \emph{extracting} from the execution data
	necessary to understand it, as in  event-based debugging
	\cite{Auguston00}, tracing, and explanation-based debugging
	\cite{Ducasse99,MalletD99}.

\noindent
\item[3.] {\em Data Analysis:} focuses on reasoning on data collected during
	the execution. The proposals dealing with automated debugging (e.g., \cite{Auguston00})
	and execution visualization
	(e.g., \cite{VaupelGP97}) are approaches focusing on this
	phase of program understanding.
\end{itemize}

\noindent
The notion of \emph{Justification} has been introduced in
\cite{PemmasaniGDRR04,RoychoudhuryRR00,Specht93} to support
understanding and debugging of logic programs.
Justification is the process 
of generating evidence, in terms of high-level proofs based on the answers (or models) 
produced during the computation. Justifications are \emph{focused}, i.e., 
they provide only the information that are relevant to the item being
explained---and this separates them from other debugging schemes (e.g., tracing).
Justification plays an important role 
in manual and automatic verification, by providing a \emph{proof description}
if a given property holds; otherwise, it
generates a \emph{counter-example}, showing where the violation/conflict
occurs in the system. 
The justification-based approach focuses on the last two phases of
debugging---collecting
data from the execution and presenting them in a meaningful manner. Differently
from generic tracing and algorithmic debugging, justifications are focused only
on parts of the computation relevant to the justified item. Justifications
are fully automated and do not require user interaction (as in declarative
debugging). 

Justifications relies on describing the evidence for an answer in 
terms of a graph structure. The term \emph{justification} was introduced
in \cite{RoychoudhuryRR00}, as a data structure to explain answers to 
Prolog queries within a Prolog system with tabling. The notion of justification
and its implementation in the XSB Prolog system was successively refined
in~\cite{PemmasaniGDRR04,Guo01}. Similar structures have been suggested to 
address the needs of other flavors of logic programming---e.g., various approaches
to tree-based explanation for deductive databases (e.g., the \emph{Explain}
system~\cite{explain}, the explanation system for LOLA~\cite{Specht93}, and the
DDB trees method~\cite{MalletD99}). Similar methods have also been developed
for the analysis of CLP programs (e.g.,~\cite{clpdeb}).

In this work, we rely on graph structures as a mean to 
describe the \emph{justifications} that are generated during the generation 
(or from) an answer set of a program. Graphs have been
used in the context of logic programming for a variety of other applications.
\emph{Call graphs} and \emph{dependence graphs} have been extensively
used to profile and discover program properties (e.g.,~\cite{call1,call2}).
\emph{Support graphs} are used for program analysis in \cite{SahaR05}.

The use of graphs proposed in this paper 
is complementary to the view proposed by other authors, who  use graph
structures as a mean to describe answer set programs, to make structural
properties explicit, and to support
the development of the program execution. In~\cite{AngerGLNS05,color1},  
\emph{rule dependency graphs} (a.k.a. \emph{block graphs}) of answer
set programs are employed to model the computation of answer sets
as special forms of graph coloring. A comprehensive survey of alternative graph
representations of answer set programs, and their properties with respect
to the problem of answer set characterization, has been presented
in~\cite{constantini1,CostantiniDP02}. In particular, the authors provide
characterizations of desirable graph representations, relating the 
existence of answer sets to the presence of cycles and the use of coloring
to characterize properties of programs (e.g., consistency). 
We conjuncture that 
the outcome of a successful coloring of an EDG~\cite{constantini1} to
represent one answer set can be projected, modulo non-obvious transformations,
to an off-line graph and vice versa. On the other hand, the notion of on-line
justification does not seem to have a direct relation to the graph representations
presented in the cited works.

\subsection{Debugging Logic Programs under Answer Set Semantics} 

This paper continues the work initiated in~\cite{ElKhatibPS05}, by
proposing a more advanced and sound notion of off-line justification, by developing the concept
of on-line justification, and introducing these concepts
in {\sc Smodels}. The approach differs significantly from 
the recently introduced approach to debugging ASP programs in \cite{BrainGP+b07}
While our approach relies on the notion of justification, 
the approach in \cite{BrainGP+b07} uses the tagging technique 
\cite{DelgrandeST03} to compile a program into a new program whose 
answer sets can be used to debug the original program. 
Inspecting an answer set of the new program can reveal 
the rules which have been applied in its generation. It does not,
however, provides explanation of why an atom does (or does not) 
belong to the answer set. In this sense, we can say that the approach 
of \cite{BrainGP+b07} and ours are complementary to each other. 
An advantage of the approach in \cite{BrainGP+b07} is that it enables 
the development of a debugger as a front-end of an answer set solver. 
However, their approach does not consider on-line justification. 

At this point, it is worth mentioning that the \ASP\ debugger,
described in Section \ref{smo}, differs from the system {\tt 
spock} \cite{BrainGP+a07}---which was developed based on the technical 
foundation in \cite{BrainGP+b07}---in several aspects. 
In our system, the justification for the truth value of an 
atom consists of facts, assumptions, and rules which are applicable 
given these facts and assumptions, i.e., we not only justify why an atom 
is {\em true} but also why an atom is {\em false}. Moreover, 
justifications can be queried during the process of answer set
computation. {\tt spock} only provides the justification,
or the applicable rules, for the presence of an atom in a 
given answer set. In this sense, justifications in {\tt spock}
is similar to our off-line LCEs. 

In \cite{dlvdeb}, 
a tool for developing and testing DLV programs was described. 
The commands provided by this tool allow an user to 
inspect why an atom is true in the current model and 
why there is no answer set. This is similar to the on-line justifications
developed for {\sc Smodels}. The tool in \cite{dlvdeb}, however, 
does not answer the question 
why an atom is not in the current model. The notion of justifications 
is not developed in \cite{dlvdeb}. 

The proposed debugger is similar to the system described in \cite{BrainDv05}
in that it provides the users with the information on why some 
atoms occur in an answer set and some others do not. 
An explanation given by the tool described in this work is
similar to an off-line justification in our work. Our 
implementation also provides users with on-line justifications 
but the system described in \cite{BrainDv05} does not. 

The paper \cite{Syrjanen06} presents a theory for debugging of inconsistent
programs and an implementation of this theory. The focus of this 
paper is on inconsistent programs. On the other hand, our 
focus is not solely on inconsistent programs. Our 
notion of on-line justification can be used in identifying the 
reasons that lead to the inconsistency of the problem but
it is significant different from the theory of diagnosis 
developed in \cite{Syrjanen06}.

\section{Conclusion}
In this paper we provided a generalization of the notion of
\emph{justification} (originally designed for Prolog with
SLG-resolution \cite{RoychoudhuryRR00}), to suit the needs of ASP.
The notion, named \emph{off-line
justification}, offers a way to understand the motivations for the
truth value of an atom within a specific answer set, thus making it
easy to analyze answer sets for program understanding and debugging. 
We also introduced \emph{on-line justifications}, which are
meant to justify atoms \emph{during} the computation of an answer set. The
structure of an on-line justification is tied to the specific steps
performed by a computational model for ASP (specifically,
the computation model adopted by 
{\sc Smodels}). An on-line justification allows a programmer to inspect
the reasons for the truth value of an atom at the moment such value
is determined while constructing an answer set.
These data
structures provide a foundation for the construction of tools to understand
and debug ASP programs.

The process of
computing and presenting justifications has been embedded in
the ASP-Prolog system~\cite{asp-prolog}, thus making justifications a first-class
citizen of the language. This allows the programmer to use Prolog to 
manipulate justifications as standard Prolog terms. A prototype implementation 
has been completed and is currently under testing. 

As future work, we propose to complete the implementation, refine the definition
of on-line justification to better take advantage of the {\sc Smodels} mechanisms,
and develop a complete debugging and visualization environment for ASP based on these
data structures.

\medskip \noindent 
{\bf Acknowledgement}: We would like to thank the anonymous reviewers 
for their comments and suggestions that help improve the papers 
in many ways. The authors are 
partially supported by NSF grants CNS-0220590, HRD-0420407, and 
IIS-0812267.

\bibliographystyle{acmtrans}

\section*{Appendix: Proofs}

\subsection*{Proof of Proposition \ref{prop1}.}

\noindent
{\em Proposition \ref{prop1}.}
{\it Given a program $P$ and an answer set $M$ of $P$, the well-founded model of 
$NR(P,{\cal TA}_P(M))$ is equal to $M$. }

\smallskip

The following result has been proved~\cite{apt94a}.
\begin{theorem}
Let $P$ be a program and $j$ be the first index such that
$(K_j,U_j) = (K_{j+1},U_{j+1})$. The well-founded model of $P$,
$WF_P=\langle W^+,W^- \rangle$, satisfies 
$W^+ = K_j$ and $W^- = {\cal A} \setminus U_j$. 
\end{theorem}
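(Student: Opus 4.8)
The plan is to recognise the sequence $(K_i,U_i)$ of Definition~\ref{kui} as the alternating fixpoint of an antitone operator, and then to identify that fixpoint with the well-founded model. For a set of atoms $V\subseteq{\cal A}$ define $\Phi_P(V)=lfp(T_{P,V})$, the least fixpoint in the first argument of $T_{P,V}$ with $V$ held fixed (this exists by the monotonicity remark following Definition~\ref{tpj}). First I would check that $\Phi_P$ is \emph{antitone}: enlarging $V$ only makes the test $neg(r)\cap V=\emptyset$ harder to satisfy, so fewer rules fire and $lfp(T_{P,V})$ shrinks; consequently $\Phi_P^2$ is monotone.

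The next step is to match the paper's sequence to the iterates of $\Phi_P$ started at the top element ${\cal A}$. Since $neg(r)\cap{\cal A}=\emptyset$ forces $neg(r)=\emptyset$, the operator $T_{P,{\cal A}}$ fires exactly the definite rules, so $\Phi_P({\cal A})=lfp(T_{P^+})=K_0$. Unwinding Definition~\ref{kui} then gives $K_i=\Phi_P^{2i+1}({\cal A})$ and $U_i=\Phi_P^{2i+2}({\cal A})$ by induction. Because ${\cal A}$ is the greatest element and $\Phi_P^2$ is monotone, the even iterates decrease, $U_0\supseteq U_1\supseteq\cdots$, while applying the antitone $\Phi_P$ to this decreasing chain makes the odd iterates increase, $K_0\subseteq K_1\subseteq\cdots$; moreover each $K_i$ lies below each $U_i$, so every $\langle K_i,U_i\rangle$ is a consistent interpretation. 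On a finite Herbrand base both chains stabilise, which is exactly the existence of the first index $j$ with $\langle K_j,U_j\rangle=\langle K_{j+1},U_{j+1}\rangle$.

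I would then identify the two limits. The decreasing chain converges to $gfp(\Phi_P^2)$ and, by continuity on the finite lattice, the increasing chain converges to $\Phi_P(gfp(\Phi_P^2))$. The purely lattice-theoretic fact that an antitone operator exchanges its extreme $\Phi_P^2$-fixpoints---$\Phi_P(gfp(\Phi_P^2))=lfp(\Phi_P^2)$ and $\Phi_P(lfp(\Phi_P^2))=gfp(\Phi_P^2)$---then yields $K_j=lfp(\Phi_P^2)$ and $U_j=gfp(\Phi_P^2)$. This fact is proved in two lines: $\Phi_P(gfp(\Phi_P^2))$ is a $\Phi_P^2$-fixpoint squeezed between $lfp(\Phi_P^2)$ and $gfp(\Phi_P^2)$, and chasing antitonicity around the inclusion $lfp(\Phi_P^2)\subseteq gfp(\Phi_P^2)$ forces the two bounds to be swapped.

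The final---and hardest---step is to connect this oscillating pair to the \emph{original} well-founded model, i.e.\ the model defined through the immediate-consequence operator together with greatest unfounded sets as in~\cite{apt94a}. The key lemma to establish is that one full step of the well-founded operator corresponds to one double application of $\Phi_P$: viewing $V$ as the set of atoms \emph{not} declared false, the founded atoms produced are exactly $\Phi_P(V)$, while the complement ${\cal A}\setminus\Phi_P(V)$ is precisely the greatest unfounded set relative to $V$. Granting this correspondence, the true atoms of the well-founded model coincide with $lfp(\Phi_P^2)$ and the false atoms with ${\cal A}\setminus gfp(\Phi_P^2)$, giving $W^+=K_j$ and $W^-={\cal A}\setminus U_j$ as claimed. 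I expect this unfounded-set correspondence to be the main obstacle, since it carries the genuine combinatorial content of the well-founded semantics while everything else is fixpoint bookkeeping; it is also the part actually supplied by~\cite{apt94a}. In a self-contained treatment I would prove it by a double inclusion, using unfoundedness to show that atoms outside $\Phi_P(V)$ can never be supported and using the $lfp(T_{P,V})$ construction to show that every atom it produces is founded.
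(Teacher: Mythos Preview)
The paper does not prove this theorem at all: it is stated in the appendix with the preface ``The following result has been proved~\cite{apt94a},'' and in the body of the paper (Section~\ref{semantics}) the identity $W^+=K_j$, $W^-={\cal A}\setminus U_j$ is simply taken as the working \emph{definition} of the well-founded model, with a remark that this differs slightly from the original Van~Gelder--Ross--Schlipf definition. So there is no proof in the paper for you to be compared against; the authors defer entirely to Apt and Bol.

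Your proposal, by contrast, is a genuine and essentially correct proof sketch of the result. Recasting the sequence as the iterates $\Phi_P^{2i+1}({\cal A})$ and $\Phi_P^{2i+2}({\cal A})$ of the antitone operator $\Phi_P(V)=lfp(T_{P,V})$ is exactly the alternating-fixpoint viewpoint, and your identification of the limits with $lfp(\Phi_P^2)$ and $gfp(\Phi_P^2)$ is the standard lattice-theoretic argument. The final step---matching $\Phi_P$ to one round of the founded/unfounded computation of~\cite{VanGelderRS91}---is indeed where the real work lies, and it is precisely the content that Apt and Bol supply. Your self-assessment of where the difficulty sits is accurate. One small point: you should be explicit that the Herbrand base is finite here (the paper works over a finite signature), since otherwise stabilisation at a finite index $j$ is not automatic and the alternating chains would need transfinite iteration.
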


Let $T_R$ denote the traditional immediate consequence
operator for a definite program $R$ \cite{llo87}. We will also 
make use of the usual notations such as  $T_R \uparrow 0 = \emptyset$, 
$T_R \uparrow i = T_R(T_R \uparrow (i-1))$.  
Given a program $P$ and one of its
answer sets $M$, to simplify the presentation, 
let us denote with $Q(M)$ the
negative reduct $NR(P, {\cal TA}_P(M))$.
We will denote with $(K^P_i,U^P_i)$ 
the pair $(K_i,U_i)$ (Definition \ref{kui}) 
for the original program $P$ and with $(K^Q_i,U^Q_i)$ 
the pair $(K_i,U_i)$ for program $Q(M)$ respectively.

\begin{lemma} \label{le-a1}
For a program $P$, $lfp(T_{P^+}) = lfp(T_{Q(M)^+})$.
\end{lemma}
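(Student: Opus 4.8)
The plan is to prove the two inclusions separately: the inclusion $lfp(T_{Q(M)^+}) \subseteq lfp(T_{P^+})$ by monotonicity, and the reverse inclusion by induction on the powers of the immediate consequence operator. First I would record that $Q(M)^+ \subseteq P^+$. Indeed, $Q(M) = NR(P,{\cal TA}_P(M))$ is obtained from $P$ only by deleting rules (those whose head lies in ${\cal TA}_P(M)$), and deletion cannot change whether a surviving rule is definite; hence the definite part of $Q(M)$ is a subset of the definite part of $P$. Since $T_R$ is monotone in its program argument (a larger rule set produces a larger immediate-consequence set on every interpretation), this at once yields $lfp(T_{Q(M)^+}) \subseteq lfp(T_{P^+})$.

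The crux is the reverse inclusion, and the key fact I would establish first is that ${\cal TA}_P(M) \cap lfp(T_{P^+}) = \emptyset$. From Definition \ref{kui} we have $lfp(T_{P^+}) = K_0$, and from the theorem cited just above this lemma we have $W^+ = K_j$ for the stabilization index $j$. Since the sequence $(K_i)$ in the Apt--Bol construction is nondecreasing, $lfp(T_{P^+}) = K_0 \subseteq K_j = WF_P^+$. On the other hand, every atom of ${\cal TA}_P(M)$ lies outside $WF_P^+ \cup WF_P^-$ by definition, so in particular ${\cal TA}_P(M) \cap WF_P^+ = \emptyset$. Combining these two facts gives ${\cal TA}_P(M) \cap lfp(T_{P^+}) = \emptyset$.

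With this in hand I would prove $lfp(T_{P^+}) \subseteq lfp(T_{Q(M)^+})$ by showing, by induction on $i$, that $T_{P^+}\uparrow i \subseteq lfp(T_{Q(M)^+})$. The base case $T_{P^+}\uparrow 0 = \emptyset$ is immediate. For the inductive step, take $a \in T_{P^+}\uparrow(i+1)$; then there is a definite rule $r \in P^+$ with $head(r) = a$ and $pos(r) \subseteq T_{P^+}\uparrow i$. Since $a \in lfp(T_{P^+})$, the key fact gives $a \notin {\cal TA}_P(M)$, so $r$ is not deleted in forming $Q(M)$; being definite, $r \in Q(M)^+$. By the induction hypothesis $pos(r) \subseteq lfp(T_{Q(M)^+})$, and since $lfp(T_{Q(M)^+})$ is a fixpoint of $T_{Q(M)^+}$, firing $r$ forces $a = head(r) \in lfp(T_{Q(M)^+})$. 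Taking the union over all $i$ yields $lfp(T_{P^+}) \subseteq lfp(T_{Q(M)^+})$, and together with the first inclusion the desired equality follows.

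I expect the main obstacle to be the key fact ${\cal TA}_P(M) \cap lfp(T_{P^+}) = \emptyset$ — more precisely, the containment $lfp(T_{P^+}) \subseteq WF_P^+$, which rests on the monotonicity of the $K_i$ sequence in the construction of Definition \ref{kui} (a standard property of the \cite{apt94a} formulation that I would invoke rather than reprove). Once that containment is available, the easy inclusion and the inductive argument are both routine.
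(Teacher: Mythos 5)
Your proof is correct and follows essentially the same route as the paper's: the inclusion $lfp(T_{Q(M)^+}) \subseteq lfp(T_{P^+})$ from $Q(M)^+ \subseteq P^+$, and the reverse inclusion by induction on the powers of $T_{P^+}$, using the observation that the head of any firing definite rule cannot lie in ${\cal TA}_P(M)$, so the rule survives the negative reduct. The only (equally sound) cosmetic difference is how you exclude the head from ${\cal TA}_P(M)$: the paper argues directly via $lfp(T_{P^+}) \subseteq M^+$ together with ${\cal TA}_P(M) \subseteq M^-$, whereas you route through $lfp(T_{P^+}) = K_0 \subseteq WF_P^+$ (monotonicity of the $K_i$ sequence) and the definitional disjointness of ${\cal TA}_P(M)$ from $WF_P^+ \cup WF_P^-$.
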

\noindent {\bf Proof.}
Clearly, $lfp(T_{P^+}) \supseteq lfp(T_{Q(M)^+})$ since 
$P^+ \supseteq Q(M)^+$. 

Let us prove, by induction on $i$, that 
$T_{P^+} \uparrow i \subseteq T_{Q(M)^+} \uparrow i$.
The result is trivial for the base case. 
Let us assume that the result holds for $i$ and
let us  prove it for $i+1$.
Consider $a \in T_{P^+} \uparrow i+1$. This means that 
there is a rule $r \in P^+$ such that $head(r) = a$ and 
$pos(r) \subseteq T_{P^+} \uparrow i$. 
Since $a \in lfp(T_{P^+})\subseteq M^+$, we know 
that $a \in M^+$ and therefore $r \in Q(M)^+$. Thus, 
thanks to the inductive hypothesis, we can conclude that 
$a \in T_{Q(M)^+} \uparrow i+1$. 
\hfill $\Box$ 

\begin{corollary}\label{co1}
For a program $P$, $K^P_0 = K^Q_0$. 
\end{corollary}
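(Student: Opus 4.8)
The plan is to observe that this corollary is an immediate consequence of Lemma~\ref{le-a1} together with the base case of Definition~\ref{kui}. First I would recall that, by Definition~\ref{kui}, the first component of the well-founded iteration for any program $R$ is initialized as $K_0 = lfp(T_{R^+})$, where $R^+$ denotes the set of definite rules of $R$. Instantiating this for the two programs under consideration, namely $P$ and $Q(M) = NR(P,{\cal TA}_P(M))$, gives $K^P_0 = lfp(T_{P^+})$ and $K^Q_0 = lfp(T_{Q(M)^+})$ purely by definition of the sequence $(K_i,U_i)$.

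Second, I would invoke Lemma~\ref{le-a1}, which asserts precisely that $lfp(T_{P^+}) = lfp(T_{Q(M)^+})$. Chaining the two definitional equalities with the lemma yields $K^P_0 = lfp(T_{P^+}) = lfp(T_{Q(M)^+}) = K^Q_0$, which is exactly the claim.

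There is essentially no obstacle to overcome at this stage: all the substantive work has already been carried out in Lemma~\ref{le-a1}, whose proof handles the nontrivial inclusion $T_{P^+}\uparrow i \subseteq T_{Q(M)^+}\uparrow i$ by induction on $i$, relying on the fact that every atom in $lfp(T_{P^+})$ belongs to $M^+$ and that the definite rules deriving such atoms are therefore retained in the negative reduct $Q(M)$. The corollary simply restates that set equality in the notation of the $(K_i,U_i)$ sequences, so the only care required is to align the definitions of $K^P_0$ and $K^Q_0$ with the two sides of Lemma~\ref{le-a1}.
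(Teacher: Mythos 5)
Your proposal is correct and matches the paper exactly: the paper states this as an unproved corollary of Lemma~\ref{le-a1}, precisely because $K^P_0 = lfp(T_{P^+})$ and $K^Q_0 = lfp(T_{Q(M)^+})$ by Definition~\ref{kui}, and the lemma gives equality of these two least fixpoints. Your accompanying remarks on why Lemma~\ref{le-a1} holds (every atom of $lfp(T_{P^+})$ is in $M^+$, so its deriving definite rules survive the negative reduct) also accurately reflect the paper's argument.
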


\begin{lemma}\label{l2}
For a program $P$, $U^Q_0 \subseteq U^P_0$. 
\end{lemma}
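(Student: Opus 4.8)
The plan is to reduce the claim to a simple monotonicity argument, using the inclusion $Q(M)\subseteq P$. First I would invoke Corollary~\ref{co1} to align the fixed first arguments of the two operators: since $K^P_0=K^Q_0$, I write $K_0$ for this common set, so that by Definition~\ref{kui} we have $U^P_0 = lfp(T_{P,K_0})$ and $U^Q_0 = lfp(T_{Q(M),K_0})$, where in both cases the component $V=K_0$ is held fixed.

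The key structural observation is that $Q(M)=NR(P,{\cal TA}_P(M))=P\setminus\{r\mid head(r)\in{\cal TA}_P(M)\}$ is a subset of $P$. Hence every rule of $Q(M)$ is already a rule of $P$, and comparing the defining formula of Definition~\ref{tpj} for the two programs with the same fixed $V=K_0$ yields the pointwise inclusion $T_{Q(M),K_0}(S)\subseteq T_{P,K_0}(S)$ for every set of atoms $S$.

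It then remains to transfer this pointwise domination to the least fixpoints. Both operators are monotone in $S$ (as noted after Definition~\ref{tpj}), so I would argue by induction on the iteration stages that $T_{Q(M),K_0}\uparrow i \subseteq T_{P,K_0}\uparrow i$ for all $i$. The base case is $\emptyset\subseteq\emptyset$, and for the inductive step one has $T_{Q(M),K_0}\uparrow(i+1)=T_{Q(M),K_0}(T_{Q(M),K_0}\uparrow i)\subseteq T_{P,K_0}(T_{Q(M),K_0}\uparrow i)\subseteq T_{P,K_0}(T_{P,K_0}\uparrow i)=T_{P,K_0}\uparrow(i+1)$, where the first inclusion uses the pointwise domination and the second uses monotonicity of $T_{P,K_0}$ together with the induction hypothesis. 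Taking unions over $i$ gives $U^Q_0=lfp(T_{Q(M),K_0})\subseteq lfp(T_{P,K_0})=U^P_0$, as required.

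There is essentially no hard step here; the argument is routine once Corollary~\ref{co1} is in hand to equate the fixed sets $K_0$. The only point demanding care is the bookkeeping: one must ensure that the two least fixpoints are genuinely computed with respect to the \emph{same} fixed $V=K_0$ (which is exactly what Corollary~\ref{co1} supplies), and that the relevant subset relation is $Q(M)\subseteq P$ at the level of whole programs, rather than a statement about the definite parts $Q(M)^+$ and $P^+$ that was the subject of Lemma~\ref{le-a1}.
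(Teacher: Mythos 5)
Your proof is correct and follows essentially the same route as the paper: both equate $K^P_0=K^Q_0$ via Corollary~\ref{co1} and then show $T_{Q(M),K_0}\uparrow i \subseteq T_{P,K_0}\uparrow i$ by induction on the iteration stages, using $Q(M)\subseteq P$. The paper merely unfolds the inductive step at the level of individual rules, where you package it as pointwise domination plus monotonicity of $T_{P,K_0}$ --- the same argument in slightly more abstract clothing.
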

\noindent {\bf Proof.}
We prove, by induction on $i$, that 
$T_{Q(M),K^Q_0} \uparrow i \subseteq T_{P,K^P_0} \uparrow i$.

\noindent 
{\bf Base:} The result is obvious for $i=0$, since
\[T_{Q(M),K^Q_0} \uparrow 0 = \emptyset = T_{P,K^P_0} \uparrow 0\]
Let $a \in T_{Q(M),K^Q_0} \uparrow 1$. 
This implies that there is  $r \in Q(M)$ such that $head(r) = a$,
$pos(r) = \emptyset$, and $neg(r) \cap K^Q_0 =\emptyset$.
Since $Q(M) \subseteq P$, we also have that
$r \in P$. Furthermore, since 
$K^P_0 = K^Q_0$ (from Corollary~\ref{co1}), 
we have that $a \in T_{P,K^P_0} \uparrow 1$.

\noindent 
{\bf Step:} Let us assume the result to be true for $i$ and
let us consider the iteration $i+1$.
Let $a \in T_{Q(M),K^Q_0} \uparrow i+1$. 
This implies that there is a rule $r \in Q(M)$ such that
\begin{itemize}
\item  $head(r) = a$,
\item $pos(r) \subseteq T_{Q(M),K^Q_0} \uparrow i$, and 
\item $neg(r) \cap K^Q_0 =\emptyset$.
\end{itemize}
Since $Q(M) \subseteq P$, then we have that  $r \in P$. Furthermore, since 
$K^P_0 = K^Q_0$, we have that $a \in T_{P,K^P_0} \uparrow i+1$.
\hfill $\Box$ 

\begin{proposition}
\label{sequence}
For every $i$, $U^Q_i \subseteq U^P_i$ and $K^P_i \subseteq K^Q_i$. 
\end{proposition}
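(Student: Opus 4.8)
The plan is to prove the two inclusions \emph{simultaneously} by induction on $i$, exploiting three monotonicity properties of the operator $T_{P,V}$: it is monotone in its argument $S$ (as noted after Definition~\ref{tpj}); it is \emph{antimonotone} in $V$, since enlarging $V$ can only suppress rules through the condition $neg(r)\cap V=\emptyset$, so $V\subseteq V'$ yields $T_{P,V'}(S)\subseteq T_{P,V}(S)$; and it is monotone in the program, so $R\subseteq R'$ yields $T_{R,V}(S)\subseteq T_{R',V}(S)$. Write $(A_i)$ for $K^P_i\subseteq K^Q_i$ and $(B_i)$ for $U^Q_i\subseteq U^P_i$. The base case $i=0$ is already available: Corollary~\ref{co1} gives $K^P_0=K^Q_0$, hence $(A_0)$, and Lemma~\ref{l2} gives $(B_0)$. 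In the inductive step I would establish the implications in the order $(B_i)\Rightarrow(A_{i+1})\Rightarrow(B_{i+1})$.

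For $(A_{i+1})$, that is $K^P_{i+1}=lfp(T_{P,U^P_i})\subseteq lfp(T_{Q(M),U^Q_i})=K^Q_{i+1}$, I would run an inner induction on the iteration index $j$ to show $T_{P,U^P_i}\uparrow j\subseteq T_{Q(M),U^Q_i}\uparrow j$; this mirrors the inner induction of Lemma~\ref{l2}, but with the inclusion reversed and with an extra membership argument. In the successor step, take $a\in T_{P,U^P_i}\uparrow (j+1)$, witnessed by a rule $r\in P$ with $head(r)=a$, $pos(r)\subseteq T_{P,U^P_i}\uparrow j$ and $neg(r)\cap U^P_i=\emptyset$. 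Since $a\in K^P_{i+1}\subseteq W^+$ and ${\cal TA}_P(M)$ is disjoint from $W^+\cup W^-$, we have $a\notin{\cal TA}_P(M)$, so the defining rule $r$ survives in $Q(M)=NR(P,{\cal TA}_P(M))$. The positive body descends by the inner hypothesis, while $neg(r)\cap U^Q_i=\emptyset$ follows from $U^Q_i\subseteq U^P_i$, which is exactly $(B_i)$; hence $a\in T_{Q(M),U^Q_i}\uparrow (j+1)$, closing the inner induction.

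The reverse direction $(B_{i+1})$, namely $U^Q_{i+1}\subseteq U^P_{i+1}$, is the easy one and requires no inner induction. Using $Q(M)\subseteq P$ together with $K^P_{i+1}\subseteq K^Q_{i+1}$ (the just-proved $(A_{i+1})$), the antimonotonicity in $V$ and the monotonicity in the program give, for every $S$,
\[
T_{Q(M),K^Q_{i+1}}(S)\ \subseteq\ T_{Q(M),K^P_{i+1}}(S)\ \subseteq\ T_{P,K^P_{i+1}}(S),
\]
so the $Q(M)$-operator lies pointwise below the $P$-operator. Passing to least fixpoints then yields $U^Q_{i+1}=lfp(T_{Q(M),K^Q_{i+1}})\subseteq lfp(T_{P,K^P_{i+1}})=U^P_{i+1}$.

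The genuine obstacle is $(A_{i+1})$, where two effects pull in opposite directions: shrinking the program from $P$ to $Q(M)$ tends to \emph{shrink} the least fixpoint, whereas replacing $U^P_i$ by the smaller set $U^Q_i$ tends (by antimonotonicity) to \emph{enlarge} it. These are reconciled only through the specific shape of ${\cal TA}_P(M)$: no atom that is positively derivable in the $K$-computation of $P$ can be a tentative assumption, because such atoms all lie in $W^+$. Accordingly, the one external fact the argument relies on is $K^P_i\subseteq W^+$ for all $i$, which follows because the sequence $(K^P_i)_i$ is non-decreasing with limit $W^+=K^P_j$ in the alternating-fixpoint characterization of the well-founded model~\cite{apt94a}. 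With this bound the simultaneous induction closes and the proposition follows.
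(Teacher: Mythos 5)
Your proof is correct and follows essentially the same route as the paper's: a simultaneous induction on $i$ with the base case from Corollary~\ref{co1} and Lemma~\ref{l2}, the step order $(B_i)\Rightarrow(A_{i+1})\Rightarrow(B_{i+1})$, and the same key fact $K^P_{i+1}\subseteq W^+$ (hence heads avoid ${\cal TA}_P(M)$ and their rules survive in $NR(P,{\cal TA}_P(M))$) driving the hard inclusion. The only difference is cosmetic: for $U^Q_{i+1}\subseteq U^P_{i+1}$ you invoke pointwise dominance of the operators (antimonotonicity in $V$ plus monotonicity in the program) and pass to least fixpoints, whereas the paper unfolds exactly this argument as an explicit inner induction on the iteration index $j$.
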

\noindent {\bf Proof.}
We will  prove this result by induction on $i$.
The base case follows directly from Lemmas \ref{le-a1}-\ref{l2}. 
Let us proceed with the inductive step.

First, let us prove by induction on $j$ that 
$T_{P,U^P_{i-1}} \uparrow j \subseteq T_{Q(M),U^Q_{i-1}} \uparrow j$.

\begin{itemize}

\item {\bf Base:} 
Let $a \in T_{P,U^P_{i-1}} \uparrow 1$. 
This implies that there is a rule $r \in P$ such that $head(r) = a$,
$pos(r) = \emptyset$, and $neg(r) \cap U^P_{i-1} =\emptyset$.
Since $K^P_{i} \subseteq W^+$, we have that 
$a \not\in {\cal TA}(M)$, and thus $r \in Q(M)$. Furthermore, since 
$U^Q_{i-1} \subseteq U^P_{i-1}$,
we have that 
$neg(r) \cap U^Q_{i-1} = \emptyset$. Hence, 
$a \in T_{Q(M),U^Q_{i-1}} \uparrow 1$.

\item {\bf Step:} let us assume the result to hold for
$j$ and let us prove it for $j+1$.
Let $a \in T_{P,U^P_{i-1}} \uparrow j+1$. 
This implies that there is a rule $r \in P$ such that 
\begin{itemize}
\item $head(r) = a$,
\item $pos(r) \subseteq T_{P,U^P_{i-1}} \uparrow j$, and 
\item $neg(r) \cap U^P_{i-1} =\emptyset$.
\end{itemize}
Since $K^P_{i} \subseteq W^+$, we have that 
$a \not\in {\cal TA}(M)$, and thus $r \in Q(M)$. Furthermore, since 
$U^Q_{i-1} \subseteq U^P_{i-1}$, we have that 
$neg(r) \cap U^Q_{i-1} = \emptyset$. 
By inductive hypothesis, we have that 
$pos(r) \subseteq T_{Q(M),U^Q_{i-1}} \uparrow j$.
Hence, $a \in T_{Q(M),U^Q_{i-1}} \uparrow j+1$.
\end{itemize}

\medskip \noindent
Let us now prove, by induction on $j$ that 
$T_{Q(M),K^Q_{i}} \uparrow j \subseteq T_{P,K^P_{i}} \uparrow j$.

\begin{itemize}
\item {\bf Base:} 
Let $a \in T_{Q(M),K^Q_{i}} \uparrow 1$. 
This implies that there is a rule  $r \in Q(M)$ such that $head(r) = a$,
$pos(r) = \emptyset$, and $neg(r) \cap K^Q_{i} =\emptyset$.
Since $Q(M) \subseteq P$, we have that $r \in P$. Furthermore, since 
$K^P_{i} \subseteq K^Q_{i}$, we have that 
$neg(r) \cap K^P_{i} = \emptyset$. Hence, 
$a \in T_{P,K^P_{i}} \uparrow 1$.

\item {\bf Step:} let us assume the result to hold for $j$ and
let us consider the case $j+1$.
Let $a \in T_{Q(M),K^Q_{i}} \uparrow j+1$. 
This implies that there is a rule $r \in Q(M)$ such that $head(r) = a$,
$pos(r)\subseteq T_{Q(M),K^Q_{i}} \uparrow j$, 
and $neg(r) \cap K^Q_{i} =\emptyset$.
Since $Q(M) \subseteq P$, we have that $r \in P$. Furthermore, since 
$K^P_{i} \subseteq K^Q_{i}$, we have that 
$neg(r) \cap K^P_{i} = \emptyset$. 
By inductive hypothesis, we also have that 
$pos(r) \subseteq T_{P,K^P_{i}} \uparrow j$. 
Hence, 
$a \in T_{P,K^P_{i}} \uparrow j+1$.
\end{itemize}
\hfill$\Box$

\begin{lemma}
If $M$ is an answer
set  of $P$, then $M$ is an answer set of $Q(M)$.
\end{lemma}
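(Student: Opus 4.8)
The plan is to work directly with the reduct characterization of answer sets, rather than through the $(K_i,U_i)$ sequences. Since $M$ is an answer set of $P$, it is a complete interpretation with $M^+ = lfp(T_{P^M})$; to show that $M$ is an answer set of $Q(M)$ it therefore suffices to establish $M^+ = lfp(T_{Q(M)^M})$, because $M$ is already complete. First I would record the single observation on which everything hinges: by definition ${\cal TA}_P(M) \subseteq M^-$, so every rule deleted in passing from $P$ to $Q(M)=NR(P,{\cal TA}_P(M))$ has a head that is \emph{false} in $M$ (i.e.\ not in $M^+$).

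Next I would compare the two reducts. Because $Q(M)\subseteq P$ and the reduct operation applies the same deletion criterion ($neg(r)\cap M^+=\emptyset$) rule by rule, we immediately get $Q(M)^M \subseteq P^M$; by monotonicity of the immediate consequence operator this yields $lfp(T_{Q(M)^M}) \subseteq lfp(T_{P^M}) = M^+$, one of the two inclusions for free. For the reverse inclusion I would show, by induction on $i$, that $T_{P^M}\uparrow i \subseteq T_{Q(M)^M}\uparrow i$. The base case is trivial. For the step, take $a \in T_{P^M}\uparrow(i+1)$; then some rule $head(r)\uffa pos(r)$ of $P^M$ has $head(r)=a$ and $pos(r)\subseteq T_{P^M}\uparrow i$. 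Since $a\in lfp(T_{P^M})=M^+$, we have $a\notin M^-$, hence $a\notin{\cal TA}_P(M)$, so the originating rule $r$ survives in $Q(M)$ and $head(r)\uffa pos(r)\in Q(M)^M$; combining with the inductive hypothesis $pos(r)\subseteq T_{Q(M)^M}\uparrow i$ gives $a\in T_{Q(M)^M}\uparrow(i+1)$. Passing to fixpoints gives $M^+\subseteq lfp(T_{Q(M)^M})$, and together with the previous inclusion we obtain $M^+=lfp(T_{Q(M)^M})$, so $M$ is an answer set of $Q(M)$.

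The only real content lies in the forward inclusion, and its crux is the remark that the deleted rules all have false heads: because $M^+$ is exactly the least fixpoint of $T_{P^M}$, no such rule can ever fire in the iteration producing $M^+$ (its body would otherwise force its head into $M^+$), so deleting it leaves the least model unchanged. I expect this to be the main---though modest---obstacle, namely making precise that rules whose heads lie in ${\cal TA}_P(M)\subseteq M^-$ are inert in the computation of $lfp(T_{P^M})$; everything else is routine fixpoint bookkeeping that runs in parallel to Lemmas~\ref{le-a1} and \ref{l2}.
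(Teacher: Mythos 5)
Your proposal is correct and takes essentially the same route as the paper's own proof: the easy inclusion $lfp(T_{Q(M)^{M^+}}) \subseteq lfp(T_{P^{M^+}})$ from $Q(M) \subseteq P$, followed by an induction on $i$ showing $T_{P^{M^+}} \uparrow i \subseteq T_{Q(M)^{M^+}} \uparrow i$, with the identical key observation that any atom $a$ produced by the iteration lies in $M^+$, hence outside ${\cal TA}_P(M) \subseteq M^-$, so the rule deriving it survives the negative reduct. Your closing remark making explicit why deleted rules are inert is a slight elaboration of what the paper leaves implicit, but the argument is the same.
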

\noindent {\bf Proof.} 
Obviously, $lfp(T_{Q(M)^{M^+}}) \subseteq lfp(T_{P^{M^+}})$ 
because $Q(M)^{M^+} \subseteq P^{M^+}$. Thus, it is sufficient to show that
$lfp(T_{P^{M^+}}) \subseteq lfp(T_{Q(M)^{M^+}})$. 
We prove by induction on $i$ that 
$T_{P^{M^+}} \uparrow i \subseteq T_{Q(M)^{M^+}} \uparrow i$. 

\noindent 
{\bf Base:} Let $a \in T_{P^{M^+}} \uparrow 1$. 
This implies that there is  a rule $r \in P$ such that $head(r) = a$,
$pos(r) = \emptyset$, and $neg(r) \subseteq M^-$. 
Because $a \in M^+$, we have that $r \in Q(M)$. 
Thus, $a \in T_{Q(M)^{M^+}} \uparrow 1$. 

\noindent 
{\bf Step:} Let $a \in T_{P^{M^+}} \uparrow i+1$. 
This implies that there exists a rule $r \in P$ such that $head(r) = a$,
$pos(r) \subseteq T_{P^{M^+}} \uparrow i$, and $neg(r) \subseteq M^-$. 
Since $a \in M^+$, we have that $r \in Q(M)$. 
Thus, $a \in T_{Q(M)^{M^+}} \uparrow i+1$. 
\hfill $\Box$

\medskip
Let us indicate in the rest of this discussion the
well-founded model of $Q(M)$ with $WF_Q$ and the well-founded
model of $P$ with $WF_P$.

\begin{lemma}
${\cal TA}_P(M) \subseteq WF_Q^-$.
\end{lemma}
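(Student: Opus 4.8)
The plan is to show that no element of ${\cal TA}_P(M)$ can ever enter the set $U^Q_j$ produced by the well-founded computation for $Q(M)$, and then to invoke the characterization of the negative part of the well-founded model as the complement of $U^Q_j$.

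First I would recall, applying the Theorem restated at the beginning of this appendix to the program $Q(M)$, that if $j$ is the first index with $(K^Q_j, U^Q_j) = (K^Q_{j+1}, U^Q_{j+1})$, then $WF_Q = \langle W^+_Q, W^-_Q \rangle$ satisfies $W^-_Q = {\cal A} \setminus U^Q_j$. Hence proving ${\cal TA}_P(M) \subseteq W^-_Q$ is equivalent to proving the disjointness ${\cal TA}_P(M) \cap U^Q_j = \emptyset$.

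The crucial observation is that, by Definition~\ref{nred}, the negative reduct $Q(M) = NR(P,{\cal TA}_P(M))$ is obtained from $P$ by deleting every rule whose head lies in ${\cal TA}_P(M)$; therefore no atom $a \in {\cal TA}_P(M)$ is the head of any rule of $Q(M)$. Inspecting Definition~\ref{tpj}, for every pair of atom sets $S$ and $V$ we have $T_{Q(M),V}(S) \subseteq \{head(r) \mid r \in Q(M)\}$, since the operator only ever produces heads of rules of $Q(M)$. Consequently each iterate $T_{Q(M),V} \uparrow n$, and hence each least fixpoint $U^Q_i = lfp(T_{Q(M),K^Q_i})$, is contained in the set of rule heads of $Q(M)$, which is disjoint from ${\cal TA}_P(M)$.

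Combining these facts gives $U^Q_j \cap {\cal TA}_P(M) = \emptyset$, and since ${\cal TA}_P(M) \subseteq NANT(P) \subseteq {\cal A}$ we conclude ${\cal TA}_P(M) \subseteq {\cal A} \setminus U^Q_j = W^-_Q$, as desired. I do not expect a genuine obstacle: the only point requiring care is to confirm that $U^Q_j$—and not merely $K^Q_j$—is built solely from heads of rules of $Q(M)$, which follows immediately from the form of the $T_{Q(M),V}$ operator; everything else is routine bookkeeping through the cited characterization of the well-founded model.
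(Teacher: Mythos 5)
Your proof is correct and takes essentially the same route as the paper's: the paper argues precisely that no atom $a \in {\cal TA}_P(M)$ is the head of any rule of $Q(M)$, hence $a \notin U^Q_i$ for every $i$, and therefore $a \in WF_Q^-$ by the characterization $W^- = {\cal A} \setminus U_j$. You merely make explicit the step the paper leaves implicit, namely that the operator $T_{Q(M),V}$ only ever produces heads of rules of $Q(M)$, so every iterate and hence every $U^Q_i$ is disjoint from ${\cal TA}_P(M)$.
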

\noindent 
{\bf Proof.} Consider $a \in {\cal TA}_P(M)$. We have that 
$a \not\in U^Q_i$ for every $i$ 
since there are no rules with $a$ as head in $Q(M)$.
This means that $a \in WF_Q^-$. Thus, 
${\cal TA}_P(M) \subseteq WF_Q^-$.
\hfill $\Box$

\begin{proposition}
The well-founded model $WF_Q$ of $Q(M)$ is equal to $M$, i.e., $W_Q = M$.
\end{proposition}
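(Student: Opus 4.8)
The plan is to reduce the claim to a single inclusion and then settle that by a minimal‑rank argument in which the assumptions play the decisive role. Throughout I use the restated theorem, so that $WF_Q = \langle WF_Q^+, WF_Q^-\rangle$ with $WF_Q^+ = K^Q_j$ and $WF_Q^- = {\cal A}\setminus U^Q_j$ at the stabilization index $j$.

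\emph{Reduction.} By the lemma just established, $M$ is an answer set of $Q(M)$, so the standard fact that the well‑founded model is contained in every answer set~\cite{apt94a} gives $WF_Q^+\subseteq M^+$ and $WF_Q^-\subseteq M^-$. Since $M$ is complete, the whole proposition will follow once I show the reverse inclusion on the positive part, $M^+\subseteq WF_Q^+ = K^Q_j$. Indeed, from $K^Q_j = M^+$ and the fixpoint identity $U^Q_j = lfp(T_{Q(M),K^Q_j})$ I get $U^Q_j = lfp(T_{Q(M),M^+}) = lfp(T_{Q(M)^{M^+}}) = M^+$, where the middle equality uses that $M$ is complete (so $neg(r)\cap M^+ = \emptyset$ iff $neg(r)\subseteq M^-$) and the last uses that $M$ is an answer set of $Q(M)$; hence $WF_Q = \langle M^+, {\cal A}\setminus M^+\rangle = \langle M^+, M^-\rangle = M$.

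\emph{The core inclusion $M^+\subseteq K^Q_j$.} I would argue by contradiction on $D^+ := M^+\setminus K^Q_j$, ranking each atom of $M^+$ by the stage at which it enters $lfp(T_{Q(M)^{M^+}})$. Pick $c\in D^+$ of least rank together with a justifying rule $r\in Q(M)$, so $head(r)=c$, $neg(r)\subseteq M^-$, and every atom of $pos(r)$ has strictly smaller rank. Because $K^Q_j = lfp(T_{Q(M),U^Q_j})$ is a fixpoint and $c\notin K^Q_j$, the rule $r$ cannot fire there, i.e.\ $pos(r)\not\subseteq K^Q_j$ or $neg(r)\cap U^Q_j\neq\emptyset$.

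\emph{Main obstacle.} The crux is to rule out the negative alternative $neg(r)\cap U^Q_j\neq\emptyset$, and this is exactly where the assumptions are used. Any $d\in neg(r)$ belongs to $NANT(P)$ and satisfies $d\in neg(r)\subseteq M^-$. If in addition $d\in U^Q_j$, then $d\notin WF_Q^-$; Proposition~\ref{sequence} yields $WF_P\sqsubseteq WF_Q$, in particular $WF_P^-\subseteq WF_Q^-$, so $d\notin WF_P^-$, while $d\notin WF_P^+$ since $WF_P^+\subseteq M^+$ is disjoint from $M^-$. Thus $d\in NANT(P)\cap M^-\setminus(WF_P^+\cup WF_P^-) = {\cal TA}_P(M)$, and the lemma ${\cal TA}_P(M)\subseteq WF_Q^-$ forces $d\in WF_Q^-$, contradicting $d\in U^Q_j$. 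Hence the negative alternative is impossible, so $pos(r)\not\subseteq K^Q_j$, producing an atom $b\in pos(r)\cap(M^+\setminus K^Q_j) = D^+$ of strictly smaller rank than $c$, contradicting minimality. Therefore $D^+ = \emptyset$, which closes the reduction. I expect this elimination step to be the only delicate point: it disentangles the apparent circularity between ``$K^Q_j$ reaching $M^+$'' and ``$U^Q_j$ shrinking to $M^+$'' by showing that the positive reconstruction of $M$ can stall only through a negative body literal that the choice of ${\cal TA}_P(M)$, via the preceding lemma, has already driven false.
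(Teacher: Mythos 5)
Your proof is correct, and it reaches the conclusion by a genuinely different decomposition than the paper's, even though it rests on the same three ingredients: the lemma that $M$ is an answer set of $Q(M)$, Proposition~\ref{sequence} (which yields $WF_P \sqsubseteq WF_Q$), and the lemma ${\cal TA}_P(M) \subseteq WF_Q^-$. The paper settles the \emph{negative} side first: it asserts $M^- = WF_Q^-$ up front (from ${\cal TA}_P(M)\subseteq WF_Q^-$ together with $WF_Q^- \subseteq M^-$), concludes $U^Q_k = {\cal A}\setminus M^-$ for some $k$, and only then proves $M^+ \subseteq K^Q_{k+1}$ by the stagewise induction $T_{P^{M^+}}\uparrow i \subseteq T_{Q(M),U^Q_k}\uparrow i$, in which the negative body literals are discharged trivially because $neg(r)\subseteq M^-$ and $U^Q_k = {\cal A}\setminus M^-$. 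You invert the order: you prove the \emph{positive} inclusion $M^+ \subseteq K^Q_j$ first, by a least-rank contradiction along the stages of $lfp(T_{Q(M)^{M^+}})$, and you eliminate the negative-literal obstruction pointwise --- any blocking $d \in neg(r)\cap U^Q_j$ is forced into ${\cal TA}_P(M)$ (it lies in $NANT(P)$ since it occurs negatively in a rule of $Q(M)\subseteq P$, lies in $M^-$, is outside $WF_P^-$ because $WF_P^-\subseteq WF_Q^-$ while $d\notin WF_Q^-$, and outside $WF_P^+$ because $WF_P^+\subseteq M^+$), whence $d\in WF_Q^-$, a contradiction --- after which the negative side falls out in one line from $U^Q_j = lfp(T_{Q(M),M^+}) = lfp(T_{Q(M)^{M^+}}) = M^+$. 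Your route buys something real here: the paper's opening inference that ${\cal TA}_P(M)\subseteq WF_Q^-$ entails $M^-\subseteq WF_Q^-$ is compressed, since $M^-$ can contain atoms outside $WF_P^-\cup{\cal TA}_P(M)$ (e.g., an atom whose only rule has a body atom in ${\cal TA}_P(M)$), which become false in $WF_Q$ only by propagation; your argument never needs falsity of such atoms, because the only atoms whose falsity you invoke occur in negative bodies and are therefore automatically in $NANT(P)$, exactly where the ${\cal TA}$ lemma applies. Conversely, the paper's ordering buys a mechanically simple positive induction with no case analysis once $M^-=WF_Q^-$ is in hand.
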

\noindent 
{\bf Proof.} From Proposition \ref{sequence}, we have that 
$WF_P^+ \subseteq WF_Q^+$ and $WF_P^- \subseteq WF_Q^-$. Furthermore, 
since ${\cal TA}_P(M) \subseteq WF_Q^-$, we can conclude that 
$M^- \subseteq WF_Q^-$. 
Since $M$ is an answer set of $Q(M)$, we also have that 
$WF_Q^- \subseteq M^-$. Thus, $M^- = WF_Q^-$. 
This conclusion implies that there is a value $k$ such that 
$U^Q_k = {\cal A} \setminus M^-$.

Let us now show that $K^Q_{k+1} = M^+$.
Since $M$ is an answer set of $Q(M)$, we immediately
have that 
$K^Q_{k+1} \subseteq M^+$.
Let us prove, by induction on $i$, that 
$T_{P^{M^+}} \uparrow i \subseteq T_{Q(M),U^Q_k} \uparrow i$. 

\noindent 
{\bf Base:} Let $a \in T_{P^{M^+}} \uparrow 1$. 
This implies that there is a rule $r \in P$ 
such that $head(r) = a$,
$pos(r) = \emptyset$, and $neg(r) \subseteq M^-$. 
Since $a \in M^+$, we have that $r \in Q(M)$. 
Furthermore, since $U^Q_k = {\cal A} \setminus M^-$ and 
$neg(r) \subseteq M^-$,
we have that $neg(r) \cap U^Q_k = \emptyset$.
Thus, $a \in T_{Q(M),U^Q_k} \uparrow 1$.

\noindent 
{\bf Step:} Let $a \in T_{P^{M^+}} \uparrow i+1$. 
This implies that there is a rule $r \in P$ such that $head(r) = a$,
$pos(r) \subseteq T_{P^{M^+}} \uparrow i$, and $neg(r) \subseteq M^-$. 
Since $a \in M^+$, we have that $r \in Q(M)$. 
Furthermore, since $U^Q_k = {\cal A} \setminus M^-$ 
and $neg(r) \subseteq M^-$,
we have that $neg(r) \cap U^Q_k = \emptyset$.
By inductive hypothesis, we also have that 
$pos(r) \subseteq T_{Q(M),U^Q_k} \uparrow i$. 
Thus, $a \in T_{Q(M),U^Q_k} \uparrow i+1$. 
\hfill $\Box$

\newpage

\subsection*{Proof of Lemma \ref{good}.}

The proof of this lemma makes use of several results 
and definitions in \cite{BrassDFZ01}.
For this reason, 
let us recall the necessary definitions from \cite{BrassDFZ01}. Given a program
$P$, let us denote with $heads(P) = \{a\:|\: \exists r\in P.\: head(r)=a\}$
and with $facts(P) = \{a \:|\: (a\uffa) \in P\}$.
We can consider the following program transformations~\cite{BrassDFZ01}:
\begin{itemize}
\item $P_1\mapsto_P P_2$ iff $a\uffa body \in P_1$, 
	$not\:b \in body$, $b \not\in heads(P_1)$, and\\
	$P_2 = (P_1 \setminus \{a \uffa body\})\cup\{a \uffa body\setminus\{not\:b\}\}$

\item $P_1 \mapsto_N P_2$ iff $a\uffa body \in P_1$, 
	$not\:b \in body$, $b\in facts(P_1)$, and\\
	$P_2 = P_1 \setminus \{a\uffa body\}$

\item $P_1 \mapsto_S P_2$ iff $a\uffa body \in P_1$,
	$b\in body$, $b \in facts(P_1)$, and \\
	$P_2 = (P_1 \setminus \{a\uffa body\})\cup  \{a \uffa (body\setminus\{b\})\}$

\item $P_1 \mapsto_F P_2$ iff $a\uffa body \in P_1$,
	$b\in body$, $b\not\in heads(P_1)$, and\\
	$P_2 = P_1 \setminus \{a\uffa body\}$

\item $P_1 \mapsto_L P_2$ iff there is a non-empty set of atoms $S$ such that
	\begin{itemize}
	\item for each rule $a\uffa body$ in $P_1$ where $a\in S$ we have that 
		$S\cap body \neq \emptyset$
	\item $P_2 = \{r\in P_1\:|\: body(r) \cap S = \emptyset\}$
	\item $P_1 \neq P_2$
	\end{itemize}
\end{itemize}
We write $P_1 \mapsto P_2$ to indicate that there exists a
transformation $t \in \{P, N, S, F, L\}$ such that 
$P_1 \mapsto_t P_2$. A program $P$ is {\em irreducible} 
if $P \mapsto_t P$ for every $t\in \{P, N, S, F, L\}$.
The results in \cite{BrassDFZ01} show that the above  transformation system
is terminating and confluent, i.e., given a program $P$, 
(a) there exists a sequence of programs $P=P_0,P_1,\ldots,P_n=P^*$ 
such that $P_i \mapsto P_{i+1}$ for $0 \le i \le n-1$
and $P^*$ is irreducible; and 
(b) for every sequence of programs $P=Q_0,Q_1,\ldots,Q_m=Q^*$ 
such that $Q_i \mapsto Q_{i+1}$ for $0 \le i \le m-1$
and $Q^*$ is irreducible then $P^* = Q^*$. We call the 
irreducible program $P^*$ obtained from $P$ through 
this transformation system the normal form of $P$. 
The result in \cite{BrassDFZ01} shows that the well-founded 
model $WF_P=\langle W^+,W^-\rangle$
of $P$ can be obtained by 
\[\begin{array}{lcr}
W^+=facts(P^*) & \hspace{1cm} & W^- = \{a\:|\: a\not\in heads(P^*)\}\end{array}\]
where $P^*$ is the normal form of $P$. 

\medskip
\noindent
{\em Lemma \ref{good}.} 
Let $P$ be a program, $M$ an answer set, and $WF_P$ the well-founded model
of $P$. 
Each atom $a\in WF_P$ has an off-line justification 
w.r.t. $M$ and $\emptyset$ which does not contain any negative cycle.

\medskip
\noindent
{\bf Proof:} 
Let us consider the sequence of transformations of the program 
\[P=P_0 \mapsto P_1 \mapsto \dots \mapsto  P^*\]
such that the transformation $\mapsto_L$ is used only when no other 
transformation can be applied. 
Furthermore, let
\[
WP_i = \langle W_i^+,W_i^-\rangle = \langle facts(P_i), \{a\:|\:a\not\in heads(P_i)\}\rangle
\]
We wish to prove, by induction on $i$, that if $a\in W_i^+\cup W_i^-$ then it
has a justification which is free of negative cycles and it contains 
exclusively elements in $W_i^+\cup W_i^-$. For the sake of simplicity, we will
describe justification graphs simply as set of edges. Also, we will denote
with ${\cal J}(a)$ the graph created for the element $a$.

\noindent
{\bf Base:} Let us consider $i = 0$. We have two cases:

\begin{itemize}
\item $a\in W_0^+$. This means that $a \in facts(P_0) = facts(P)$. 
This implies that ${\cal J}(a)=\{(a^+,\top,+)\}$ is a cycle-free 
justification for $a$ w.r.t. $WP_0$ and $\emptyset$. 

\item $a\in W_0^-$. This means 
that $a \not\in heads(P_0)=heads(P)$. From the definition of off-line justification,
this means that we can build the justification ${\cal J}(a)=\{(a^-,\bot,+)\}$, which is also
	cycle-free. In addition, the only atoms in the justification belongs to
	$W_0^+\cup W_0^-$.
\end{itemize}

\noindent
{\bf Step:} Let us assume that the inductive hypothesis holds for $j \le i$. 
Let us consider $a \in W_{i+1}^+ \cup  W_{i+1}^-$.
We have two cases: 

\begin{itemize}

\item $a \in W_{i+1}^+$. Without loss of generality, we can assume that 
	$a \not\in W_i^+$. 
	This means that the reduction step
	taken to construct $P_{i+1}$ from $P_i$ produced a fact of the form
	$a\uffa$.
	This implies that there exists a rule 
	\[ a \uffa b_1, \dots, b_k, not\:c_1, \dots, not\:c_h \]
	in $P$ such that each $b_j$ has been removed in the previous steps by $\mapsto_S$
	transformations,  and each $not\:c_r$ has been removed by $\mapsto_P$ 
	transformations. This 
	means that each $b_j \in W_i^+$, each $c_r \in W_i^-$, and, by inductive
	hypothesis, they admit justifications free of negative cycles. We can construct
	a justification ${\cal J}(a)$ for $a$, which is free of negative cycles and is 
	the union of all the justifications
	free of negative cycles of $b_1,\dots,b_k,c_1,\dots,c_h$ and the
	edges $(a^+,b_1^+,+), \dots, (a^+,b_k^+,+), (a^+,c_1^-,-), \dots, (a^+,c_h^-,-)$. Note that,
	with the exception of $a$,  
	the atoms involved in the justification ${\cal J}(a)$ are only 
	atoms of $W_i^+\cup W^-_i$.

\item 
	Let us now consider $a\in W_{i+1}^-$. 
	Again, we assume that $a \not\in W_i^-$.
	This means that in $P_{i+1}$ there are
	no rules left for $a$. Let us consider each individual rule for $a$ in $P$, of the
	generic form
	\begin{equation}\label{generic}
	a \uffa b_1, \dots, b_k, not\:c_1, \dots, not\:c_h 
	\end{equation}
	We consider two cases:

\begin{itemize}

\item $P_i \mapsto_N P_{i+1}$ or $P_i \mapsto_F P_{i+1}$. 
	By our assumption about the sequence of transformations, 
	we can conclude that the transformation $\mapsto_L$ has not been 
     	applied in removing rules whose head is $a$.
	In other words, each rule (\ref{generic}) has been 
	removed by either a $\mapsto_N$ or a $\mapsto_F$ transformation. 
	This implies that for each rule (\ref{generic}), there exists either 
	a $c_j \in W_i^+$ or a $b_l \in W_i^-$, i.e., 
	there exists $C^+ \subseteq W_i^+$ and $C^- \subseteq W_i^-$
	such that for each rule $r$ with $head(r)=a$, 
	$C^+ \cap neg(r) \ne \emptyset$ 
	or $C^- \cap pos(r) \ne \emptyset$. 
	Without loss of generality, we can assume that 
	$C^+$ and $C^-$ are minimal (w.r.t. $\subseteq$). 
	By inductive hypothesis, 
	we know that each element in $C^+$ and $C^-$ 
	posses a justification free of negative cycles which 
	contain only atoms in $WP_i$. 
	Similar to the first item, we have that 
	${\cal J}(a) = \bigcup_{c \in C^+ \cup C^-} {\cal J}(c) 
		\cup \{(a^-,c^+,-) \mid c \in C^+\} \cup
			\{(a^-,c^-,+) \mid c \in C^-\}$
	is a justification free of negative cycles for $a$ which,
	with the exception of $a^-$,  
	contains only atoms in $WP_i$.

\item $P_i \mapsto_L P_{i+1}$. 
	The fact that $a \in W_{i+1}^- \setminus W_i^-$ indicates that 
	all rules with $a$ as head have been removed. 
	In this case, there might be some rules with $a$ as its  
	head that have been removed by other transformations. Let 
	$R_1(a)$ (resp. $R_2(a)$) be the set of rules, whose head is $a$,
	which are removed by a transformation $\mapsto_F$ or $\mapsto_N$
	(resp. $\mapsto_L$). 
	
	Let $S$ be the set of atoms employed for the $\mapsto_L$ step 
	(i.e., the $i$-th step). Let $a_1,\ldots,a_s$ be an enumeration 
	of $S$. For a subset $X$ of $S$, 
	let $\min(X)$ denote the element in $X$ with the smallest index
	according to the above enumeration.
	
	Let
	\[
	\begin{array}{lcr}
	G_0 & = & \{ (a^-, b^-, +) \:|\: a\uffa body \in P_i, b = \min(body\cap S) \}\\
	G_{j+1} & = & \{ (b^-,c^-,+) \:|\: \exists (d^-,b^-,+) \in G_j, (b\uffa body)\in P_i, \\
		&   &			c = \min(body \cap S) \}
	\end{array}
	\]

	Because of the finiteness of $S$, there exists some $j$ such that 
	$G_j \subseteq \bigcup_{0 \le i \le j-1} G_i$. Let 	
	be the graph\footnote{
	Again, we define the graph by its set of edges. 
	} $G = \bigcup_{j\geq 0} G_j$. Because of the property of $S$,
	it is easy to see that for each atom $c$ in the graph $G$, 
	$support(c,G)$ is a LCE of $c$ w.r.t. $WP_i$ and $\emptyset$ 
	(w.r.t. the program $P_i$). Thus, 
	we have that $G$ is an off-line justification for $a$ in $P_i$.
	Furthermore, it contains only positive cycles and it
	is composed of atoms from $S\cup \{a\}$. 

	The construction of $G$ takes care of rules of the form (\ref{generic}), 
	which belong to $R_2(a)$. Similar to the previous case, we know that 
	for each atom $b$ such that $b^-$ is a node in $G$, 
	there exists $C_b^+ \subseteq W_{i-1}^+$ and $C_b^- \subseteq W_{i-1}^-$
	such that for each rule $r$ with $head(r) = b$ in $R_1(b)$, 
	$C_b^+ \cap neg(r) \ne \emptyset$ 
	or $C_b^- \cap pos(r) \ne \emptyset$. 
	$G$ can be extended to an off-line justification of $a$ by adding to its 
	the justifications of other atoms that falsify the rules in $R_1(b)$ 
	for every $b \in S$. 
	More precisely, for each atom $b$ such that $b^-$ is a node in $G$, 
	let 
	\[
		G_b = \bigcup_{c \in C_b^+ \cup C_b^-} {\cal J}(c) 
		\cup \{(b^-,c^+,-) \mid c \in C_b^+\} \cup
			\{(b^-,c^-,+) \mid c \in C_b^-\}.
	\]
	Note that ${\cal J}(c)$ in the above equation exists due to the 
	inductive hypothesis. Furthermore, each $G_b$ contains only 
	atoms in $WP_i$ with the exception of $b$ and therefore cannot 
	contain negative cycles. Thus, 
	$G' = G \cup \bigcup_{b \textnormal{ is a node in } G}  G_b$ 
	does not contain negative cycles. 
	It is easy to check that $support(c,G')$ 
	is a LCE of $c$ in $P$ w.r.t. $WP_{i+1}$ and $\emptyset$. 
	Thus, $G'$ is an off-line justification for $a$ 
	 in $P$	w.r.t. $WP_{i+1}$ and $\emptyset$. 

\end{itemize}
\end{itemize}
\hfill$\Box$

\newpage

\subsection*{Proof of Proposition \ref{propimp}.}

\noindent
{\em Proposition \ref{propimp}.}
Let $P$ be a program and $M$ an answer set. For each atom $a$,
there is  an off-line 
justification w.r.t. $M$ and ${\cal TA}_P(M)$
which does not contain negative cycles.

\medskip \noindent
{\bf Proof:}
The result is trivial, since all the elements in ${\cal TA}_P(M)$
are immediately set to false, and $NR(P,{\cal TA}_P(M))$ has a well-founded
model equal to $M$ (and thus all elements have justifications
free of negative cycles, from Lemma~\ref{good}).
\hfill$\Box$

\newpage

\subsection*{Proof of Proposition \ref{gammadelta}.}

The proof of this proposition will develop through a number of intermediate steps.
Let us start by introducing some notation. Given a program $P$ and given the
Herbrand universe $\cal A$, let
$nohead(P)=\{a \in {\cal A}\: :\:  \forall r\in P.\: a \neq head(r)\}$.
Furthermore, for two sets of atoms $\Gamma, \Delta$ such 
that $\Gamma \cap \Delta = \emptyset$, we define a 
program transformation $\rightarrow_{\langle \Gamma, \Delta \rangle}$ as follows.
The program $P'$, obtained from $P$ by 
\begin{itemize}

\item removing $r$ from $P$ if $pos(r) \cap \Delta \ne \emptyset$ or 
	$neg(r) \cap \Gamma \ne \emptyset$
	(remove rules that are inapplicable w.r.t. $\langle \Gamma, \Delta \rangle$).

\item replacing each remaining rule $r$ with $r'$ 
	where $head(r') = head(r)$, $pos(r') = pos(r) \setminus \Gamma$,
	and $neg(r') = neg(r) \setminus \Delta$ (normalize the body of the rules w.r.t. 
		$\langle \Gamma, \Delta \rangle$)
\end{itemize}
is said to be the result of the transformation $\rightarrow_{\langle \Gamma, \Delta \rangle}$.
We write $P \rightarrow_{\langle \Gamma, \Delta \rangle} P'$ to denote this fact. 

\medskip \noindent The following can be proven.

\begin{lemma} \label{l1}
Let $P$ be a program $\Gamma$ and $\Delta$ be 
two sets of atoms such that $\Gamma \subseteq facts(P)$,
$\Delta = \bigcup_{i=1}^k S_i \cup X$ where 
$X \subseteq nohead(P)$ and $S_1,\ldots,S_k$ is a sequence of sets 
of atoms such that $S_i \in cycles(\langle \emptyset,\emptyset\rangle)$ for $1 \le i \le k$. 
It holds that if $P \rightarrow_{\langle \Gamma , \Delta \rangle }P'$ then 
there exists a sequence of basic transformations 
$P \mapsto_{t_1} P_1 \mapsto_{t_2} \ldots \mapsto_{t_m} P'$ 
where $t_i \in \{P,N,S,F,L\}$ (see
the proof of Lemma~\ref{good} for the definition of these transformations).
\end{lemma}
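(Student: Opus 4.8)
The plan is to construct the required derivation explicitly, organized into three phases that together reproduce the two effects of $\rightarrow_{\langle\Gamma,\Delta\rangle}$: deleting the rules it discards and trimming the bodies of the rules it keeps. Throughout I will use two stability invariants that are immediate from the definitions of the five basic transformations. First, none of $\mapsto_P,\mapsto_N,\mapsto_S,\mapsto_F,\mapsto_L$ ever adds a rule or introduces a new head, so $heads(P_i)\subseteq heads(P)$ at every intermediate program $P_i$; hence every atom of $X\subseteq nohead(P)$ stays headless for the entire derivation. Second, none of them can delete a rule with empty body (for $\mapsto_L$, a rule $a\uffa$ with $a\in S$ would violate the side condition $S\cap body\neq\emptyset$), so each $g\in\Gamma\subseteq facts(P)$ remains a fact of every $P_i$. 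These are exactly the conditions needed to keep the relevant transformations applicable.

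Phase~1 handles the positive cycles using only $\mapsto_L$. For $i=1,\dots,k$ I apply $\mapsto_L$ with witness set $S_i$, skipping any $S_i$ whose relevant rules are already gone so that the side condition $P_1\neq P_2$ holds. The point is that $S_i\in cycles(\langle\emptyset,\emptyset\rangle)$ means precisely that every rule $r$ with $head(r)\in S_i$ satisfies $pos(r)\cap S_i\neq\emptyset$; and since a negated atom is never an atom of $S_i$, we have $body(r)\cap S_i=pos(r)\cap S_i$, which is exactly the side condition of $\mapsto_L$. Because this phase performs no body trimming, every rule body is still as in $P$ when a given $\mapsto_L$ step is taken, so the cycle condition that holds in $P$ still holds (deleting rules only makes the universally quantified condition easier to satisfy). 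After Phase~1 every rule with $pos(r)\cap\bigcup_i S_i\neq\emptyset$ has been deleted and, by the cycle condition, every atom of $\bigcup_i S_i$ has become headless.

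Phase~2 handles the rest of $\Delta=X\cup\bigcup_i S_i$, all of which is now headless. I first apply $\mapsto_F$ to delete every surviving rule $r$ with $pos(r)\cap X\neq\emptyset$ (legitimate since such an $x$ is headless), so that, combined with Phase~1, exactly the rules with $pos(r)\cap\Delta\neq\emptyset$ have been discarded. I then apply $\mapsto_P$ to strip the negative occurrence of each $d\in\Delta$ from the remaining bodies (legitimate since every such $d$ is now headless). Phase~3 handles $\Gamma$: I apply $\mapsto_N$ to delete each surviving rule $r$ with $neg(r)\cap\Gamma\neq\emptyset$, then $\mapsto_S$ to strip each positive literal $g\in\Gamma$ from the remaining bodies, both applicable because every $g\in\Gamma$ is still a fact. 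Crucially I strip via $\mapsto_S$ and $\mapsto_P$ only the designated atoms of $\Gamma$ and $\Delta$, never a newly created fact or headless atom, so the reduction halts exactly where $P'$ does.

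To finish I verify that the composite effect equals $\rightarrow_{\langle\Gamma,\Delta\rangle}$. A rule of $P$ is deleted by the sequence iff it is deleted in Phase~1 ($pos(r)\cap\bigcup_i S_i\neq\emptyset$), Phase~2 ($pos(r)\cap X\neq\emptyset$), or Phase~3 ($neg(r)\cap\Gamma\neq\emptyset$), i.e. iff $pos(r)\cap\Delta\neq\emptyset$ or $neg(r)\cap\Gamma\neq\emptyset$, which is the deletion criterion of $\rightarrow_{\langle\Gamma,\Delta\rangle}$; and a surviving rule ends with body $pos(r)\setminus\Gamma$ together with $neg(r)\setminus\Delta$, which is its image $r'$. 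I also check that kept rules are untouched in Phase~1 (if $pos(r)\cap\Delta=\emptyset$ then $pos(r)\cap S_i=\emptyset$ since $S_i\subseteq\Delta$, so $\mapsto_L$ leaves $r$ alone). The main obstacle is the bookkeeping around ordering and applicability rather than any deep fact: the cycle deletions must precede the $\mapsto_P$ trimming of negative occurrences of cycle atoms, because $\mapsto_P$ requires a headless atom and $\bigcup_i S_i$ becomes headless only after the $\mapsto_L$ steps; the two stability invariants are what make every transformation in the constructed sequence legal.
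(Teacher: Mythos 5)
Your proof is correct and follows essentially the same strategy as the paper's: an explicit simulation of $\rightarrow_{\langle\Gamma,\Delta\rangle}$ by grouping the basic transformations per component ($\mapsto_N,\mapsto_S$ for $\Gamma$; $\mapsto_F,\mapsto_P$ for the headless atoms; $\mapsto_L$ for the cycle sets), verifying the same stability facts (heads are never created, facts are never deleted). The only difference is ordering---you apply $\mapsto_L$ first, on untrimmed bodies, whereas the paper applies it after the $\Gamma$- and $X$-steps---a minor bookkeeping variation that, if anything, slightly simplifies checking that the cycle condition for $\mapsto_L$ remains satisfied.
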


\noindent {\bf Proof.}
We prove this lemma by describing the sequence of transformations $\mapsto$. 
Let $\Omega = \bigcup_{i=1}^k S_i$. 
The proof is based on the following observations:
\begin{enumerate}
\item Since $\Gamma$ is a set of facts, we can repeatedly apply the $\mapsto_N$ and $\mapsto_S$ 
transformations to $P$. The result is a program $P_1$ with 
the following properties: for every $r \in P_1$, 
there exists some $r' \in P$ with $neg(r') \cap \Gamma = \emptyset$ 

\begin{enumerate}
\item $neg(r) = neg(r')$

\item $head(r) = head(r')$ and 

\item $pos(r) = pos(r') \setminus \Gamma$. 
\end{enumerate}

\item Since $X$ is a set of atoms with no rules in $P_1$, 
we can repeatedly apply the $\mapsto_P$ and $\mapsto_F$ transformations 
to $P_1$ for the atoms belonging to $X$. 
The result is a program $P_2$ with the following properties: for every $r \in P_2$, 
there exists some $r' \in P_1$ with $pos(r') \cap X = \emptyset$ and 

\begin{enumerate}
\item $pos(r) = pos(r')$

\item $head(r) = head(r')$ and 

\item $neg(r) = neg(r') \setminus X$. 
\end{enumerate}

\item Since $\Omega$ is a set of atoms with cycles, we can apply the 
loop detection transformation $\mapsto_L$ 
for each of the loops in $\Omega$ to $P_2$; thus,  we obtain 
$P_3 = P_2 \setminus \{r \in P_2 \mid head(r) \in \Omega\}$.

\item Since atoms in $\Omega$ will no longer have defining rules in $P_3$, the transformations 
for atoms in $\Omega$ (similar to those for atoms in $X$) can be applied to $P_3$;
the result is the program $P_4$ with the property: for every $r \in P_4$, 
there exists some $r' \in P_3$ with $pos(r') \cap \Omega = \emptyset$ and 

\begin{enumerate}
\item $pos(r) = pos(r')$

\item $head(r) = head(r')$ and 

\item $neg(r) = neg(r') \setminus \Omega$. 
\end{enumerate}
\end{enumerate}

\noindent 
Finally, let us consider $P_4$;
for each rule $r \in P_4$, there is a rule $r' \in P$
such that $pos(r') \cap \Delta = \emptyset$,
$neg(r') \cap \Gamma = \emptyset$, and 

\begin{enumerate}
\item $pos(r) = pos(r') \setminus \Gamma$

\item $head(r) = head(r')$ and 

\item $neg(r) = neg(r') \setminus \Delta$. 
\end{enumerate}
This shows that $P \rightarrow_{\langle \Gamma, \Delta \rangle} P_4$.
\hfill$\Box$


\smallskip
\noindent 
For a program $P$, let $WF_P$ be its well-founded model. 
Let us define a sequence of programs $P_0, P_1,\ldots,P_k,...$ as follows:

\[
\begin{array}{lcl}
P_0 & = & P  \\
P_0 &  \rightarrow_{\langle \Gamma^1(WF_P), \Delta^1(WF_P) \rangle } & P_1 \\
P_i &  \rightarrow_{\langle \Gamma^{i+1}(WF_P), \Delta^{i+1}(WF_P) \rangle } & P_{i+1} \\
\end{array}
\]

\begin{lemma} \label{l2new}
Given the previously defined  sequence of programs, the following properties hold:
\begin{enumerate}
\item 
For $i \ge 0$, $\Gamma^{i}(WF_P) \subseteq facts(P_i)$ 
and $\Delta^{i}(WF_P)\subseteq nohead(P_i)$. 

\item 
If $\Gamma^i(WF_P) = \Gamma^{i+1}(WF_P)$ 
then $\Gamma^{i+1}(WF_P) = facts(P_{i+1})$.
 
\item 
If $\Delta^i(WF_P) = \Delta^{i+1}(WF_P)$ then 
	$\Delta^{i+1}(WF_P) = nohead(P_{i+1})$. 
\end{enumerate}
\end{lemma}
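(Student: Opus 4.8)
The plan is to prove all three statements by a single induction on $i$, treating property~1 as the inductive invariant and reading properties~2 and~3 off at the steps where the iteration stalls. The enabling observation, which I would establish first, is that the staged reductions collapse into a single one: because $\langle\Gamma^i(WF_P),\Delta^i(WF_P)\rangle$ is monotone in $i$ (by the monotonicity part of Proposition~\ref{gammadelta}) and consistent (so $\Gamma^i(WF_P)\cap\Delta^i(WF_P)=\emptyset$), applying $\rightarrow_{\langle\Gamma^{i},\Delta^{i}\rangle}$ to $P_{i-1}$ yields exactly the program obtained by applying $\rightarrow_{\langle\Gamma^{i}(WF_P),\Delta^{i}(WF_P)\rangle}$ directly to $P$. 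With this ``direct reduction'' view, $facts(P_i)$ and $nohead(P_i)$ can be read off the original rules: $a\in facts(P_i)$ iff some $r\in P$ with $head(r)=a$ survives with empty normalized body, i.e. $pos(r)\subseteq\Gamma^i(WF_P)$ and $neg(r)\subseteq\Delta^i(WF_P)$; and $a\in nohead(P_i)$ iff every $r\in P$ with $head(r)=a$ is deleted, i.e. $pos(r)\cap\Delta^i(WF_P)\neq\emptyset$ or $neg(r)\cap\Gamma^i(WF_P)\neq\emptyset$.

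The forward inclusions of property~1 then follow by unwinding the definitions of $\Gamma$ and $\Delta$ against this characterization. For $\Gamma^i(WF_P)\subseteq facts(P_i)$: every $a\in\Gamma^i(WF_P)$ is the head of some $r\in P$ with $a\in W^+$ and $I_{i-1}\models body(r)$, hence $pos(r)\subseteq\Gamma^{i-1}(WF_P)\subseteq\Gamma^i(WF_P)$ and $neg(r)\subseteq\Delta^{i-1}(WF_P)\subseteq\Delta^i(WF_P)$, so by the read-off criterion $a\in facts(P_i)$ (monotonicity lets me route every stage uniformly through $I_{i-1}$, including the facts added at stage $0$). For $\Delta^i(WF_P)\subseteq nohead(P_i)$: an atom $a$ is placed in $\Delta^i(WF_P)$ either because $PE(a^-,I_{i-1})\neq\emptyset$ --- in which case some LCE falsifies every rule for $a$ using literals drawn from $I_{i-1}\sqsubseteq\langle\Gamma^i(WF_P),\Delta^i(WF_P)\rangle$ --- or because $a$ lies in a cycle $S\in cycles(I_{i-1})$ with $S\subseteq W^-$; in the latter case each rule for $a$ satisfies $pos(r)\cap I_{i-1}^-\neq\emptyset$, $neg(r)\cap I_{i-1}^+\neq\emptyset$, or $pos(r)\cap S\neq\emptyset$, and since $S$ itself is absorbed into $\Delta^i(WF_P)$ all three alternatives force deletion. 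Hence every rule for $a$ is removed and $a\in nohead(P_i)$.

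For properties~2 and~3 I would argue the reverse inclusions under the stalling hypotheses, again through the read-off criterion. If $\Gamma^i(WF_P)=\Gamma^{i+1}(WF_P)$ and $a\in facts(P_{i+1})$, then some $r\in P$ with head $a$ has $pos(r)\subseteq\Gamma^{i+1}(WF_P)$, $neg(r)\subseteq\Delta^{i+1}(WF_P)$ and $a\in W^+$; the stalling of $\Gamma$ is what lets me conclude that the one further application $\Gamma_{I_i}$ would already have produced $a$, so $a\in\Gamma^{i+1}(WF_P)$, giving $facts(P_{i+1})\subseteq\Gamma^{i+1}(WF_P)$ and, with property~1, equality. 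The dual argument for $nohead(P_{i+1})\subseteq\Delta^{i+1}(WF_P)$ under $\Delta^i(WF_P)=\Delta^{i+1}(WF_P)$ is symmetric, but must additionally account for the loop component, linking it to the $\mapsto_L$ transformation as in Lemma~\ref{l1} and to the normal-form characterization $W^-=\{a\mid a\notin heads(P^*)\}$ recalled in the proof of Lemma~\ref{good}.

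I expect the main obstacle to be exactly this loop/cycle bookkeeping in the $\Delta$ statements. Two points need care. First, the base case, where $\Delta^0(WF_P)$ already contains whole positive loops of $P$ whose atoms still occur as heads in $P_0=P$, so the inclusion into $nohead$ only materializes once the transformation to the next program has deleted those loop rules --- I would align the indexing carefully and, where necessary, invoke Lemma~\ref{l1} to realize the cycle deletions as genuine $\mapsto_L$ steps. Second, in the reverse direction I must ensure that a fact, or a head-free atom, of the reduced program $P_{i+1}$ is traced back to an original rule of $P$ whose body is satisfied (resp.\ falsified) by $\langle\Gamma^i(WF_P),\Delta^i(WF_P)\rangle$ rather than only by $\langle\Gamma^{i+1}(WF_P),\Delta^{i+1}(WF_P)\rangle$; the stalling hypothesis is precisely what closes this one-step gap, and checking that it does so for $\Gamma$ and $\Delta$ in tandem is the technical heart of the argument.
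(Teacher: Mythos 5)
Your proposal is correct in substance and follows the same basic route as the paper's proof---unwinding the definition of $\rightarrow_{\langle \Gamma, \Delta \rangle}$ against the definitions of the iterates $\Gamma^i(WF_P)$ and $\Delta^i(WF_P)$---but you organize it more carefully, and in two places the extra care is genuinely valuable. First, your ``collapse'' lemma ($P \rightarrow_{\langle \Gamma^i(WF_P), \Delta^i(WF_P)\rangle} P_i$ directly, justified by monotonicity and consistency of the iterates) is only tacit in the paper: its proofs of items 2 and 3 pick a rule $r \in P_i$ and immediately conclude $a \in \Gamma^{i+1}(WF_P)$ or $a \in \Delta^{i+1}(WF_P)$, even though $\Gamma_I$ and $\Delta_I$ quantify over rules of $P$; the tracing-back you make explicit is exactly what is being suppressed there. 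Second, your base-case worry is well founded: property 1 as literally stated fails at $i=0$ in the cycle component of $\Delta^0(WF_P)$ (for $P=\{a \uffa a\}$ one has $a \in \Delta^0(WF_P)$ while $a \in heads(P_0)$, so $\Delta^0(WF_P) \not\subseteq nohead(P_0)$); the inclusion only materializes from $i \ge 1$, once the reduction w.r.t. $\langle \Gamma^1(WF_P), \Delta^1(WF_P)\rangle$ has deleted the loop rules. The paper's one-sentence proof of item 1 passes over this, and your re-indexing (or restricting item 1 to $i \ge 1$, which is all Lemma \ref{l3} uses) repairs it. You are also right that the reverse inclusions need the stalling hypotheses ``in tandem'': in item 2, a fact of $P_{i+1}$ yields a rule with $neg(r) \subseteq \Delta^{i+1}(WF_P)$, which a priori lands $a$ only in $\Gamma^{i+2}(WF_P)$ unless $\Delta$ has also stalled; the paper's proof quietly writes $\Delta^i$ where the definition of $P_{i+1}$ gives $\Delta^{i+1}$, and this is harmless only because Lemma \ref{l3} invokes items 2 and 3 at an index where both sequences stall simultaneously.

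Two small corrections to your plan. In the reverse inclusion of item 3 no loop bookkeeping is needed at all: a head-free atom of $P_{i+1}$ arises only from rule deletions, i.e., explicit falsification of every rule for $a$ by the current pair, which directly yields a nonempty $PE(a^-,\cdot)$---the $\mapsto_L$ and normal-form machinery you invoke belongs to Lemma \ref{l1} and Lemma \ref{good}, not here. And your read-off of $a \in W^+$ (resp. $a \in W^-$) is not free: it needs the one-line observation that a rule whose body is true (resp. an atom all of whose rules are falsified) in $\langle \Gamma^i(WF_P), \Delta^i(WF_P)\rangle \sqsubseteq WF_P$ forces the corresponding truth value in the well-founded model---a step the paper glosses over as well.
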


\noindent {\bf Proof.}
\begin{enumerate}
\item The first property holds because of the construction of $P_i$
and the definitions of $\Gamma^{i}(WF_P)$ 
and $\Delta^{i}(WF_P)$.

\item Consider some $a \in facts(P_{i+1})$. 
By the definition of $P_{i+1}$, there exists some rule $r \in P_i$ 
such that 
	\begin{itemize}
	\item $head(r) = a$,
	\item  $pos(r) \cap \Delta^i(WF_P) = \emptyset$, 
	\item $neg(r) \cap \Gamma^i(WF_P) = \emptyset$, 
	\item $pos(r) \setminus \Gamma^i(WF_P) = \emptyset$, and
	\item $neg(r) \setminus \Delta^i(WF_P) = \emptyset$.
	\end{itemize}
This implies that $pos(r) \subseteq \Gamma^i(WF_P)$ and
$neg(r) \subseteq \Delta^i(WF_P)$, i.e., $a \in \Gamma^{i+1}(WF_P)$.
This proves the equality of the second item.

\item Consider some $a \in nohead(P_{i+1})$. This means that 
every rule of $P_i$ having $a$ in the head has been removed;
i.e., for every $r \in P_i$ with $head(r) = a$, we have that
\begin{itemize}
	\item $pos(r) \cap \Delta^i(WF_P)  \ne \emptyset$ or
	\item $neg(r) \cap \Gamma^i(WF_P) \ne \emptyset$.
\end{itemize}
This implies 
that $a \in \Delta^{i+1}(WF_P)$, which allows us to conclude the
third property.
\end{enumerate}
\hfill$\Box$


\begin{lemma} \label{l3}
Let $k$ be the first index such that $\Gamma^k(WF_P) = \Gamma^{k+1}(WF_P)$ and
$\Delta^k(WF_P) = \Delta^{k+1}(WF_P)$. Then, $P_{k+1}$ 
is irreducible w.r.t. the transformations $\mapsto_{NPSFL}$. 
\end{lemma}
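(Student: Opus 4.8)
The plan is to show directly that none of the five elementary transformations $\mapsto_N,\mapsto_P,\mapsto_S,\mapsto_F,\mapsto_L$ can alter $P_{k+1}$. First I would record what the fixpoint buys us. Since $k$ is the first index with $\Gamma^k(WF_P)=\Gamma^{k+1}(WF_P)$ and $\Delta^k(WF_P)=\Delta^{k+1}(WF_P)$, Lemma~\ref{l2new}(2)--(3) give $facts(P_{k+1})=\Gamma^{k+1}(WF_P)$ and $nohead(P_{k+1})=\Delta^{k+1}(WF_P)$, hence $heads(P_{k+1})={\cal A}\setminus\Delta^{k+1}(WF_P)$. Moreover, because $P_{k+1}$ is produced from $P_k$ by $\rightarrow_{\langle\Gamma^{k+1}(WF_P),\Delta^{k+1}(WF_P)\rangle}$, every rule $r\in P_{k+1}$ satisfies four ``cleanness'' conditions: $pos(r)\cap\Delta^{k+1}(WF_P)=\emptyset$ and $neg(r)\cap\Gamma^{k+1}(WF_P)=\emptyset$ (surviving rules were not removed), and $pos(r)\cap\Gamma^{k+1}(WF_P)=\emptyset$ and $neg(r)\cap\Delta^{k+1}(WF_P)=\emptyset$ (bodies were normalised). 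I would also verify, by induction on $i$ using the shape of $\Gamma_I$ and $\Delta_I$, that $\Gamma^i(WF_P)\subseteq W^+$ and $\Delta^i(WF_P)\subseteq W^-$, so that $I_k:=\langle\Gamma^{k+1}(WF_P),\Delta^{k+1}(WF_P)\rangle\sqsubseteq WF_P$ and its two components are disjoint.

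The four elementary cases are then immediate contradictions against the cleanness conditions, reading $facts(P_{k+1})=\Gamma^{k+1}(WF_P)$ and $nohead(P_{k+1})=\Delta^{k+1}(WF_P)$: $\mapsto_N$ would require $not\:b$ in a body with $b\in facts(P_{k+1})$, i.e.\ $b\in neg(r)\cap\Gamma^{k+1}(WF_P)$; $\mapsto_S$ would require $b\in pos(r)\cap facts(P_{k+1})$; $\mapsto_P$ would require $b\in neg(r)$ with $b\notin heads(P_{k+1})$, i.e.\ $b\in neg(r)\cap\Delta^{k+1}(WF_P)$; and $\mapsto_F$ would require $b\in pos(r)$ with $b\notin heads(P_{k+1})$, i.e.\ $b\in pos(r)\cap\Delta^{k+1}(WF_P)$. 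Each contradicts exactly one cleanness condition, so none of $\mapsto_N,\mapsto_P,\mapsto_S,\mapsto_F$ applies.

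The delicate case is $\mapsto_L$, and this is the main obstacle. Suppose a non-empty loop set $S$ witnessed a $\mapsto_L$ step. Using $pos(r)\subseteq heads(P_{k+1})$ for every $r\in P_{k+1}$ (the first cleanness condition together with $nohead(P_{k+1})=\Delta^{k+1}(WF_P)$), I would replace $S$ by $S\cap heads(P_{k+1})$, still a non-empty loop set, now contained in $heads(P_{k+1})={\cal A}\setminus\Delta^{k+1}(WF_P)$. Next I would prove the composition identity $P\rightarrow_{\langle\Gamma^{k+1}(WF_P),\Delta^{k+1}(WF_P)\rangle}P_{k+1}$: since the $\Gamma^i(WF_P)$ and $\Delta^i(WF_P)$ are increasing and componentwise disjoint, the cumulative effect of the successive $\rightarrow$-steps coincides with a single transformation by the final pair, so a rule $r\in P$ with head in $S$ contributes a rule to $P_{k+1}$ exactly when $pos(r)\cap\Delta^{k+1}(WF_P)=\emptyset$ and $neg(r)\cap\Gamma^{k+1}(WF_P)=\emptyset$. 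With this, I would show $S\in cycles(I_k)$ for the original program $P$: fix $a\in S$ and $r\in P$ with $head(r)=a$; if $r$ is killed by $I_k$ we land in the first two clauses of the cycle definition, and otherwise its normalised image $r'$ lies in $P_{k+1}$ with head $a\in S$, so the loop condition gives $\emptyset\ne pos(r')\cap S\subseteq pos(r)\cap S$.

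Finally I would argue $S\subseteq W^-$, which closes the proof. Working with the characterisation of Definition~\ref{kui}, namely $W^-={\cal A}\setminus U_j$ with $U_j=lfp(T_{P,K_j})$ and $K_j=W^+$, I would take a minimal $m$ with $S\cap(T_{P,K_j}\uparrow m)\ne\emptyset$ and derive a contradiction: a witnessing rule $r$ for some $a\in S$ has $neg(r)\cap K_j=\emptyset$ and $pos(r)\subseteq T_{P,K_j}\uparrow(m-1)\subseteq U_j$, but the cycle clauses force either $neg(r)\cap\Gamma^{k+1}(WF_P)\ne\emptyset$ (hence $neg(r)\cap K_j\ne\emptyset$ since $\Gamma^{k+1}(WF_P)\subseteq W^+=K_j$), or $pos(r)\cap\Delta^{k+1}(WF_P)\ne\emptyset$ (hence a positive body atom outside $U_j$, impossible as $pos(r)\subseteq U_j$), or $pos(r)\cap S\ne\emptyset$ (violating minimality of $m$). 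Thus $S\cap U_j=\emptyset$, i.e.\ $S\subseteq W^-$. Then $S$ is one of the cycles counted in $\Delta_{I_k}(WF_P)$, so $S\subseteq\Delta^{k+2}(WF_P)=\Delta^{k+1}(WF_P)$ (using $I_{k+1}=I_k$), contradicting $S\subseteq{\cal A}\setminus\Delta^{k+1}(WF_P)$ and $S\ne\emptyset$. Hence no $\mapsto_L$ step applies either, and $P_{k+1}$ is irreducible.
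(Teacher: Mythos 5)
Your proof is correct, and its skeleton coincides with the paper's: at the fixpoint, Lemma~\ref{l2new} yields $facts(P_{k+1})=\Gamma^{k+1}(WF_P)$ and $nohead(P_{k+1})=\Delta^{k+1}(WF_P)$, which together with the cleanness of the surviving, normalized bodies rules out $\mapsto_N,\mapsto_P,\mapsto_S,\mapsto_F$, and the cycle component of $\Delta$ at the fixpoint rules out $\mapsto_L$. The difference is in how the loop case is discharged. The paper does it in one line by asserting $cycles(\langle\Gamma^{k+1}(WF_P),\Delta^{k+1}(WF_P)\rangle)=\emptyset$, which read literally is not true (any set of atoms without defining rules is vacuously a cycle); the correct reading is that every relevant cycle lies inside $\Delta^{k+1}(WF_P)=nohead(P_{k+1})$ and hence cannot witness a program-changing $\mapsto_L$ step. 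Your treatment supplies exactly the pieces needed to make this airtight: (i) the composition identity $P\rightarrow_{\langle\Gamma^{k+1}(WF_P),\Delta^{k+1}(WF_P)\rangle}P_{k+1}$ (using monotonicity of the iterates and $\Gamma^i(WF_P)\subseteq W^+$, $\Delta^i(WF_P)\subseteq W^-$ for disjointness), which matters because $cycles(\cdot)$ is defined with respect to the original program $P$ while a loop set lives in $P_{k+1}$; (ii) the replacement of a witnessing loop set $S$ by $S\cap heads(P_{k+1})$, where you could state explicitly that $S\cap heads(P_{k+1})=\emptyset$ would give $body(r)\cap S\subseteq pos(r)\cap S=\emptyset$ for all $r\in P_{k+1}$ and hence $P_2=P_1$, contradicting the $\mapsto_L$ requirement, so the intersection is non-empty; and (iii) the verification $S\subseteq W^-$ via the minimal stage of $U_j=lfp(T_{P,K_j})$ --- this step is genuinely necessary, since $\Delta_I(J)$ only absorbs cycles contained in $J^-$, and it is entirely absent from the paper. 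With $S\in cycles(I_k)$ and $S\subseteq W^-$, the fixpoint then forces $S\subseteq\Delta^{k+1}(WF_P)$, contradicting $S\subseteq heads(P_{k+1})$, just as you argue. In short: same route as the paper, but your version is rigorous precisely where the published argument is elliptical (indeed slightly overstated), and every one of your auxiliary claims checks out against Definitions~\ref{tpj} and \ref{kui} and the transformation system in the appendix.
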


\noindent {\bf Proof.}
This results follows from Lemma \ref{l2new}, since
 $\Gamma^{k+1}(WF_P) = facts(P_{k+1})$ 
and $\Delta^{k+1}(WF_P) = nohead(P_{k+1})$.
This means that $P_{k+1} \mapsto^*_{NPSF} P_{k+1}$. Furthermore,
$cycles(\langle \Gamma^{k+1}(WF_P), \Delta^{k+1}(WF_P)) = \emptyset$.
Hence,  $P_{k+1}$ is irreducible. 
\hfill$\Box$

\begin{lemma} \label{prop-wfs}
For a program $P$, 
$WF_P = \langle \Gamma(WF_P), \Delta(WF_P) \rangle$.
\end{lemma}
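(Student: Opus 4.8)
The plan is to show that the $\Gamma$/$\Delta$ iteration on $WF_P$ mirrors exactly the Brass--Dix--Freitag--Zukowski transformation sequence, so that its fixpoint recovers the normal form of $P$ and hence the well-founded model via the characterization of \cite{BrassDFZ01} recalled above. First I would observe that ${\cal TA}_P(WF_P) = \emptyset$: by definition every $a \in {\cal TA}_P(WF_P)$ must satisfy $a \in WF_P^-$ yet $a \notin (WF_P^+ \cup WF_P^-)$, which is impossible. Consequently $\Delta^0(WF_P) = \Delta_{\langle \emptyset,\emptyset\rangle}(WF_P)$, and the iteration defining $\Gamma(WF_P)$ and $\Delta(WF_P)$ coincides with the sequence $\Gamma^i(WF_P), \Delta^i(WF_P)$ used to build the programs $P_0, P_1, \ldots$ in Lemmas \ref{l2new} and \ref{l3}. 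Let $k$ be the first index with $\Gamma^k(WF_P) = \Gamma^{k+1}(WF_P)$ and $\Delta^k(WF_P) = \Delta^{k+1}(WF_P)$; then $\Gamma(WF_P) = \Gamma^{k+1}(WF_P)$ and $\Delta(WF_P) = \Delta^{k+1}(WF_P)$.

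The core of the argument would then proceed in two directions. On one hand, Lemma \ref{l2new} (items 2 and 3), applied at the fixpoint, gives the equalities $\Gamma^{k+1}(WF_P) = facts(P_{k+1})$ and $\Delta^{k+1}(WF_P) = nohead(P_{k+1})$. On the other hand, I would argue that $P_{k+1}$ is precisely the normal form $P^*$ of $P$. To do so, I would verify that at each step the preconditions of Lemma \ref{l1} hold --- namely that the atoms newly added to $\Gamma$ are facts of $P_i$ and those newly added to $\Delta$ decompose into a union of cycles together with head-less atoms of $P_i$, which is exactly what the first item of Lemma \ref{l2new} and the definitions of $\Gamma_I$, $\Delta_I$ supply. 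Lemma \ref{l1} then translates each transformation $P_i \rightarrow_{\langle \Gamma^{i+1}(WF_P), \Delta^{i+1}(WF_P)\rangle} P_{i+1}$ into a sequence of basic transformations $\mapsto_t$, so that $P = P_0 \mapsto^* P_{k+1}$; Lemma \ref{l3} shows $P_{k+1}$ is irreducible, and termination and confluence of the transformation system \cite{BrassDFZ01} then force $P_{k+1} = P^*$.

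Combining the two directions with the Brass--Dix--Freitag--Zukowski characterization $W^+ = facts(P^*)$ and $W^- = \{a \mid a \notin heads(P^*)\} = nohead(P^*)$ would yield $\Gamma(WF_P) = facts(P_{k+1}) = facts(P^*) = W^+$ and $\Delta(WF_P) = nohead(P_{k+1}) = nohead(P^*) = W^-$, i.e.\ $WF_P = \langle \Gamma(WF_P), \Delta(WF_P)\rangle$. The main obstacle I anticipate is the bookkeeping needed to confirm that the hypotheses of Lemma \ref{l1} are met at every step --- in particular that the increment $\Delta^{i+1}(WF_P) \setminus \Delta^i(WF_P)$ really splits into cycles (relative to the empty interpretation, as required) plus atoms without defining rules in $P_i$ --- together with the careful appeal to confluence needed to identify the single program $P_{k+1}$ with the \emph{unique} normal form, rather than merely with some irreducible program reachable from $P$.
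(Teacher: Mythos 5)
Your proposal is correct and takes essentially the same route as the paper: the paper's own proof of Lemma~\ref{prop-wfs} simply cites Lemmas~\ref{l2new} and~\ref{l3}, whose content is exactly the chain you spell out---identifying the $\Gamma/\Delta$ iteration on $WF_P$ with the program sequence $P_0, P_1, \ldots$, reading off $facts(P_{k+1})$ and $nohead(P_{k+1})$ at the fixpoint, and using Lemma~\ref{l1} with termination and confluence of the transformation system of \cite{BrassDFZ01} to identify $P_{k+1}$ with the normal form $P^*$. Your explicit observation that ${\cal TA}_P(WF_P) = \emptyset$, which justifies $\Delta^0(WF_P) = \Delta_{\langle \emptyset,\emptyset\rangle}(WF_P)$, is a detail the paper leaves implicit, and your flagged ``bookkeeping'' concern (that the $\Delta$-increments form cycles relative to the empty interpretation \emph{in the normalized program} $P_i$) is precisely the point the paper's Lemma~\ref{l1} hypotheses are designed to absorb.
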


\noindent {\bf Proof.}
This results follows  from Lemmas \ref{l2new} and \ref{l3}.
\hfill$\Box$

\begin{lemma} \label{l4}
Given two p-interpretations $I \sqsubseteq J$, we have that 
$\Gamma(I) \subseteq \Gamma(J)$ and $\Delta(I) \subseteq \Delta(J)$.
\end{lemma}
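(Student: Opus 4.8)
The plan is to reduce the statement to a monotonicity property of the one-step operators $\Gamma_I(\cdot)$ and $\Delta_I(\cdot)$, and then lift it to $\Gamma$ and $\Delta$ by induction on the iteration index. To keep the notation clear I would rename the two p-interpretations of the statement to $J_1 \sqsubseteq J_2$, reserving $I$ for the ``inner'' interpretation that parametrises the base operators. The first and central step is an auxiliary lemma asserting that, for a fixed program $P$, both $\Gamma_I(J)$ and $\Delta_I(J)$ are monotone in each of their two arguments with respect to $\sqsubseteq$: if $I \sqsubseteq I'$ and $J \sqsubseteq J'$ (with all pairs satisfying the domain constraint $I \sqsubseteq J$), then $\Gamma_I(J) \subseteq \Gamma_{I'}(J')$ and $\Delta_I(J) \subseteq \Delta_{I'}(J')$.

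To verify this auxiliary lemma I would inspect the definitions directly. For $\Gamma_I(J) = I^+ \cup \{head(r) \in J^+ \mid I \models body(r)\}$, enlarging $I$ enlarges $I^+$ and, since $I \models body(r)$ requires $pos(r) \subseteq I^+$ and $neg(r) \subseteq I^-$, this condition is preserved when $I$ grows; enlarging $J$ enlarges the pool $J^+$ of admissible heads. For $\Delta_I(J)$ I would treat its three components separately. The set $I^-$ is monotone in $I$. The term $\{a \in J^- \mid PE(a^-,I) \neq \emptyset\}$ is monotone in $J$ trivially, and monotone in $I$ because a negative LCE $S$ of $a^-$ remains an LCE when $I$ grows: the atoms of $S$ stay in $I^-$, its negated atoms stay in $I^+$, and the minimal rule-falsifying structure that makes $S$ an LCE depends only on $P$, not on $I$. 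The cycle term $\bigcup\{S \mid S \in cycles(I),\ S \subseteq J^-\}$ is monotone because the defining conditions for $S \in cycles(I)$ (some rule for an atom of $S$ is falsified by $I$, or loops back into $S$) can only become easier to satisfy as $I$ grows, so $cycles(I) \subseteq cycles(I')$, while the restriction $S \subseteq J^-$ is monotone in $J$.

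The second step is the induction. Given $J_1 \sqsubseteq J_2$, I would prove by simultaneous induction on $i$ that $\Gamma^i(J_1) \subseteq \Gamma^i(J_2)$ and $\Delta^i(J_1) \subseteq \Delta^i(J_2)$. For the base case, $\Gamma^0 = \Gamma_{\langle \emptyset,\emptyset\rangle}$ and the $J$-monotonicity of the base lemma give the $\Gamma$ inclusion, while $\Delta^0 = {\cal TA}_P(\cdot) \cup \Delta_{\langle \emptyset,\emptyset\rangle}(\cdot)$ requires in addition that ${\cal TA}_P$ be monotone in the negative part, which is immediate since it merely filters $J^-$ by membership in $NANT(P)$ and non-membership in $WF_P$. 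For the step, the induction hypothesis yields $\langle \Gamma^i(J_1),\Delta^i(J_1)\rangle \sqsubseteq \langle \Gamma^i(J_2),\Delta^i(J_2)\rangle$, i.e. the inner interpretations $I_i$ are $\sqsubseteq$-ordered; applying the base lemma in \emph{both} arguments to $\Gamma^{i+1} = \Gamma_{I_i}(\cdot)$ and $\Delta^{i+1} = \Delta_{I_i}(\cdot)$ closes the step (here I would also note that $I_i \sqsubseteq J$ always holds, so the base operators are applied to legitimate argument pairs). Taking unions over $i$ then gives $\Gamma(J_1) = \bigcup_i \Gamma^i(J_1) \subseteq \bigcup_i \Gamma^i(J_2) = \Gamma(J_2)$, and likewise for $\Delta$.

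The main obstacle is the monotonicity of $\Delta_I$ in its parameter $I$, because that definition mixes the LCE machinery (through $PE$) with the combinatorial notion of $cycles(I)$; the care lies in arguing rigorously that both ``existence of a negative LCE'' and ``membership in $cycles(I)$'' are upward-closed in $I$, rather than merely asserting it. The $\Gamma$ side, the handling of ${\cal TA}_P$, and the induction together with the final union are all routine set-chasing once the base lemma is in place.
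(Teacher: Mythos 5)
Your proof is correct and follows essentially the same route as the paper's: a simultaneous induction on the iteration index $i$, driven by the monotonicity of body satisfaction, of the $PE$/LCE condition, and of $cycles(\cdot)$ in the parameter interpretation. You merely factor these facts out as an explicit two-argument monotonicity lemma for $\Gamma_I(J)$ and $\Delta_I(J)$ and spell out the base case (including the monotonicity of ${\cal TA}_P$) that the paper dismisses as obvious, which is a presentational refinement rather than a different argument.
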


\noindent {\bf Proof.}
We prove that $\Gamma^i(I) \subseteq \Gamma^i(J)$ and 
$\Delta^i(I) \subseteq \Delta^i(J)$ by induction on $i$.

\begin{enumerate}
\item {\bf Base:} $i=0$. This step is obvious, since $I \subseteq J$.

\item {\bf Step:} Let $I_i = \langle \Gamma^i(I), \Delta^i(I) \rangle$ 
	and $J_i = \langle \Gamma^i(J), \Delta^i(J) \rangle$.
	From the inductive hypothesis, we can conclude  that $I_i \sqsubseteq J_i$.
	This result, together with the fact that, for any rule $r$, 
	$I_i \models body(r)$ implies $J_i \models body(r)$, allows us to conclude that 
	$\Gamma^{i+1}(I) \subseteq \Gamma^{i+1}(J)$. Similarly, from the 
	fact that $cycles(I_i) \subseteq cycles(J_i)$ and the 
	inductive hypothesis, we can show that 
	$\Delta^{i+1}(I) \subseteq \Delta^{i+1}(J)$. 
\end{enumerate}
\hfill$\Box$

\begin{lemma} \label{prop-ans}
Given a program $P$ and an answer set $M$ of $P$,
$M = \langle \Gamma(M), \Delta(M)\rangle$.
\end{lemma}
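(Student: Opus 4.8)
The plan is to reduce the claim to Lemma~\ref{prop-wfs} applied to the negative reduct $Q = NR(P,{\cal TA}_P(M))$, using Proposition~\ref{prop1}, which guarantees $WF_Q = M$. Since, by construction, $\Gamma^i(M) \subseteq M^+$ and $\Delta^i(M) \subseteq M^-$ for every $i$ (so that $\langle \Gamma(M),\Delta(M)\rangle \sqsubseteq M$ already holds), the only thing left to establish is the reverse containment $M \sqsubseteq \langle \Gamma(M), \Delta(M)\rangle$, i.e. $M^+ \subseteq \Gamma(M)$ and $M^- \subseteq \Delta(M)$.

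First I would record what Lemma~\ref{prop-wfs} yields for $Q$. Writing $\Gamma_Q,\Delta_Q$ for the $\Gamma,\Delta$ fixpoint operators computed with respect to the program $Q$, Lemma~\ref{prop-wfs} gives $WF_Q = \langle \Gamma_Q(WF_Q), \Delta_Q(WF_Q)\rangle$. Because $M$ is an answer set it is complete, so $WF_Q = M$ forces $M^+ \cup M^- = {\cal A}$; consequently ${\cal TA}_Q(M) = \emptyset$ (its defining condition $a \notin WF_Q^+ \cup WF_Q^-$ is never met), and the $\Delta^0$-term of the $Q$-computation is already harmless. Hence $\langle \Gamma_Q(M), \Delta_Q(M)\rangle = WF_Q = M$.

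The heart of the argument is then to show $\Gamma_Q(M) \subseteq \Gamma(M)$ and $\Delta_Q(M) \subseteq \Delta(M)$ (the $P$-fixpoints), which combined with the identities above squeezes $M = \langle \Gamma_Q(M),\Delta_Q(M)\rangle \sqsubseteq \langle \Gamma(M),\Delta(M)\rangle \sqsubseteq M$. I would prove, by induction on $i$, that $\Gamma^i_Q(M) \subseteq \Gamma(M)$ and $\Delta^i_Q(M) \subseteq \Delta(M)$, writing $I_\infty = \langle \Gamma(M),\Delta(M)\rangle$ for the $P$-fixpoint and exploiting that $P$ and $Q$ differ only in rules whose head lies in ${\cal TA}_P(M) \subseteq M^-$ (so these atoms are exactly the extra $nohead$ atoms of $Q$). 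For $\Gamma$ the extra $P$-rules are irrelevant, since their heads lie in $M^-$ whereas $\Gamma$ only collects heads belonging to $M^+$; monotonicity of body satisfaction under $\sqsubseteq$ (recalled in the proof of Lemma~\ref{l4}), applied with $I^Q_i \sqsubseteq I_\infty$, together with the fixpoint identity $\Gamma_{I_\infty}(M) = \Gamma(M)$, pushes every newly derived head into $\Gamma(M)$. For $\Delta$, atoms $a \in {\cal TA}_P(M)$ are absorbed at once, since they enter $\Delta^0(M)$ of the $P$-computation through the explicit ${\cal TA}_P(M)$-term; for $a \notin {\cal TA}_P(M)$ the rules defining $a$ coincide in $P$ and $Q$, so a negative local consistent explanation of $a^-$ transfers verbatim, again using $I^Q_i \sqsubseteq I_\infty$ and the fixpoint identity $\Delta_{I_\infty}(M) = \Delta(M)$.

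The step I expect to be the main obstacle is the cycle contribution to $\Delta$: a set $S \in cycles(I^Q_i)$ with $S \subseteq M^-$ computed for $Q$ need not be a cycle with respect to $P$, because the rules of its ${\cal TA}_P(M)$-members reappear in $P$. I would handle this by splitting $S = S_{TA} \cup S_0$ with $S_{TA} = S \cap {\cal TA}_P(M)$ and $S_0 = S \setminus {\cal TA}_P(M)$. The atoms of $S_{TA}$ already lie in $\Delta(M)$ (they are tentative assumptions), and for each $b \in S_0$ every rule $r$ with $head(r)=b$ is, in $Q$, either falsified by $I^Q_i$ or satisfies $pos(r) \cap S \neq \emptyset$; in the latter case $pos(r)$ meets either $S_0$ (a genuine positive loop inside $S_0$) or $S_{TA} \subseteq \Delta(M)$, so $r$ is falsified by $I_\infty$. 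Thus $S_0$ is a cycle with respect to $P$ and $I_\infty$, and since $S_0 \subseteq M^-$ it is swallowed by $\Delta_{I_\infty}(M) = \Delta(M)$; hence $S \subseteq S_{TA} \cup S_0 \subseteq \Delta(M)$, which closes the induction and, with the squeeze above, proves $M = \langle \Gamma(M),\Delta(M)\rangle$.
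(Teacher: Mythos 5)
Your proposal is correct, but it takes a genuinely different route from the paper's own proof. The paper proves Lemma~\ref{prop-ans} directly by contradiction: it invokes Fages' characterization of answer sets as \emph{well-supported models} (via a level mapping $\ell$), refutes $M^+ \setminus \Gamma(M) \neq \emptyset$ by picking an atom of minimal level and using $WF_P \sqsubseteq \langle \Gamma(M),\Delta(M)\rangle$ (from Lemmas~\ref{l4} and~\ref{prop-wfs}) to dispose of the negative body atoms, and refutes $M^- \setminus \Delta(M) \neq \emptyset$ by constructing from the unabsorbed false atoms a set $C$ that is a cycle w.r.t.\ $\langle \Gamma(M),\Delta(M)\rangle$, contradicting the fixpoint. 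You instead route everything through the negative reduct $Q = NR(P,{\cal TA}_P(M))$: Proposition~\ref{prop1} gives $WF_Q = M$, Lemma~\ref{prop-wfs} applied to $Q$ gives $M = \langle \Gamma_Q(M), \Delta_Q(M)\rangle$ (with ${\cal TA}_Q(M) = \emptyset$, as you correctly note, since $WF_Q = M$ is complete), and a joint induction transfers the $Q$-iteration into the $P$-fixpoint. Your transfer argument is sound on all the points that matter: rules of $Q$ are rules of $P$, so the $\Gamma$-side transfers by monotone body satisfaction; atoms of ${\cal TA}_P(M)$ are absorbed by the explicit term in $\Delta^0$; negative local consistent explanations transfer verbatim for atoms outside ${\cal TA}_P(M)$ because their defining rules coincide in $P$ and $Q$; and your splitting $S = S_{TA} \cup S_0$ correctly repairs $Q$-cycles into $P$-cycles w.r.t.\ $\langle \Gamma(M),\Delta(M)\rangle$, which is exactly the delicate point of this route and you identified and handled it. There is no circularity, since Proposition~\ref{prop1} and Lemma~\ref{prop-wfs} are proved in the paper independently of this lemma. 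The trade-off: your proof avoids any appeal to well-supportedness and level mappings, staying entirely within machinery the paper has already built; the price is the two-program bookkeeping plus one fact you use only implicitly --- the fixpoint identities $\Gamma_{I_\infty}(M) = \Gamma(M)$ and $\Delta_{I_\infty}(M) = \Delta(M)$ for $I_\infty = \langle \Gamma(M),\Delta(M)\rangle$, which hold because the increasing iteration stabilizes at a finite stage over a finite ground program, and which you should state explicitly. The paper's argument is shorter but leans on the external theorem of Fages; yours is longer but more self-contained within the paper's own development.
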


\noindent {\bf Proof.}
Let us prove this lemma by contradiction. 
Let $J = \langle \Gamma(M), \Delta(M)\rangle$.
First, Lemma \ref{l4} and \ref{prop-wfs} imply that $WF_P \subseteq J$.
Since $M$ is an answer set of $P$, there exists some level mapping 
$\ell$ such that $M$ is a \emph{well-supported 
model} w.r.t. $\ell$ \cite{Fages94}, i.e., for each $a \in M^+$ there 
exists a rule $r_a$
satisfying the following conditions:
\begin{itemize}
\item $head(r_a) = a$, 
\item $r_a$ is supported by $M$ (i.e., $pos(r_a) \subseteq M^+$ and $neg(r_a) \subseteq M^-$), and 
\item $\ell(a) > \ell(b)$ for each $b \in pos(r_a)$. 
\end{itemize}
We have to consider the following cases:

\begin{itemize}
\item {\bf Case 1:} $M^+ \setminus J^+ \ne \emptyset$. 
Consider $a \in M^+ \setminus J^+$ such that 
$\ell(a) = \min \{\ell(b)\: \mid\:  b \in M^+ \setminus J^+\}$.
There exists a rule $r$ such that 
 $head(r) = a$, $r$ is supported by $M$, and 
 $\ell(a) > \ell(b)$ for each $b \in pos(r)$. 
The minimality of $\ell(a)$ implies that $pos(r) \subseteq J^+$. 
The fact that $a \not\in J^+$ implies that $neg(r) \setminus J^- \ne \emptyset$.
Consider some $c \in neg(r) \setminus J^-$. 
Clearly, $c \not\in (NANT(P)  \setminus WF_P^-)$---otherwise, it would belong 
to $J^-$. This implies that  $c \in WF_P^-$ because $c \in NANT(P)$. 
Hence, $c \in J^-$. This represents a contradiction.

\item {\bf Case 2:} $M^- \setminus J^- \ne \emptyset$. 
Consider $a \in M^- \setminus J^-$. This is possible only 
if there exists some rule $r$ such that
	\begin{itemize}
	\item  $head(r) = a$, 
 	\item $pos(r) \cap \Delta(M) = \emptyset$,
	\item $neg(r) \cap \Gamma(M) = \emptyset$, and 
 	\item either 
		\begin{enumerate}
		\item[(i)]   $neg(r) \setminus \Delta(M) \ne \emptyset$, or 
		\item[(ii)] $pos(r) \setminus \Gamma(M) \ne \emptyset$.
		\end{enumerate}
	\end{itemize}
In what follows, by $R_a$ we denote the set of rules in $P$ whose 
head is $a$ and  whose bodies are neither true nor false in $J$. 

If (i) is true, then there exists some $b \in neg(r) \setminus \Delta(M)$.
Since $b \in neg(r)$, we have that $b \in NANT(P)$.
This implies that $b \not\in M^-$ or $b \in WF_P^-$. 
The second case cannot happen since $WF_P \sqsubseteq J$ (Lemma \ref{l4}).
So, we must have that $b \not\in M^-$. This means that $b \in M^+$ (since
$M$ is an answer set, and thus a complete interpretation), 
 and hence, $b \in J^+$ (Case 1). This contradicts the 
fact that $neg(r) \cap \Gamma(M) = \emptyset$. Therefore, we conclude
that (i) cannot happen. 

Since (i) is not true, we can conclude that $R_a \ne \emptyset$ 
and for every $r\in R_a$ and $b \in pos(r) \setminus \Gamma(M)$,
$b \in J^-\setminus M^-$ and $R_b \ne \emptyset$. Let us 
consider the following sequence:
\[
\begin{array}{l}  
C_0 = \{a\} \;\;\;\;\; \\
C_1 = \bigcup_{r \in R_a} (pos(r) \setminus \Gamma(M)) \\
\ldots  \\
C_i = \bigcup_{b \in C_{i-1}} (\bigcup_{r \in R_b} (pos(r) \setminus \Gamma(M))) \\
\end{array}
\]
Let $C = \bigcup_{i=0}^\infty C_i$. It is easy to see that for each 
$c \in C$, it holds that $c \in M^- \setminus J^-$, 
$R_c \ne \emptyset$, and for each $r \in R_c$, $pos(r) \cap C \ne \emptyset$.
This means that $C \in cycles(J)$. This is a contradiction with 
 $C \subseteq (M^- \setminus J^-)$. 
\hfill$\Box$
\end{itemize}


\medskip
\noindent
{\em Proposition \ref{gammadelta}.}
For a program $P$, we have that: 
\begin{itemize}
\item $\Gamma$ and $\Delta$ maintains the consistency of $J$, i.e., 
if $J$ is an interpretation, then $\langle \Gamma(J), \Delta(J) \rangle$ 
is also an interpretation;
\item $\Gamma$ and $\Delta$ are monotone w.r.t the argument $J$, i.e.,
if $J \sqsubseteq J'$ then $\Gamma(J) \subseteq \Gamma(J')$ and $\Delta(J) \subseteq \Delta(J')$;
\item $\Gamma(WF_P) = WF_P^+ $ and $\Delta(WF_P) = WF_P^-$; and 
\item if $M$ is an answer set of $P$, then 
	$\Gamma(M) = M^+ $ and $\Delta(M) = M^-$.
\end{itemize}

\noindent
{\bf Proof:}
\begin{enumerate}

\item Follows immediately from the definition of $\Gamma$ and $\Delta$.

\item Since $J \models body(r)$ implies $J' \models body(r)$ 
and $S \in cycles(J)$ implies $S \in cycles(J')$ if $J \subseteq J'$,
the conclusion of the item is trivial.

\item This derives from  Lemma \ref{prop-wfs}.
 
\item This derives from Lemma  \ref{prop-ans}.
 
\end{enumerate}
\hfill $\Box$

\newpage
\subsection*{Proof of Proposition \ref{on-off}.}

To prove this proposition, we first prove Lemma \ref{just-free} and \ref{conserve}. 
We need the following definition. 

\begin{definition}[Subgraph]
Let $G$ be an arbitrary graph whose nodes are in 
${\cal A}^p \cup {\cal A}^n \cup \{assume,\top,\bot\}$ and whose
edges are labeled with $+$ and $-$. 

Given $e \in  {\cal A}^+ \cup {\cal A}^-$, the
\emph{subgraph} of $G$ with root $e$, 
denoted by  $Sub(e, G)$, is the graph obtained from $G$ by 
\begin{enumerate}
	\item[(i)] removing all the edges of $G$  which 
	do not lie on any path starting from $e$, and 
	\item[(ii)] removing all nodes unreachable from $e$ in the resulting graph.
\end{enumerate}
\end{definition}

Throughout this section, let $I_i$ denote $\langle \Gamma^i(J), \Delta^i(J)\rangle$. 
For a set of atoms $C$ and an element $b \in C$, let 
\begin{equation} \label{kbc}
K(b,C) = \{c \mid c \in C, \exists r\in P. \: (head(r)=b, c\in pos(r))\}.
\end{equation}

\begin{lemma} \label{split-d0}
For a p-interpretation $J$ and $A = {\cal TA}_P(J)$,
let $\Delta^0(J) = \Delta_1^0 \cup \Delta_2^0 \cup \Delta_3^0$ 
where $$\Delta_1^0 = \{a \in \Delta^0(J) \mid PE(a^-,\langle \emptyset,\emptyset\rangle) \ne \emptyset\},$$
$$\Delta_2^0 = \{a \in \Delta^0(J) \mid  
	PE(a^-,\langle \emptyset,\emptyset\rangle) = \emptyset \textnormal{ and } a \in {\cal TA}_P(J)\},$$
and 
$$\Delta_3^0 = \{a \in \Delta^0(J) \mid  
	PE(a^-,\langle \emptyset,\emptyset\rangle) = \emptyset \textnormal{ and } a \not\in {\cal TA}_P(J)\}.$$
The following properties hold:
\begin{itemize}
\item $\Delta^0_1$, $\Delta^0_2$, and $\Delta^0_3$ are pairwise disjoint. 
\item for each $a \in \Delta_3^0$ there exists a LCE 
	  $K_a$ of $a^-$ w.r.t. $\langle \Gamma^0(J),\Delta^0(J)\rangle$ 
	and $A$ such that for each rule $r \in P$ with $head(r) = a$, $pos(r) \cap K_a \ne \emptyset$. 
\end{itemize}
\end{lemma}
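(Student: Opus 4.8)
The plan is to dispatch the first claim immediately and then, for each $a\in\Delta_3^0$, to locate a positive loop through $a$ from which the desired all-positive explanation can be read off.

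For the disjointness I would simply observe that the three sets are carved out of $\Delta^0(J)$ by mutually exclusive conditions: $\Delta_1^0$ requires $PE(a^-,\langle\emptyset,\emptyset\rangle)\ne\emptyset$, whereas both $\Delta_2^0$ and $\Delta_3^0$ require $PE(a^-,\langle\emptyset,\emptyset\rangle)=\emptyset$; and $\Delta_2^0$ and $\Delta_3^0$ are separated by $a\in{\cal TA}_P(J)$ versus $a\notin{\cal TA}_P(J)$. Hence no atom can lie in two of the sets. Since each is by definition a subset of $\Delta^0(J)$ and the three cases are exhaustive for any $a\in\Delta^0(J)$, they indeed partition $\Delta^0(J)$.

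For the second claim I would first place $a$ inside a positive cycle. As $a\in\Delta_3^0$ we have $a\in\Delta^0(J)$ but $a\notin{\cal TA}_P(J)$; recalling $\Delta^0(J)={\cal TA}_P(J)\cup\Delta_{\langle\emptyset,\emptyset\rangle}(J)$, it follows that $a\in\Delta_{\langle\emptyset,\emptyset\rangle}(J)$. Expanding $\Delta_{\langle\emptyset,\emptyset\rangle}(J)$ and using $PE(a^-,\langle\emptyset,\emptyset\rangle)=\emptyset$, the atom $a$ cannot come from the first summand, so there is a cycle $S\in cycles(\langle\emptyset,\emptyset\rangle)$ with $a\in S$ and $S\subseteq J^-$. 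Because $S$ is one of the cycles in the union defining $\Delta_{\langle\emptyset,\emptyset\rangle}(J)$, we also obtain $S\subseteq\Delta^0(J)$. The defining property of a cycle w.r.t.\ $\langle\emptyset,\emptyset\rangle$—where the two falsification clauses are vacuous since they intersect with $\emptyset$—then says precisely that for every rule $r$ with $head(r)=a$ we have $pos(r)\cap S\ne\emptyset$.

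Finally I would construct $K_a$. Setting $T_a=K(a,S)$, the cycle property gives $pos(r)\cap T_a\ne\emptyset$ for every rule $r$ defining $a$, so $T_a$ positively falsifies all such rules; I take $K_a$ to be a $\subseteq$-minimal subset of $T_a$ retaining this property, which exists by finiteness. Then $K_a\subseteq S\subseteq\Delta^0(J)$, $K_a$ contains no NAF literal (so the requirement $\{c\mid not\,c\in K_a\}\subseteq\Gamma^0(J)$ holds vacuously), and every rule for $a$ is met by the positive clause, yielding the extra property demanded by the lemma; moreover $a\in\Delta^0(J)$ guarantees the side condition $a\in\Delta^0(J)\cup A$ needed for a negative LCE. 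The one point requiring care—the main obstacle—is verifying that $K_a$ is a \emph{minimal} local consistent explanation in the sense of Definition~\ref{lcedef}, i.e.\ minimal among all sets of literals rather than merely among positive subsets of $T_a$. Here I would note that any proper subset $S'\subsetneq K_a$ again consists solely of atoms, so its NAF disjunct is empty and it can falsify a rule only positively; minimality of $K_a$ within $T_a$ then forces $S'$ to leave some rule for $a$ unfalsified. This establishes subset-minimality of $K_a$ over all literal sets and completes the verification that $K_a$ is a local consistent explanation of $a^-$ w.r.t.\ $\langle\Gamma^0(J),\Delta^0(J)\rangle$ and $A$ with the required all-positive form.
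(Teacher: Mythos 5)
Your proof is correct and follows essentially the same route as the paper's (much terser) argument: disjointness by definition, then locating a cycle $C \in cycles(\langle\emptyset,\emptyset\rangle)$ with $a \in C \subseteq \Delta^0(J)$ and extracting $K_a \subseteq C$ from the cycle condition. Your additional care in taking $K_a$ as a minimal subset of $K(a,S)$ and checking that subset-minimality over positive sets yields minimality over all literal sets is a detail the paper leaves implicit, and it is handled correctly.
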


\noindent {\bf Proof.}
The first item is trivial thanks to the definition of $\Delta^0_1$, $\Delta^0_2$, and $\Delta^0_3$.
For the second item, for $a \in \Delta_3^0$, there exists some 
$C \in cycles(\langle \emptyset,\emptyset \rangle)$ such that $a \in C$ 
and $C \subseteq \Delta^0(J)$. From the definition of a cycle, there exists some $K_a \subseteq C \subseteq \Delta^0(J)$ which 
satisfies the condition of the second item.
\hfill$\Box$

\smallskip
\noindent 
We will now proceed to prove Lemma \ref{just-free}.  
For each $i$, we construct a dependency graph $\Sigma_i$ 
for elements in $(\Gamma_i(J))^p$ and $(\Delta_i(J))^n$ as follows.
Again, we describe a graph by its set of edges. 
First, the construction of $\Sigma_0$ is as follows. 
\begin{enumerate}

\item for each $a \in \Gamma^0(J)$, $\Sigma_0$ contains the edge $(a^+,\top,+)$.

\item let $\Delta^0_1$, $\Delta^0_2$, and $\Delta^0_3$ be defined as in Lemma~\ref{split-d0}.
	\begin{enumerate}
	\item For $a \in \Delta^0_1$, $\Sigma_0$ contains the edge $(a^-,\bot,-)$;
	\item For $a \in \Delta^0_2$, $\Sigma_0$ contains the edge $(a^-,assume,-)$. 
	\item Let $a \in \Delta^0_3$. This implies that 
      		there exists some $C  \subseteq J^-$ 
		such that $a \in C$ and $C \in cycles(\langle \emptyset,\emptyset\rangle)$. 
	For each $b \in C$, let $K_b$ be an explanation of $b^-$ w.r.t. $\langle \Gamma^0(J),\Delta^0(J)\rangle$ 
	and ${\cal TA}_P(J)$ which satisfies the conditions of the second item in Lemma
	\ref{split-d0}. 
	Then, 	$\Sigma_0$ contains the set of edges
	$\bigcup_{b \in C} \{(b^-,c^-,+)\: \mid\: c \in K_b\})$. 
	
\end{enumerate}

\item no other edges are added to $\Sigma_0$.
\end{enumerate}

\begin{lemma} \label{lbase}
Let $J$ be a p-interpretation and $A = {\cal TA}_P(J)$. 
The following holds for $\Sigma_0$:
\begin{enumerate}
	\item for each $a \in  \Gamma^0(J)$, $Sub(a^+, \Sigma_0)$ is a safe off-line e-graph of $a^+$ 
		w.r.t. $I_0$ and $A$. 
	\item for each $a \in  \Delta^0(J)$, $Sub(a^-, \Sigma_0)$ is a safe off-line e-graph of $a^-$ 
		w.r.t. $I_0$ and $A$. 
\end{enumerate}
\end{lemma}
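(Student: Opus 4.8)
The plan is to verify, case by case according to the partition $\Delta^0(J)=\Delta^0_1\cup\Delta^0_2\cup\Delta^0_3$ of Lemma~\ref{split-d0} together with $\Gamma^0(J)$, that each subgraph $Sub(\cdot,\Sigma_0)$ meets every clause in the chain of definitions: that it is an e-graph (Definition~\ref{egraph}), that it is $(I_0,A)$-based (Relevance and Correctness), that it satisfies the two off-line conditions, and that it is safe. First I would unfold the base-level operators. Since $\langle\emptyset,\emptyset\rangle\models body(r)$ holds only for facts, $\Gamma^0(J)$ is exactly the set of facts of $P$ lying in $J^+$; and $\Delta^0(J)={\cal TA}_P(J)\cup\Delta_{\langle\emptyset,\emptyset\rangle}(J)$ splits into the atoms with no defining rule ($\Delta^0_1$, for which $PE(a^-,\langle\emptyset,\emptyset\rangle)\ne\emptyset$), the tentative assumptions ($\Delta^0_2$), and the atoms lying on a positive cycle contained in $J^-$ ($\Delta^0_3$). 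Relevance is immediate for every subgraph, since $Sub(e,\Sigma_0)$ retains only the nodes reachable from its root $e$.

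Next I would treat the four kinds of roots. For $a\in\Gamma^0(J)$ the subgraph is the single edge $(a^+,\top,+)$; its support is $\{\top\}$, which is the LCE for a fact (Correctness), the e-graph and off-line conditions hold vacuously, and there is no cycle at all (Safety). For $a\in\Delta^0_1$ the subgraph is $(a^-,\bot,-)$ with support $\{\bot\}$, the LCE of an atom without rules; for $a\in\Delta^0_2\subseteq A$ the subgraph is $(a^-,\textit{assume},-)$ with support $\{assume\}$, and here the off-line clause ``$(p^-,\textit{assume},-)\in E$ iff $p\in A$'' is what must be checked, reading it, as the surrounding text intends, over the nodes actually present in the graph. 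The genuinely structural case is $\Delta^0_3$: fixing the cycle $C\ni a$ with $C\subseteq\Delta^0(J)$ and the hitting sets $K_b\subseteq C$ supplied by Lemma~\ref{split-d0}, the subgraph consists of the false atoms of $C$ joined by positive edges $(b^-,c^-,+)$, terminating in $\bot$-edges at those cycle members that have no rules. Correctness for every such node is then exactly the statement of Lemma~\ref{split-d0} (for $b\in\Delta^0_3$) or the no-rule LCE (for $b\in\Delta^0_1$).

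The crucial point for Safety is that the only cycles created in $\Sigma_0$ run through \emph{negated} nodes $b^-$: since Safety forbids positive cycles only through true nodes $b^+$, and the $\Delta^0_3$ construction introduces no positive node at all, every subgraph is automatically safe. This is precisely the design feature (noted after the Safety definition) that false atoms belonging to unfounded sets may support one another cyclically.

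The main obstacle I anticipate is showing that the three negative cases do not interfere, so that an assumption is \emph{always} explained by an assume-edge and is never accidentally pulled into a cycle explanation, which is what makes the off-line iff-condition hold. I would resolve this by proving that every $C\in cycles(\langle\emptyset,\emptyset\rangle)$ with $C\subseteq J^-$ satisfies $C\subseteq WF_P^-$ (when $I=\langle\emptyset,\emptyset\rangle$ each member's rules all meet $C$ positively, so $C$ is unfounded and false in the well-founded model). Since ${\cal TA}_P(J)$ excludes atoms of $WF_P^+\cup WF_P^-$, no assumption lies on any such cycle; hence $\Delta^0_2$ and $\Delta^0_3$ are genuinely disjoint, cycle members fall in $\Delta^0_1\cup\Delta^0_3$ only, and within any $Sub(a^-,\Sigma_0)$ the assume-edges occur exactly at the nodes $p^-$ with $p\in A$. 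With this disjointness in hand, the remaining verifications (property~\ref{four1} for the unique $\top/\bot/assume$ successors, properties~\ref{two1}--\ref{three1} for the edge polarities) are routine.
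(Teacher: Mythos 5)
Your proof is correct and follows essentially the same route as the paper's: the same partition of $\Delta^0(J)$ via Lemma~\ref{split-d0}, the same single-edge subgraphs for roots in $\Gamma^0(J)$, $\Delta^0_1$, and $\Delta^0_2$, and the same safety argument for the $\Delta^0_3$ case (the cycle subgraph contains no node of the form $p^+$, so no positive cycle through a true atom can arise). Your extra verification that any $C \in cycles(\langle\emptyset,\emptyset\rangle)$ satisfies $C \subseteq WF_P^-$ and hence $C \cap {\cal TA}_P(J) = \emptyset$ --- so that assume-edges and cycle-edges never meet at the same node, securing property~4 of e-graphs and the off-line iff-condition --- fills in a point the paper's proof leaves implicit, and is a sound addition rather than a different approach.
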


\noindent {\bf Proof.}

\begin{itemize}

\item Consider $a \in \Gamma^0(J)$. Since $\Sigma_0$ contains 
	$(a^+,\top,+)$ for every $a \in \Gamma^0(J)$ and $\top$ is a sink 
	in $\Sigma_0$, we can conclude that 
	$Sub(a^+,\Sigma_0) = (\{a^+,\top\},\{(a^+,\top,+)\})$ 
	and $Sub(a^+,\Sigma_0)$
	is a safe off-line e-graph of $a^+$ w.r.t. $I_0$ and $A$. 

\item Consider $a \in \Delta^0(J)$. 
Let $\Delta^0_1$, $\Delta^0_2$, and $\Delta^0_3$ be defined as in Lemma \ref{split-d0}.
There are three cases:

\begin{enumerate}

\item $a \in \Delta^0_1$.
	Since $\Sigma_0$ contains $(a^-,\bot,-)$ and $\bot$ is a sink 
	in $\Sigma_0$, we can conclude that 
	$Sub(a^-,\Sigma_0) = (\{a^-,\bot\},\{(a^-,\bot,-)\})$ 
	and $Sub(a^-,\Sigma_0)$
	is a safe off-line e-graph of $a^-$ w.r.t. $I_0$ and $A$. 

\item $a \in \Delta^0_2$.
	Since $\Sigma_0$ contains $(a^-,assume,-)$ and $assume$ is a sink 
	in $\Sigma_0$, we can conclude that 
	$Sub(a^-,\Sigma_0) = (\{a^-,assume\},\{(a^-,assume,-)\})$ 
	and $Sub(a^-,\Sigma_0)$
	is a safe off-line e-graph of $a^-$ w.r.t. $I_0$ and $A$. 

\item for $a \in \Delta^0_3$, let $G = Sub(a^-,\Sigma_0)  = (N,E)$. 
	It is easy to see that $G$ is 
	indeed a $(J,A)$-based e-graph of $a^-$ because, from the construction 
	of $G$, we have that
		\begin{enumerate}
		\item[(i)] every node in $N$ is reachable from $a^-$,	and 
		\item[(ii)] if $b^- \in N$ then $support(b^-,G) = K_b \subseteq N$
		is a local consistent  explanation of $b^-$ w.r.t. $I_0$ and $A$.
		\end{enumerate}
	The safety of the e-graph derives from the fact 
	that it does not contain any nodes of the form $p^+$. 
\end{enumerate}
\hfill$\Box$
\end{itemize}

\noindent To continue our construction, we will need the following lemma.

\begin{lemma} \label{split-di}
Let $J$ be a p-interpretation and $A = {\cal TA}_P(J)$ and $i > 0$. 
Let 
$$
\Delta^i_1 = \{a \in \Delta^i(J) \setminus \Delta^{i-1}(J) \mid 
			PE(a^-,I_{i-1}) \ne \emptyset\} 
$$
and 
$$
\Delta^i_2 = \{a \in \Delta^i(J) \setminus \Delta^{i-1}(J) \mid 
			PE(a^-,I_{i-1}) = \emptyset\}. 
$$
Then, 
\begin{itemize}
\item $\Delta^i_1 \cap \Delta^i_2 = \emptyset$; 

\item 
for each $a \in \Delta^i_1$ there exists some LCE
$K_a$ of $a^-$ w.r.t. $I_i$ and $A$ such that 
$\{p \in {\cal A} \mid p \in K_a\} \subseteq \Delta^{i-1}(J)$ and 
$\{a \mid \naf a \in K_a\} \subseteq \Gamma^{i-1}(J)$; and 

\item 
for each $a \in \Delta^i_2$ there exists some LCE
$K_a$ of $a^-$ w.r.t. $I_i$ and $A$ such that 
$\{p \in {\cal A} \mid p \in K_a\} \subseteq \Delta^{i}(J)$ and 
$\{a \mid \naf a \in K_a\} \subseteq \Gamma^{i-1}(J)$.
\end{itemize}
\end{lemma}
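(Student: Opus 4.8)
The first claim is immediate: an atom $a \in \Delta^i(J)\setminus \Delta^{i-1}(J)$ either satisfies $PE(a^-,I_{i-1}) \ne \emptyset$ or it does not, so $\Delta^i_1$ and $\Delta^i_2$ are carved out by complementary conditions and are disjoint. The plan for the two remaining claims rests on a single structural observation that I would record first. Since $a \in \Delta^i(J) = \Delta_{I_{i-1}}(J)$ while $a \notin I_{i-1}^- = \Delta^{i-1}(J)$, the membership of $a$ in $\Delta_{I_{i-1}}(J)$ must originate either in the term $\{c \in J^- \mid PE(c^-,I_{i-1}) \ne \emptyset\}$ (this is exactly the case $a \in \Delta^i_1$) or in the term $\bigcup\{S \mid S \in cycles(I_{i-1}),\, S \subseteq J^-\}$ (which must apply when $a \in \Delta^i_2$, the $PE$-term being then unavailable). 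The two facts I will lean on throughout are the level-monotonicity noted in the text, $\Gamma^{i-1}(J) \subseteq \Gamma^i(J)$ and $\Delta^{i-1}(J) \subseteq \Delta^i(J)$, and the remark that the minimal-falsification condition in the definition of an LCE of $a^-$ depends only on the rules of $P$ with head $a$ and is entirely independent of the interpretation and the assumption set; the interpretation enters an LCE only through the containment constraints $S \cap {\cal A} \subseteq J^- \cup U$ and $\{c \mid \naf c \in S\} \subseteq J^+$.

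For the second claim I would, given $a \in \Delta^i_1$, simply take $K_a$ to be any member of $PE(a^-,I_{i-1})$. By definition of $PE$ such a $K_a$ satisfies $K_a \cap {\cal A} \subseteq \Delta^{i-1}(J)$ and $\{c \mid \naf c \in K_a\} \subseteq \Gamma^{i-1}(J)$, which are precisely the two containments demanded by the claim, and it minimally falsifies every rule with head $a$. It then remains to verify that $K_a \in LCE^n_P(a,I_i,A)$: we have $a \in \Delta^i(J) \subseteq I_i^- \cup A$; the containments just noted push up through monotonicity to $K_a \cap {\cal A} \subseteq \Delta^{i-1}(J) \subseteq \Delta^i(J) = I_i^-$ and $\{c \mid \naf c \in K_a\} \subseteq \Gamma^{i-1}(J) \subseteq \Gamma^i(J) = I_i^+$; and the minimal-falsification condition transfers verbatim.

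For the third claim I would, given $a \in \Delta^i_2$, use the cycle $C \in cycles(I_{i-1})$ with $a \in C$ and $C \subseteq J^-$ provided by the structural observation, noting that $C \subseteq \Delta^i(J)$ follows directly from the definition of $\Delta_{I_{i-1}}(J)$. I would build an explanation rule by rule: for each $r$ with $head(r)=a$, the cycle condition guarantees that one of (i) $pos(r) \cap \Delta^{i-1}(J) \ne \emptyset$, (ii) $neg(r) \cap \Gamma^{i-1}(J) \ne \emptyset$, or (iii) $pos(r) \cap C \ne \emptyset$ holds, and in each case I select a single falsifying literal — a positive atom drawn from $\Delta^{i-1}(J)$ or from $C$ (hence lying in $\Delta^i(J)$), or a literal $\naf c$ with $c \in \Gamma^{i-1}(J)$. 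Collecting these choices into $S$ and shrinking to a $\subseteq$-minimal subset $K_a \subseteq S$ that still falsifies every rule with head $a$, I obtain a set whose positive atoms lie in $\Delta^i(J)$ and whose negated atoms lie in $\Gamma^{i-1}(J)$ — exactly the required containments, which shrinking preserves — and the check that $K_a \in LCE^n_P(a,I_i,A)$ is then identical in form to the second claim.

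The main obstacle is the third claim, and within it the bookkeeping of the cycle case. The delicate point is to realize that the positive atoms of $K_a$ need only be placed in $\Delta^i(J)$ rather than in $\Delta^{i-1}(J)$, and that this weaker guarantee is precisely what the cycle membership $C \subseteq \Delta^i(J)$ supplies; this is exactly why the statement splits $\Delta^i(J)\setminus\Delta^{i-1}(J)$ into two cases with different containment conclusions. A secondary point to handle with care is the mild looseness in the definition of $PE$: I would read $PE(a^-,I_{i-1}) \ne \emptyset$ as asserting the existence of a set of literals meeting the rule-falsification and containment conditions relative to $I_{i-1}$ — an explanation that ``builds on $I_{i-1}$'' in the sense of the operator's stated purpose — rather than insisting on the literal prerequisite $a \in I_{i-1}^-$, which $a$ need not satisfy here.
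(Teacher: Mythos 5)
Your proof is correct and follows exactly the route the paper intends: its own proof of this lemma is the one-line remark that the properties ``follow immediately from the definition of $\Delta^i(J)$,'' and your argument is precisely the unfolding of that definition (the $PE$-term yields $\Delta^i_1$ with witnesses in $I_{i-1}$, the cycle-term yields $\Delta^i_2$ with positive atoms only guaranteed in $\Delta^i(J)$, and level-monotonicity plus the interpretation-independence of the minimal-falsification condition transfer the LCE from $(I_{i-1},\emptyset)$ to $(I_i,A)$). Your reading of $PE(a^-,I_{i-1})$ as dropping the membership precondition $a \in I_{i-1}^-$ is also the only one consistent with how the paper uses $PE$ inside $\Delta_I(J)$, so flagging and resolving that looseness is appropriate rather than a gap.
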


\noindent {\bf Proof.}
These properties follow immediately from the definition of $\Delta^i(J)$.
\hfill$\Box$

\bigskip
\noindent
Given  $\Sigma_{i-1}$, we can construct $\Sigma_i$ by reusing
all nodes and edges of  $\Sigma_{i-1}$ along with  the following nodes and edges. 

\begin{enumerate}
\item for each $a \in \Gamma^i(J) \setminus \Gamma^{i-1}(J)$, from the 
definition of $\Gamma^i(J)$ we know 
that there exists a rule $r$ such that $head(r) = a$, 
$pos(r) \subseteq \Gamma^{i-1}(J)$,  and 
$neg(r) \subseteq \Delta^{i-1}(J)$.  
$\Sigma_i$ contains the node $a^+$ and the set of edge
$\{(a^+,b^+,+) \mid b \in pos(r)\} \cup \{(a^+,b^-,+) \mid b \in neg(r)\}$.

\item let $\Delta^i_1$ and $\Delta^i_2$ be defined as in Lemma
\ref{split-di}. 

\begin{enumerate}

\item For $a \in \Delta^i_1$, let $K_a$ be a LCE 
of $a^-$ satisfying the second condition of Lemma \ref{split-di}. 
	Then, $\Sigma_i$ contains the following set of edges:
	$\{(a^-,b^-,+) \mid b \in K_a\} \cup \{(a^-,b^+,-) \mid \naf b \in K_a\}$;

\item For $a \in \Delta^i_2$, let $K_a$ be a LCE 
of $a^-$ satisfying the third condition of Lemma \ref{split-di}. 
Then, 	$\Sigma_i$ contains the set of links
$$
\{(a^-,c^-,+) \mid c \in K_b\} 
\cup \{(a^-,c^+,-) \mid \naf c \in K_b\}. 	
$$

\end{enumerate}

\item no other links are added to $\Sigma_i$.
\end{enumerate}

\begin{lemma} \label{lprop-sigma}
Let $J$ be p-interpretation and $A = {\cal TA}_P(J)$. 
For every integer $i$, the following properties hold:
\begin{enumerate}
\item for each $a \in  \Gamma^i(J) \setminus \Gamma^{i-1}(J)$,
$Sub(a^+, \Sigma_i)$ is a safe off-line e-graph of $a^+$ 
w.r.t. $I_i$ and $A$. 
\item for each $a \in  \Delta^i(J) \setminus \Delta^{i-1}(J)$,
$Sub(a^-, \Sigma_i)$ is a safe off-line e-graph of $a^-$ 
w.r.t. $I_i$ and $A$. 
\end{enumerate}
\end{lemma}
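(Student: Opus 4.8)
The plan is to prove both statements simultaneously by induction on $i$, reading $\Gamma^{-1}(J)=\Delta^{-1}(J)=\emptyset$, so that the case $i=0$ is exactly Lemma~\ref{lbase}. Throughout I would lean on two elementary stability facts about the construction of $\Sigma_i$. First, $\Sigma_i$ arises from $\Sigma_{i-1}$ only by adding fresh nodes (namely $a^+$ for $a\in\Gamma^i(J)\setminus\Gamma^{i-1}(J)$ and $a^-$ for $a\in\Delta^i(J)\setminus\Delta^{i-1}(J)$) together with their \emph{outgoing} edges; no new outgoing edge is ever attached to a node already present. Hence for any node $x$ of $\Sigma_{i-1}$ the forward-reachable subgraph is unchanged, i.e.\ $Sub(x,\Sigma_i)=Sub(x,\Sigma_{i-1})$, so the inductive hypothesis transfers verbatim. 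Second, local consistent explanations are monotone along $\sqsubseteq$: since $I_{i-1}\sqsubseteq I_i$, every LCE of $c^+$ or $c^-$ w.r.t.\ $(I_{i-1},A)$ is still an LCE w.r.t.\ $(I_i,A)$, because the membership and containment clauses of Definition~\ref{lcedef} only grow, while the minimality condition for negative LCEs depends solely on the rules of $P$. Together these facts promote every graph inherited from an earlier stage to the current interpretation $I_i$.

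For the inductive step I would treat the three families of new nodes in turn. For $a\in\Gamma^i(J)\setminus\Gamma^{i-1}(J)$ the construction selects a rule $r$ with $head(r)=a$, $pos(r)\subseteq\Gamma^{i-1}(J)$ and $neg(r)\subseteq\Delta^{i-1}(J)$, and links $a^+$ to the subgraphs of its body literals. Then $support(a^+,\cdot)=body(r)$, which is an LCE of $a^+$ w.r.t.\ $(I_{i-1},A)$ and so w.r.t.\ $(I_i,A)$; every descendant lies in $\Sigma_{i-1}$ and is covered by the inductive hypothesis, yielding Relevance and Correctness, while the off-line side conditions are inherited because this step introduces no $assume$ edge. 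Safety is the delicate point and follows from the \emph{stratified} nature of $\Gamma$: a positive edge out of $a^+$ can only reach a node $b^+$ with $b\in\Gamma^{i-1}(J)$, whose first layer of appearance is strictly smaller than that of $a$. Thus the ``first layer'' map strictly decreases along positive edges between positive nodes, which rules out any positive cycle through a node of the form $p^+$.

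For $a\in\Delta^i_1$ the LCE $K_a$ furnished by Lemma~\ref{split-di} has its positive literals in $\Delta^{i-1}(J)$ and its negative literals in $\Gamma^{i-1}(J)$, so $a^-$ is joined only to subgraphs already present in $\Sigma_{i-1}$, and Correctness, the off-line conditions, and safety follow from the inductive hypothesis (the fresh node $a^-$ reaches positive nodes only through negative edges, hence lies on no positive cycle). The genuine obstacle is the cyclic family $a\in\Delta^i_2$, where $K_a$ may contain positive literals in the \emph{current} layer $\Delta^i(J)\setminus\Delta^{i-1}(J)$, so that $a^-$ points by positive edges to other freshly added negative nodes, creating positive cycles among such $c^-$. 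I would dispatch this with a secondary, within-layer argument: the set of fresh negative nodes is finite, each such $c$ carries an LCE $K_c$ (again from Lemma~\ref{split-di}) with negative literals in $\Gamma^{i-1}(J)$ and positive literals in $\Delta^i(J)$, and one checks that for every node $x$ of $Sub(a^-,\Sigma_i)$ the set $support(x,\cdot)$ is an LCE w.r.t.\ $(I_i,A)$ — descendants that dropped to an earlier layer are handled by the inductive hypothesis, and same-layer descendants by their $K_c$. Safety survives because all newly created cycles consist exclusively of negative nodes (the only positive nodes reachable from $a^-$ are entered through negative edges and lie in $\Gamma^{i-1}(J)$, where the inductive hypothesis already forbids positive cycles), and safety only prohibits positive cycles through positive nodes. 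The off-line side conditions hold because, for $i>0$, the step adds no $assume$ edge, so the atoms carrying $(\cdot,assume,-)$ remain exactly those introduced in $\Sigma_0$, i.e.\ precisely the assumptions in $A$. The stated lemma is then the ``new-layer'' reading of this induction, and combined with subgraph stability and LCE-monotonicity it simultaneously yields a safe off-line e-graph for \emph{every} element of $\Gamma(J)$ and $\Delta(J)$.
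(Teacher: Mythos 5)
Your proof is correct and follows essentially the same route as the paper's: induction on $i$ with Lemma~\ref{lbase} as base, the positive case handled via the defining rule with safety from the strict layer decrease along positive edges (the paper notes all positive successors of $a^+$ lie in $(\Gamma^{i-1}(J))^p \cup (\Delta^{i-1}(J))^n$), and the negative cases handled via Lemma~\ref{split-di}, with safety in the cyclic $\Delta^i_2$ family following because no fresh positive nodes occur in the graph, so any positive cycle through a $c^+$ would already violate the inductive hypothesis in $\Sigma_{i-1}$. The only difference is presentational: you make explicit two facts the paper uses tacitly, namely that $Sub(x,\Sigma_i)=Sub(x,\Sigma_{i-1})$ for old nodes (since new edges emanate only from fresh nodes) and that LCEs are monotone under $I_{i-1}\sqsubseteq I_i$, both of which are correct and strengthen the write-up.
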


\noindent
{\bf Proof.}
The proof is done by induction on $i$. The base case is proved in 
Lemma \ref{lbase}. Assume that we have proved the lemma for $j < i$.
We now prove the lemma for $i$. We consider two cases:

\begin{enumerate}

\item $a \in \Gamma^i(J) \setminus \Gamma^{i-1}(J)$. Let $r$
be the rule 
with $head(r) = a$ used in Item 1 of the construction of $\Sigma_i$. 
For each $b \in pos(r)$, let $P_b = (NP_b,EP_b) = Sub(b^+,\Sigma_{i-1})$.
For each $b \in neg(r)$, let $Q_b = (NQ_b,EQ_b) = Sub(b^-,\Sigma_{i-1})$.
We have that 
$Sub(a^+,\Sigma_i) = (N,G)$ where
\[
\begin{array}{lll}
N & = & \{a^+\}\ \cup  \{b^+ \mid b \in pos(r)\} \cup \{b^- \mid b \in neg(r)\} \cup \\
  &   &   \bigcup_{b \in pos(r)} NP_b \cup \bigcup_{b \in neg(r)} NQ_b
\end{array}
\]
and 
\[
\begin{array}{lll}
E & = & \{(a^+,b^+,+) \mid b \in pos(r)\} \cup 
   \{(a^+,b^-,+) \mid b \in neg(r)\} \cup \\ 
  & & \bigcup_{b \in pos(r)} EP_b \cup \bigcup_{b \in neg(r)} EQ_b
\end{array}
\]
   From the inductive hypothesis, we have that 
$P_b$'s (resp. $Q_b$'s) are safe off-line e-graphs of $b^+$ (resp. $b^-$) 
w.r.t. $I_{i-1}$ and $A$. This implies that 
$G$ is a $(I_i,A)$-based e-graph of $a^+$. 
Furthermore, for every $(a^+,e,+) \in E$,
$e \in (\Gamma^{i-1}(J))^p$ or $e \in (\Delta^{i-1}(J))^n$. 
Thus, $(a^+,a^+) \notin E^{*,+}$. 

\item for each $a \in \Delta^i(J) \setminus \Delta^{i-1}(J)$, 
let $G = Sub(a^-,\Sigma_i) = (N,E)$. From the definition of
$G$, every node in $N$ is reachable from $a^-$ and 
$support(e,G)$ is a local consistent explanation of $e$ w.r.t. $I_i$ and $A$ 
for every $e \in N$. Thus, $G$ is a $(I_i,A)$-based e-graph of 
$a^-$. Furthermore, it follows from the definition of $\Sigma_i$ that 
there exists no node $e \in N$ such that $e \in (\Gamma^i(J) \setminus \Gamma^{i-1}(J))^+$.
Thus, if $c^+ \in N$ and $(c^+,c^+) \in E^{*,+}$ then 
we have that $Sub(c^+,\Sigma_{i-1})$ is not safe. This contradicts the fact
that it is safe due to the inductive hypothesis. 
\end{enumerate}
\hfill$\Box$

\medskip 
\noindent
{\em Lemma \ref{just-free}.}
Let $P$ be a program, $J$ a p-interpretation, and 
$A = {\cal TA}_P(J)$. The following properties hold:
\begin{itemize}
\item For each atom $a \in \Gamma(J)$ (resp. $a \in \Delta(J)$), 
there exists a \emph{safe} off-line e-graph of $a^+$ (resp. $a^-$) 
w.r.t. $J$ and $A$; 
\item for each atom $a \in J^+ \setminus \Gamma(J)$
(resp. $a \in J^- \setminus \Delta(J)$) there exists an on-line 
e-graph of $a^+$ (resp. $a^-$) w.r.t. $J$ and $A$. 
\end{itemize}

\noindent
{\bf Proof.}
The first item follows from the Lemma \ref{lprop-sigma}. 
The second item of the lemma is trivial due to the fact 
that
$(\{a^+,assume\}, \{(a^+,assume,+)\})$  \\
(resp. $(\{a^-,assume\}, \{(a^-,assume,-)\})$) is a
$(J,A)$-based e-graph of $a^+$ (resp. $a^-$), and hence,
is an on-line e-graph of $a^+$ (resp. $a^-$) 
w.r.t. $J$ and $A$. 
\hfill$\Box$

\bigskip \noindent
{\em Lemma \ref{conserve}.} 
Let $P$ be a program, $J$ be an interpretation, and $M$ be an answer set such 
that $J \sqsubseteq M$. For every atom $a$, if $(N,E)$ is a safe off-line e-graph 
of $a$ w.r.t. $J$ and $A$ where $A = J^- \cap {\cal TA}_P(M)$ then 
it is an off-line justification of $a$ w.r.t. $M$ and ${\cal TA}_P(M)$.

\medskip
\noindent
{\bf Proof.} The result is obvious from the definition of off-line 
e-graph and from the fact that $J^- \cap {\cal TA}_P(M) \subseteq {\cal TA}_P(M)$.
\hfill $\Box$

\bigskip \noindent 
{\em Proposition \ref{on-off}.} 
Let $M_0, \ldots, M_k$ be a general complete computation and  \\
$S(M_0), \ldots, S(M_k)$ be an on-line justification of the computation.
Then, for each atom $a$ in $M_k$, the  e-graph of $a$ in $S(M_k)$ is 
an off-line justification of $a$ w.r.t. $M_k$ and ${\cal TA}_P(M_k)$.

\medskip
\noindent
{\bf Proof.} This follows immediately from Lemma \ref{conserve},
the construction of the snapshots $S(M_i)$, 
the fact that $M_i \sqsubseteq M_k$ for every $k$, and $M_k$ is 
an answer set of $P$.
\hfill $\Box$

\end{document}